\newtheorem{thm}{Theorem}
\newtheorem{prop}{Proposition}
\newtheorem{lemma}{Lemma}
\newtheorem{claim}{Claim}
\begin{document}

\title{Detection of Cooperative Interactions in Logistic Regression Models}

\author{Easton~Li~Xu,~\IEEEmembership{Member,~IEEE,} Xiaoning~Qian,~\IEEEmembership{Member,~IEEE,} Tie~Liu,~\IEEEmembership{Senior Member,~IEEE,}
and~Shuguang~Cui,~\IEEEmembership{Fellow,~IEEE}\thanks{\mbox{The work of E.~L.~Xu, X.~Qian, T.~Liu, and S. Cui was supported} in part by grant NSFC-61328102/61629101, by DoD with grant HDTRA1-13-1-0029, and by NSF with grants CNS-1265227, ECCS-1305979, CNS-1343155, CCF-1447235, ECCS-1508051, AST-1547436, and IOS-1547557.}\thanks{E.~L.~Xu, X.~Qian, and T.~Liu are with \mbox{the Department of Electrical and} Computer Engineering, Texas A\&M University, College Station, TX 77843, USA (e-mails: xulimc@gmail.com, xqian@tamu.edu, \mbox{tieliu}@tamu.edu), and S.~Cui is with the Department of Electrical and Computer Engineering, University of California, Davis, CA 95616, USA (e-mail: \mbox{sgcui}@ucdavis.edu).}}

\maketitle

\begin{abstract}
An important problem in the field of bioinformatics is to identify interactive effects among profiled variables for outcome prediction. In this paper, a logistic regression model with pairwise interactions among a set of binary covariates is considered. Modeling the structure of the interactions by a graph, our goal is to recover the interaction graph from independently identically distributed (i.i.d.) samples of the covariates and the outcome.

When viewed as a feature selection problem, a simple quantity called influence is proposed as a measure of the marginal effects of the interaction terms on the outcome. For the case when the underlying interaction graph is known to be acyclic, it is shown that a simple algorithm that is based on a maximum-weight spanning tree with respect to the plug-in estimates of the influences not only has strong theoretical performance guarantees, but can also outperform generic feature selection algorithms for recovering the interaction graph from i.i.d. samples of the covariates and the outcome. Our results can also be extended to the model that includes both individual effects and pairwise interactions via the help of an auxiliary covariate.
\end{abstract}


%

\section{Introduction}
Consider a regression problem with $d$ independent covariates $\mathsf{x}_1,\ldots,\mathsf{x}_d$ and a binary outcome variable $\mathsf{y}$. The covariates are assumed to be uniformly distributed over $\{+1,-1\}$, and the conditional probabilities of the outcome given the covariates are assumed to be logistic:
\begin{align}
\mathrm{Pr}\left(\mathsf{y}=+1| \mathsf{x}_V=x_V\right) & =\sigma\Big(\sum_{\{i,j\}\in E}\beta_{\{i,j\}}x_ix_j\Big),\label{conditional1}\\
\mathrm{Pr}\left(\mathsf{y}=-1| \mathsf{x}_V=x_V\right) & =\sigma\Big(-\sum_{\{i,j\}\in E}\beta_{\{i,j\}}x_ix_j\Big)\label{conditional2}
\end{align}
for some real constants $\beta_{\{i,j\}}$'s, where $V:=\{1,\ldots,d\}$, $E:=\{\{i,j\}:i,j\in V, i \neq j\}$, $\mathsf{x}_V:=(\mathsf{x}_i:i\in V)$, and $\sigma(x):=e^{x}/(1+e^x)$ is the sigmoid function. It is straightforward to verify that $\sigma(x)+\sigma(-x)=1$ for any $x \in \mathbb{R}$, so we have $\mathrm{Pr}\left(\mathsf{y}=+1| \mathsf{x}_V=x_V\right)+\mathrm{Pr}\left(\mathsf{y}=-1| \mathsf{x}_V=x_V\right)=1$ for any $x_V\in \{+1,-1\}^d$.

For any two distinct $i,j\in V$, we say that the covariates $\mathsf{x}_i$ and $\mathsf{x}_j$ {\em interact} if and only if $\beta_{\{i,j\}} \neq 0$. Let $G=(V,I)$ be a simple graph with the vertex set $V$ and edge set $I:=\{\{i,j\} \in E: \beta_{\{i,j\}}\neq 0\}$. Then $G$ captures all pairwise interactions between the covariates in determining the odds of the outcome of interest. Our goal is to recover the graph $G$ from independently identically distributed (i.i.d.) samples of $(\mathsf{x}_V,\mathsf{y})$.

Our main motivation for considering the above pairwise interaction problem is from computational biology, where each covariate $\mathsf{x}_i$ represents the expression of a biomarker (a gene or an environmental factor), and the variable $\mathsf{y}$ represents the phenotypic outcome with respect to a specific phenotype. In computational biology, many complex diseases, such as cancer and diabetes, are conjectured to have complicated underlying disease mechanisms~\cite{mo2003,an2007,ed2010,ch2007,ho2011,adl2013,sa2015}. Multiple candidate risk factors, either genetic or environmental, and their interactions are known to play critical roles in triggering and determining the development of a large family of diseases~\cite{mo2003,an2007,ed2010,ch2007,ho2011,adl2013,sa2015}. Identifying interactive effects among profiled variables not only helps with more accurate identification of critical risk factors for outcome prediction, but also helps reveal functional interactions and understand aberrant system changes that are specifically related to the outcome for effective systems intervention. Our model, of course, is a simplified one from real-world situations, and is studied here since it captures some essential features of the problem (as we shall see shortly) while being relatively simple.

Note that if we let
\begin{align}
\mathsf{z}_{\{i,j\}}:=\mathsf{x}_i\mathsf{x}_j, \quad \forall \{i,j\}\in E
\end{align}
and consider $\mathsf{z}_{\{i,j\}}$'s (instead of $\mathsf{x}_i$'s) as the covariates, then the problem of recovering the graph $G$ can be viewed as a {\em feature selection} problem in statistics and machine learning. In \cite{sa2007}, a basic approach for feature selection is to first use Shannon's mutual information \cite{co2006} to measure the {\em marginal} effects of the covariates on the outcome, and then select the features based on the ranking of the mutual information. More advanced approaches such as the immensely popular mRMR method \cite{pe2005} make incremental selections while taking into account both the relevance to the outcome and the redundancy among the selected features. However, even though Shannon's mutual information provides a compact and model-free measure of correlation between the covariates and the outcome, which is well accepted in the statistics and computational biology communities, it is a complex function of the underlying joint distribution and hence difficult to analyze and estimate from limited data samples. As a result, when applied to specific regression models, the performance of the generic feature selection algorithms is usually difficult to characterize.

Motivated by the recent progress on learning Ising models over arbitrary graphs \cite{br2005}, in this paper we propose a quantity called {\em influence} as a measure of the marginal effects of $\mathsf{z}_{\{i,j\}}$'s on the outcome $\mathsf{y}$. Compared with Shannon's mutual information, influence is a simple function of the low-order joint probabilities between $\mathsf{x}_i$'s and the outcome $\mathsf{y}$, and hence is much easier to analyze and estimate. When the underlying graph $G$ is known to be {\em acyclic}, we show that, a simple algorithm that is based on a maximum-weight spanning tree with respect to the ``plug-in" estimate of the influences and followed by simple thresholding operations, not only has strong theoretical performance guarantees, but can also outperform the generic feature selection algorithms for recovering $G$ from i.i.d. samples of $(\mathsf{x}_V,\mathsf{y})$.

The rest of the paper is organized as follows. In Section~\ref{sec:Id}, we show that any acyclic $G$ can be identified from the influences of $\mathsf{z}_{\{i,j\}}$'s on the outcome $\mathsf{y}$. Building on the results from Section~\ref{sec:Id}, in Section~\ref{sec:Det} we show that any acyclic $G$ can be recovered with probability at least $1-\epsilon$ from $n$ i.i.d. samples of $(\mathsf{x}_V,\mathsf{y})$, where $n=\Theta\left(d\log(d^2/\epsilon)\right)$. In Section~\ref{sec:Ext}, we extend our results of the above sections to the model involving both individual effects and cooperative interactions. In Section~\ref{sec:Sim}, we use computer simulations to demonstrate that the proposed algorithm can outperform the generic feature selection algorithms. Finally, in Section~\ref{sec:Con} we conclude the paper with some remarks.

{\em Notation}. Random variables are written in serif font, and sets are written in capital letters.

\section{Identification of Cooperative Interactions from Low-Order Joint Probabilities}\label{sec:Id}
Our main result in this section is to show that any {\em acyclic} $G$ can be identified from the low-order joint probabilities $(p(\mathsf{x}_i,\mathsf{x}_j,\mathsf{y}):\{i,j\}\in E)$. Towards this goal, let $w$ be a weight assignment over $E$ given by:
\begin{align}
w_{\{i,j\}} :&= \left|\mathrm{Pr}\left(\mathsf{y}=+1|\mathsf{x}_i=+1,\mathsf{x}_j=+1\right)-\right.\nonumber\\
& \hspace{40pt} \left.\mathrm{Pr}\left(\mathsf{y}=-1|\mathsf{x}_i=+1,\mathsf{x}_j=+1\right)\right|\label{eq:W}\\
& = \left|2\mathrm{Pr}\left(\mathsf{y}=+1|\mathsf{x}_i=+1,\mathsf{x}_j=+1\right)-1\right|\label{eq:W2}\\
&=\left|8\mathrm{Pr}\left(\mathsf{x}_i=+1,\mathsf{x}_j=+1,\mathsf{y}=+1\right)-1\right|\label{eq:W3}
\end{align}
for any $\{i,j\}\in E$. Here, \eqref{eq:W2} follows from the fact that
\begin{align*}
\mathrm{Pr}&\left(\mathsf{y}=+1|\mathsf{x}_i=+1,\mathsf{x}_j=+1\right)+\\
&\mathrm{Pr}\left(\mathsf{y}=-1|\mathsf{x}_i=+1,\mathsf{x}_j=+1\right)=1
\end{align*}
and \eqref{eq:W3} is due to the fact that $\mathrm{Pr}(\mathsf{x}_i=+1,\mathsf{x}_j=+1)=1/4$.

The following proposition helps to clarify the meaning of the weight assignment $w$ as defined in \eqref{eq:W}.

\begin{prop}[Influence]
\label{prop:Inf}
Assume that $G=(V,I)$ is acyclic. We have
\begin{align}
w_{\{i,j\}} &= \left|\mathrm{Pr}\left(\mathsf{y}=+1|\mathsf{x}_i=+1,\mathsf{x}_j=+1\right)-\right.\nonumber\\
& \hspace{40pt} \left.\mathrm{Pr}\left(\mathsf{y}=+1|\mathsf{x}_i=+1,\mathsf{x}_j=-1\right)\right|\label{eq:Inf}
\end{align}
for any $\{i,j\}\in E$.
\end{prop}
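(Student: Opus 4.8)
The plan is to deduce the claimed identity~\eqref{eq:Inf} from the single equality
\[
\mathrm{Pr}\left(\mathsf{y}=+1|\mathsf{x}_i=+1,\mathsf{x}_j=-1\right)=\mathrm{Pr}\left(\mathsf{y}=-1|\mathsf{x}_i=+1,\mathsf{x}_j=+1\right),
\]
valid for every $\{i,j\}\in E$ when $G$ is acyclic; substituting it into the defining expression~\eqref{eq:W} turns the second term there into $\mathrm{Pr}(\mathsf{y}=+1|\mathsf{x}_i=+1,\mathsf{x}_j=-1)$, which is exactly~\eqref{eq:Inf}. So the whole content is to prove this equality.

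The tool is a sign-flip symmetry of the logistic model. Since $G=(V,I)$ is acyclic it is a forest, hence bipartite, so it admits a proper $2$-coloring $\xi\in\{+1,-1\}^V$, i.e.\ one with $\xi_k\xi_l=-1$ for every $\{k,l\}\in I$; recoloring the connected component that contains $i$ if necessary, we may take $\xi_i=+1$. Writing $S(x_V):=\sum_{\{k,l\}\in E}\beta_{\{k,l\}}x_kx_l$ so that $\mathrm{Pr}(\mathsf{y}=+1|\mathsf{x}_V=x_V)=\sigma\left(S(x_V)\right)$, and using that $\beta_{\{k,l\}}=0$ off $I$, the coordinatewise product $\xi\cdot x_V$ multiplies every surviving monomial $\beta_{\{k,l\}}x_kx_l$ by $\xi_k\xi_l=-1$; hence $S(\xi\cdot x_V)=-S(x_V)$, and therefore $\mathrm{Pr}(\mathsf{y}=+1|\mathsf{x}_V=\xi\cdot x_V)=\sigma\left(-S(x_V)\right)=1-\sigma\left(S(x_V)\right)=\mathrm{Pr}(\mathsf{y}=-1|\mathsf{x}_V=x_V)$. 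Because $x_V\mapsto\xi\cdot x_V$ is a distribution-preserving involution of $\{+1,-1\}^d$, averaging this pointwise identity over a coordinate slice of the cube transfers it to the conditional probabilities we care about.

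I would then split on $\xi_j$. If $\xi_j=-1$, the involution carries the slice $\{x_V:x_i=+1,\,x_j=+1\}$ bijectively onto $\{x_V:x_i=+1,\,x_j=-1\}$, so averaging $\mathrm{Pr}(\mathsf{y}=+1|\mathsf{x}_V=\xi\cdot x_V)=\mathrm{Pr}(\mathsf{y}=-1|\mathsf{x}_V=x_V)$ over the first slice yields $\mathrm{Pr}(\mathsf{y}=+1|\mathsf{x}_i=+1,\mathsf{x}_j=-1)=\mathrm{Pr}(\mathsf{y}=-1|\mathsf{x}_i=+1,\mathsf{x}_j=+1)$, the equality we want. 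If instead $\xi_j=+1$, the involution fixes each of the two slices $\{x_i=+1,\,x_j=\pm1\}$ setwise, so on each of them the multiset $\{\mathrm{Pr}(\mathsf{y}=+1|\mathsf{x}_V=x_V)\}$ coincides with the multiset $\{\mathrm{Pr}(\mathsf{y}=-1|\mathsf{x}_V=x_V)\}$; averaging forces $\mathrm{Pr}(\mathsf{y}=+1|\mathsf{x}_i=+1,\mathsf{x}_j=+1)=\mathrm{Pr}(\mathsf{y}=+1|\mathsf{x}_i=+1,\mathsf{x}_j=-1)=1/2$, and the desired equality again holds, both sides being $1/2$. In either case, plugging into~\eqref{eq:W} gives~\eqref{eq:Inf}.

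I expect the only delicate point to be the case $\xi_j=+1$, which is unavoidable exactly when $i$ and $j$ lie at even distance in the same tree, so that every proper $2$-coloring assigns them the same color. There one cannot negate $S$ by a sign flip that also flips $\mathsf{x}_j$ while fixing $\mathsf{x}_i$ (any $S$-negating sign flip must be a proper $2$-coloring, under which $\xi_i$ and $\xi_j$ then agree), and the right move is instead to recognize that the symmetry pins all four probabilities $\mathrm{Pr}(\mathsf{y}=\pm1|\mathsf{x}_i=+1,\mathsf{x}_j=\pm1)$ to $1/2$. It is also worth noting that acyclicity enters only through the existence of the proper $2$-coloring, and the proposition genuinely fails without it (e.g.\ on a triangle).
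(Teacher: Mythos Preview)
Your proof is correct. Both your argument and the paper's rest on the same underlying symmetry---the bipartite sign-flip of a forest, which negates $S(x_V)$ and hence swaps $\mathrm{Pr}(\mathsf{y}=+1|\cdot)$ with $\mathrm{Pr}(\mathsf{y}=-1|\cdot)$---and both reduce the proposition to the single identity $\mathrm{Pr}(\mathsf{y}=+1|\mathsf{x}_i=+1,\mathsf{x}_j=-1)=\mathrm{Pr}(\mathsf{y}=-1|\mathsf{x}_i=+1,\mathsf{x}_j=+1)$. The organization differs: the paper first isolates a one-variable lemma, $\mathrm{Pr}(\mathsf{x}_i=+1,\mathsf{y}=+1)=1/4$ (proved by the same sign-flip, conditioning only on $\mathsf{x}_i$), and then obtains the two-variable identity by the arithmetic $\mathrm{Pr}(\mathsf{x}_i=+1,\mathsf{x}_j=-1,\mathsf{y}=+1)=1/4-\mathrm{Pr}(\mathsf{x}_i=+1,\mathsf{x}_j=+1,\mathsf{y}=+1)=\mathrm{Pr}(\mathsf{x}_i=+1,\mathsf{x}_j=+1,\mathsf{y}=-1)$. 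You instead apply the involution directly at the two-variable level and branch on $\xi_j$. Your route is slightly more streamlined for this proposition and makes the geometric picture (the involution permuting coordinate slices) explicit; the paper's route has the advantage of isolating the marginal fact $\mathrm{Pr}(\mathsf{y}=+1|\mathsf{x}_i=+1)=1/2$ as a standalone lemma, which is of independent interest and avoids the case split altogether. Your closing remark that acyclicity is used only through bipartiteness is accurate and matches how the paper uses it.
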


\begin{proof}
See Section~\ref{pf:Inf}.
\end{proof}

By \eqref{eq:Inf}, $w_{\{i,j\}}$ indicates whether the product $\mathsf{z}_{\{i,j\}}=\mathsf{x}_i\mathsf{x}_j$ has any {\em influence} on the event $\mathsf{y}=+1$ and hence can be a useful indication on whether $\{i,j\} \in I$. This intuition is partially justified by the following proposition.

\begin{prop}[Direct influence]
\label{prop:DI}
Assume that $G=(V,I)$ is acyclic. We have $w_{\{i,j\}}>0$ for any $\{i,j\} \in I$.
\end{prop}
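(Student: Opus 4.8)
The plan is to compute $w_{\{i,j\}}$ for an edge $\{i,j\}\in I$ using the representation in Proposition~\ref{prop:Inf}, namely $w_{\{i,j\}} = |\,\mathrm{Pr}(\mathsf{y}=+1|\mathsf{x}_i=+1,\mathsf{x}_j=+1) - \mathrm{Pr}(\mathsf{y}=+1|\mathsf{x}_i=+1,\mathsf{x}_j=-1)\,|$, and to show this difference is strictly positive. The key is to average out the remaining covariates $\mathsf{x}_{V\setminus\{i,j\}}$, which are uniform and independent of each other. Writing $S_{ij}(x_V):=\sum_{\{k,l\}\in I}\beta_{\{k,l\}}x_kx_l$, we get
\begin{align}
\mathrm{Pr}(\mathsf{y}=+1|\mathsf{x}_i=x_i,\mathsf{x}_j=x_j) = \mathbb{E}\Big[\sigma\big(S_{ij}(\mathsf{x}_V)\big)\,\Big|\,\mathsf{x}_i=x_i,\mathsf{x}_j=x_j\Big],
\end{align}
where the expectation is over the uniform choice of the other covariates. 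First I would split $S_{ij}$ into the term $\beta_{\{i,j\}}x_ix_j$, the terms involving exactly one of $i,j$, and the terms involving neither; because $G$ is acyclic, the subgraph structure will let me organize this cleanly (in particular, the edges incident to $i$ and to $j$ form disjoint subtrees apart from the edge $\{i,j\}$ itself).

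The main step is a coupling/pairing argument. Fix $x_i=+1$ and consider flipping $x_j$ from $+1$ to $-1$. I would exhibit a sign-flipping involution on the configurations of the remaining covariates — flip exactly the covariates in the connected component of $j$ in $G-\{i,j\}$ — that maps the $x_j=+1$ ensemble onto the $x_j=-1$ ensemble while changing the argument of $\sigma$ in a controlled way. Concretely, such a flip negates $\beta_{\{i,j\}}x_ix_j$ and negates every edge term with exactly one endpoint in $j$'s component while leaving all other terms fixed, so it effectively replaces $S_{ij}$ by $S_{ij}$ with the $j$-component "reflected." Since $\sigma$ is strictly increasing and $\sigma(x)+\sigma(-x)=1$, comparing $\mathbb{E}[\sigma(\cdot)]$ before and after the flip reduces to showing that a certain convex combination of sigmoid values does not collapse to zero, which holds precisely because $\beta_{\{i,j\}}\neq 0$ — the $\{i,j\}$ term shifts every paired argument by the nonzero amount $2\beta_{\{i,j\}}$ (after the other $j$-incident terms are matched up by the involution), and strict monotonicity of $\sigma$ forbids cancellation.

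I expect the main obstacle to be the bookkeeping that makes the pairing argument airtight: one must verify that after the involution the contributions of all edges \emph{not} touching $j$'s component cancel in the difference, and that the surviving contribution is a sum of strictly-signed terms of the form $\sigma(a+\beta_{\{i,j\}}) - \sigma(a-\beta_{\{i,j\}})$ with a common sign, so no cancellation across configurations occurs. The acyclicity of $G$ is what guarantees the component of $j$ in $G-\{i,j\}$ interacts with the rest of the graph only through vertex $j$, so that flipping it does not disturb any edge term outside; this is the structural fact I would lean on most heavily. Once the difference is written as such a sum, strict monotonicity of $\sigma$ gives $w_{\{i,j\}}>0$ immediately.
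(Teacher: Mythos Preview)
Your proposal is correct and is essentially the paper's argument: both reduce the quantity inside the absolute value to the sum $2^{-(d-2)}\sum_{x_V:(x_i,x_j)=(+1,+1)}\big[\sigma(\beta_{\{i,j\}}+R(x_V))-\sigma(-\beta_{\{i,j\}}+R(x_V))\big]$ via a sign-flipping involution on the remaining covariates, and then invoke strict monotonicity of $\sigma$. The only cosmetic difference is that you start from Proposition~\ref{prop:Inf} and flip the component $C_j$ of $j$ in $G-\{i,j\}$, whereas the paper works from \eqref{eq:W} and flips one side of a bipartition of $G-\{i,j\}$ (its Lemma~\ref{lemma3}); both involutions have the same effect because, by acyclicity, $\{i,j\}$ is the \emph{only} edge of $I$ with exactly one endpoint in $C_j$, so your worry about ``other $j$-incident terms'' is in fact vacuous.
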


\begin{proof}
See Section~\ref{pf:DI}.
\end{proof}

We say that the product $\mathsf{z}_{\{i,j\}}=\mathsf{x}_i\mathsf{x}_j$ has a {\em direct influence} on the outcome $\mathsf{y}$ if $\{i,j\}\in I$. The above proposition guarantees that direct influences are strictly positive when $G$ is acyclic. The following proposition provides a {\em partial} converse to Proposition~\ref{prop:DI}.

\begin{prop}[Zero influence]
\label{prop:ZI}
Assume that $G=(V,I)$ is acyclic. Then for any two distinct $i,j\in V$, we have $w_{\{i,j\}}=0$ if $i$ and $j$ are disconnected in $G$, or the unique path between $i$ and $j$ in $G$ has an even length.
\end{prop}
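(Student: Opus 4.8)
The plan is to exploit the oddness of $t\mapsto 2\sigma(t)-1$ together with a sign‑flipping symmetry of the covariate distribution that is available precisely because $G$ is a forest. Using the identity $2\sigma(t)-1=\tanh(t/2)$ and the model \eqref{conditional1}, I would first rewrite the quantity in \eqref{eq:W2} as $\mathrm{E}\big[\tanh\big(\tfrac12\sum_{\{k,l\}\in I}\beta_{\{k,l\}}\mathsf{x}_k\mathsf{x}_l\big)\,\big|\,\mathsf{x}_i=\mathsf{x}_j=+1\big]$, where the sum ranges only over $I$ since $\beta_{\{k,l\}}=0$ off $I$, and the expectation is over the remaining covariates $(\mathsf{x}_k:k\in V\setminus\{i,j\})$, which are i.i.d.\ uniform on $\{+1,-1\}$ and independent of $\mathsf{x}_i,\mathsf{x}_j$.

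The key step is to construct a set $T\subseteq V\setminus\{i,j\}$ of covariates such that negating $\mathsf{x}_k$ for every $k\in T$ flips the sign of the argument of $\tanh$. Under such a sign flip the term indexed by $\{k,l\}$ gets multiplied by $(-1)^{|\{k,l\}\cap T|}$, so what is needed is that every edge of $I$ has exactly one endpoint in $T$; equivalently, $(T,V\setminus T)$ is a proper $2$-coloring of $G$ (isolated vertices may be placed on either side). Since $G$ is acyclic it is a forest, hence bipartite, and each connected component has exactly two proper $2$-colorings, differing by swapping the two colors; moreover, along any path the colors alternate, so two vertices in the same component receive the same color if and only if the unique path joining them has even length. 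Hence: if $i$ and $j$ are disconnected, $2$-color the component of $i$ so that $i\notin T$, independently $2$-color the component of $j$ so that $j\notin T$, and $2$-color the other components arbitrarily; if $i$ and $j$ share a component and the unique $i$--$j$ path has even length, they receive the same color, so pick the coloring of that component that puts both outside $T$. Either way we obtain such a $T$ with $i,j\notin T$.

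To finish, note that because the covariates are i.i.d.\ uniform on $\{+1,-1\}$ and $i,j\notin T$, the conditional law of $(\mathsf{x}_k:k\in V\setminus\{i,j\})$ given $\mathsf{x}_i=\mathsf{x}_j=+1$ is invariant under the map that flips the signs of $(\mathsf{x}_k:k\in T)$. Applying this change of variables inside the conditional expectation and using that this flip negates the argument of $\tanh$ while $\tanh$ is odd, the expectation equals minus itself and is therefore $0$; by \eqref{eq:W2}, $w_{\{i,j\}}=0$.

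I expect the only genuine obstacle to be the coloring step — verifying that path‑length parity in a forest matches whether two vertices share a color, and treating isolated vertices and the disconnected case uniformly. This is also exactly where acyclicity enters: it makes ``the unique path'' well defined and guarantees that a proper $2$-coloring of $G$ exists.
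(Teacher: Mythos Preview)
Your proposal is correct and is essentially the same argument as the paper's: both exploit that an acyclic $G$ is bipartite and that, under the hypothesis, $i$ and $j$ can be placed in the same color class, then flip the signs of all covariates in the other class to turn the relevant quantity into its own negative. The only cosmetic difference is that you phrase the computation via $2\sigma(t)-1=\tanh(t/2)$ and a conditional expectation, whereas the paper works directly with $\sigma$ and the identity $\sigma(-t)=1-\sigma(t)$ in a sum over $x_V$; your set $T$ is exactly the paper's $V_2$.
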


\begin{proof}
See Section~\ref{pf:ZI}.
\end{proof}

\begin{thm}[Union of stars]
Assume that each connected component of $G=(V,I)$ is a star. Then for any two distinct $i,j \in V$, we have $\{i,j\} \in I$ if and only if $w_{\{i,j\}}>0$.
\end{thm}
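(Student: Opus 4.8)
The plan is to derive this theorem directly from Propositions~\ref{prop:DI} and~\ref{prop:ZI}. The first thing to record is that a graph each of whose connected components is a star is in particular acyclic, since every star is a tree; hence both propositions apply to $G$.

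The ``only if'' direction is then immediate: if $\{i,j\}\in I$, then $\{i,j\}$ carries a direct influence, and Proposition~\ref{prop:DI} gives $w_{\{i,j\}}>0$.

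For the ``if'' direction I would prove the contrapositive, namely that $\{i,j\}\notin I$ implies $w_{\{i,j\}}=0$, via a short case split on where $i$ and $j$ sit in $G$. If $i$ and $j$ belong to different connected components, then they are disconnected in $G$, and Proposition~\ref{prop:ZI} gives $w_{\{i,j\}}=0$. If instead $i$ and $j$ belong to the same connected component---a star with some center $c$---then $\{i,j\}\notin I$ rules out either of them being $c$, since the center is adjacent to every leaf; hence $i$ and $j$ are both leaves, the unique path between them is $i$--$c$--$j$, which has length $2$, an even number, and Proposition~\ref{prop:ZI} again gives $w_{\{i,j\}}=0$. Combining the two directions yields the claimed equivalence.

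There is no substantial difficulty here---the theorem is essentially a repackaging of the earlier propositions---so the main (and rather minor) point requiring care is the elementary structural observation that in a star any two non-adjacent vertices lie at distance exactly $2$, hence at even distance; the degenerate stars (an isolated vertex, or a single edge) are then handled without incident by the analysis above.
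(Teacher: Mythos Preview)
Your proposal is correct and follows essentially the same approach as the paper's proof: both reduce the theorem to Propositions~\ref{prop:DI} and~\ref{prop:ZI}, together with the structural observation that in a union of stars any two non-adjacent vertices in the same component are leaves at distance exactly two. Your write-up is slightly more detailed (spelling out the contrapositive and the degenerate cases), but the argument is the same.
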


\begin{proof}
This follows immediately from Propositions~\ref{prop:DI} and \ref{prop:ZI} and the fact that if each connected component of $G$ is a star (which implies that $G$ is acyclic), then any two distinct $i,j\in V$ such that $\{i,j\}\not\in I$ must be either disconnected (if they belong to two different connected components) or connected by a unique path of length two (if they belong to the same connected component) in $G$.
\end{proof}

The following example, however, shows that the converse of Proposition~\ref{prop:DI} is {\em not} true in general. Consider $d=4$, $I=\{\{1,2\},\{2,3\},\{3,4\}\}$, and $\beta_{\{1,2\}}=\beta_{\{2,3\}}=\beta_{\{3,4\}}=1$. Note that the graph $G=(V,I)$ here is acyclic and the unique path between $1$ and $4$ is of length three. It is straightforward to calculate that
\begin{align*}
w_{\{1,4\}}&=\frac{(e-1)^3}{2(e^3+1)}>0,
\end{align*}
even though $\{1,4\} \not\in I$.

For any $\{i,j\}\in E\setminus I$, we say that the product $\mathsf{z}_{\{i,j\}}=\mathsf{x}_i\mathsf{x}_j$ has an {\em indirect influence} on the outcome $\mathsf{y}$ if $w_{\{i,j\}}>0$. Due to the possible existence of indirect influences, unlike the unions of stars, a general acyclic $G$ cannot be recovered via edge-by-edge identifications.

The following proposition, however, shows that indirect influences are {\em locally} dominated by direct influences.

\begin{prop}[Indirect influence]
\label{prop:II}
Assume that $G=(V,I)$ is acyclic, and let $\left\{\{i_1,i_2\},\{i_2,i_3\},\ldots,\{i_m,i_{m+1}\}\right\}$ be a path of length $m \geq 2$ in $G$. Then, we have $w_{\{i_1,i_{m+1}\}}<w_{\{i_s,i_{s+1}\}}$ for any $s\in\{1,\ldots,m\}$.
\end{prop}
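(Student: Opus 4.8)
The plan is to reduce both $w_{\{i_1,i_{m+1}\}}$ and $w_{\{i_s,i_{s+1}\}}$ to expectations of one and the same function, and then compare them by a \emph{strict} triangle inequality. Fix $s\in\{1,\dots,m\}$, write $f_t:=\{i_t,i_{t+1}\}$ for $t=1,\dots,m$, and put $\theta_e:=\beta_e/2$ for $e\in I$. Since $2\sigma(x)-1=\tanh(x/2)$ and $\theta_e=0$ for $e\notin I$, the conditional mean $\mathbb{E}[\mathsf{y}\mid\mathsf{x}_V]=\tanh\!\big(\sum_{e\in I}\theta_e\mathsf{z}_e\big)$ (recall $\mathsf{z}_{\{i,j\}}=\mathsf{x}_i\mathsf{x}_j$) depends on $\mathsf{x}_V$ only through $(\mathsf{z}_e)_{e\in I}$. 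Because $G$ is acyclic, $I$ is a forest, and I would first record the standard change of variables (gauge transformation on the forest): (i) $(\mathsf{z}_e)_{e\in I}$ are i.i.d.\ uniform on $\{+1,-1\}$; (ii) for any two vertices $u,v$ in the same component, $\mathsf{x}_u\mathsf{x}_v=\prod_{e\in E_{uv}}\mathsf{z}_e$, where $E_{uv}$ is the edge set of the unique $u$–$v$ path in $G$ (so $\mathsf{x}_{i_s}\mathsf{x}_{i_{s+1}}=\mathsf{z}_{f_s}$ and $\mathsf{x}_{i_1}\mathsf{x}_{i_{m+1}}=\prod_{t=1}^m\mathsf{z}_{f_t}$); and (iii) the root variable of each component is uniform and independent of $(\mathsf{z}_e)_{e\in I}$, whence every $\mathsf{x}_u$ is independent of $(\mathsf{z}_e)_{e\in I}$.

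Next I would establish that, for distinct $u,v$ in the component containing the path and any $c\in\{+1,-1\}$,
\[
\mathbb{E}[\mathsf{y}\mid \mathsf{x}_u=+1,\mathsf{x}_v=c]=\mathbb{E}\Big[\tanh\!\big(\textstyle\sum_{e\in I}\theta_e\mathsf{z}_e\big)\ \Big|\ \textstyle\prod_{e\in E_{uv}}\mathsf{z}_e=c\Big].
\]
Indeed, by (ii) the event $\{\mathsf{x}_u=+1,\mathsf{x}_v=c\}$ equals $\{\mathsf{x}_u=+1\}\cap\{\prod_{e\in E_{uv}}\mathsf{z}_e=c\}$, and the condition $\mathsf{x}_u=+1$ can be dropped because $\mathsf{x}_u$ is independent of $(\mathsf{z}_e)_{e\in I}$, on which $\mathbb{E}[\mathsf{y}\mid\mathsf{x}_V]$ alone depends. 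Applying this with $(u,v)=(i_s,i_{s+1})$ (so $E_{uv}=\{f_s\}$) and with $(u,v)=(i_1,i_{m+1})$ (so $E_{uv}=\{f_1,\dots,f_m\}$), and combining with Proposition~\ref{prop:Inf}, the relation $\mathrm{Pr}(\mathsf{y}=+1\mid\cdot)=(1+\mathbb{E}[\mathsf{y}\mid\cdot])/2$, and the elementary identity $\mathbb{E}[h\mid\prod_{e\in S}\mathsf{z}_e=+1]-\mathbb{E}[h\mid\prod_{e\in S}\mathsf{z}_e=-1]=2\,\mathbb{E}[h\prod_{e\in S}\mathsf{z}_e]$ for nonempty $S\subseteq I$ (valid since $\prod_{e\in S}\mathsf{z}_e$ is then uniform on $\{\pm1\}$), I would obtain
\[
w_{\{i_s,i_{s+1}\}}=\big|\mathbb{E}[\mathsf{z}_{f_s}\tanh(\textstyle\sum_{e\in I}\theta_e\mathsf{z}_e)]\big|,\qquad
w_{\{i_1,i_{m+1}\}}=\big|\mathbb{E}[(\textstyle\prod_{t=1}^m\mathsf{z}_{f_t})\tanh(\textstyle\sum_{e\in I}\theta_e\mathsf{z}_e)]\big|.
\]
Setting $\mathsf{U}:=\sum_{t=1}^m\theta_{f_t}\mathsf{z}_{f_t}$ and averaging out the off-path spins (i.i.d.\ uniform, independent of the path spins) replaces $\tanh(\sum_{e\in I}\theta_e\mathsf{z}_e)$ by $\Phi(\mathsf{U})$, where $\Phi(x):=\mathbb{E}\big[\tanh\!\big(x+\sum_{e\in I\setminus\{f_1,\dots,f_m\}}\theta_e\mathsf{z}_e\big)\big]$; this $\Phi$ is \emph{strictly increasing} and — crucially — is the same function in both expressions.

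Finally I would peel off $\mathsf{z}_{f_s}$, which occurs in both. Write $\mathsf{U}=\theta_{f_s}\mathsf{z}_{f_s}+R$ with $R:=\sum_{t\ne s}\theta_{f_t}\mathsf{z}_{f_t}$, $\mathsf{q}:=\prod_{t\ne s}\mathsf{z}_{f_t}$ (so $\prod_{t=1}^m\mathsf{z}_{f_t}=\mathsf{z}_{f_s}\mathsf{q}$), and $g(x):=\tfrac12\big(\Phi(x+\theta_{f_s})-\Phi(x-\theta_{f_s})\big)=\mathbb{E}_{\mathsf{z}_{f_s}}[\mathsf{z}_{f_s}\Phi(\theta_{f_s}\mathsf{z}_{f_s}+x)]$. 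Averaging over $\mathsf{z}_{f_s}$ first (it is independent of $R$ and $\mathsf{q}$) gives $w_{\{i_s,i_{s+1}\}}=|\mathbb{E}[g(R)]|$ and $w_{\{i_1,i_{m+1}\}}=|\mathbb{E}[\mathsf{q}\,g(R)]|$. Now, $\Phi$ being strictly increasing and $\theta_{f_s}=\beta_{f_s}/2\ne0$, the function $g$ has the same nonzero sign as $\theta_{f_s}$ everywhere, so $|\mathbb{E}[g(R)]|=\mathbb{E}[|g(R)|]$; and since $|\mathsf{q}|\equiv1$,
\[
w_{\{i_1,i_{m+1}\}}=\big|\mathbb{E}[\mathsf{q}\,g(R)]\big|\ \le\ \mathbb{E}\big[|\mathsf{q}|\,|g(R)|\big]=\mathbb{E}\big[|g(R)|\big]=w_{\{i_s,i_{s+1}\}}.
\]
Equality in this triangle inequality would force $\mathsf{q}\,g(R)$ to be a.s.\ of one sign, hence (as $g(R)$ already is) $\mathsf{q}$ to be a.s.\ constant; but $\mathsf{q}$ is a product of $m-1\ge1$ independent uniform signs, hence itself uniform on $\{+1,-1\}$ — a contradiction. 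Thus the inequality is strict, as claimed.

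The bulk of the work, and the step to handle carefully, will be this reduction: checking that conditioning on the two endpoint covariates constrains only the product of the edge spins along their connecting path, leaving every other spin i.i.d.\ uniform. That is exactly what makes the \emph{same} $\Phi$ appear in both formulas — without it the comparison fails — and the final strictness then rests on the innocuous-looking fact that $m\ge2$ forces $\mathsf{q}$ to be nonconstant.
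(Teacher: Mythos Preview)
Your proof is correct and takes a genuinely different route from the paper. The paper first disposes of even $m$ via Propositions~\ref{prop:DI} and \ref{prop:ZI}, then for odd $m$ performs a case analysis on whether $s$ is an endpoint ($s=1$ or $s=m$) or an interior index, invoking two combinatorial lemmas (Lemmas~\ref{lemma4} and \ref{lemma5}) that express the relevant probability differences as sums of the sign-definite quantities $\zeta_{\{i_s,i_{s+1}\}}(x_V)$ over appropriate slices of $\{+1,-1\}^d$; strictness then comes from $\zeta$ being strictly positive or strictly negative. Your argument instead passes once and for all to the forest gauge $(\mathsf{z}_e)_{e\in I}$, reduces both weights to expectations of the \emph{same} strictly monotone averaged function $\Phi$, peels off the common factor $\mathsf{z}_{f_s}$ to obtain $w_{\{i_s,i_{s+1}\}}=\mathbb{E}|g(R)|$ and $w_{\{i_1,i_{m+1}\}}=|\mathbb{E}[\mathsf{q}\,g(R)]|$, and finishes with a one-line strict triangle inequality driven by the nonconstancy of $\mathsf{q}$ when $m\ge 2$.

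What each buys: your approach is shorter and handles all $m\ge 2$ and all positions $s$ uniformly, with no parity or endpoint cases; it also makes transparent why conditioning on the endpoints constrains only the product of path spins. The paper's explicit decomposition, by contrast, is tailored so that the very same computations extend with almost no extra work to the quantitative gap $w_{\{i_s,i_{s+1}\}}-w_{\{i_1,i_{m+1}\}}\ge\gamma$ of Proposition~\ref{prop:PC2} (needed for the sample-complexity Theorem~\ref{thm:Main}); your argument would need an additional step---e.g.\ lower-bounding $\mathbb{E}[|g(R)|\,\mathbf{1}(\mathsf{q}=-\operatorname{sgn}\mathbb{E}[\mathsf{q}g(R)])]$---to recover that quantitative version.
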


\begin{proof}
Note that when $m$ is even, by Propositions~\ref{prop:DI} and \ref{prop:ZI} we have $w_{\{i_1,i_{m+1}\}}=0<w_{\{i_s,i_{s+1}\}}$ for any $s\in\{1,\ldots,m\}$. Therefore, we only need to consider the cases where $m$ is odd, for which the proof can be found in Appendix~\ref{pf:II}.
\end{proof}

Let $D:=\{\{i,j\}\in E: \mbox{$i$ and $j$ are disconnected in $G$}\}$. A weight assignment $u$ over $E$ is said to have {\em strict separation} between $I$ and $D$ if there exists a real constant $\eta\geq0$ such that $u_{\{i,j\}} > \eta \geq u_{\{k,l\}}$ for any $\{i,j\}\in I$ and $\{k,l\}\in D$. The consequence of strict separation and local dominance is summarized in the following proposition.

\begin{prop}\label{prop:Cond}
Assume that $G=(V,I)$ is acyclic, let $u$ be a weight assignment over $E$ satisfying: 1) (strict separation) there exists a real constant $\eta\geq0$ such that $u_{\{i,j\}} > \eta \geq u_{\{k,l\}}$ for any $\{i,j\}\in I$ and $\{k,l\}\in D$; and 2) (local dominance) $u_{\{i_1,i_{m+1}\}}<u_{\{i_s,i_{s+1}\}}$ for any path $\left\{\{i_1,i_2\},\{i_2,i_3\},\ldots,\{i_m,i_{m+1}\}\right\}$ of length $m \geq 2$ in $G$ and any $s\in\{1,\ldots,m\}$. Then for any maximum-weight spanning tree $G'=(V,T)$ with respect to the weight assignment $u$, we have $I=T\cap W$ where $W:=\{\{i,j\}\in E: u_{\{i,j\}} > \eta\}$.
\end{prop}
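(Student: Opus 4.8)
The plan is to prove the two inclusions $I\subseteq T\cap W$ and $T\cap W\subseteq I$ separately, using throughout that an acyclic $G$ is a forest, so every edge of $I$ is a bridge of $G$, and any two vertices lying in a common connected component of $G$ are joined by a unique path. Two consequences of strict separation will be used repeatedly: every $\{i,j\}\in I$ satisfies $u_{\{i,j\}}>\eta$, so $I\subseteq W$; and every $\{k,l\}\in D$ satisfies $u_{\{k,l\}}\le\eta$, so $W\cap D=\emptyset$, i.e.\ the two endpoints of any edge of $W$ lie in the same component of $G$.

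First I would show $I\subseteq T$; together with $I\subseteq W$ this gives $I\subseteq T\cap W$. Fix $e=\{i,j\}\in I$. Since $G$ is a forest, deleting $e$ splits the component of $G$ containing $i$ and $j$ into a part $A\ni i$ and a part $B\ni j$, and $e$ is the \emph{only} edge of $G$ joining $A$ to $B$. Consider the cut $(A,V\setminus A)$ of the complete graph on $V$. I claim $e$ is the unique maximum-weight edge crossing this cut. Indeed, any other crossing edge $\{a,b\}$, say with $a\in A$ and $b\notin A$, is of one of two kinds: if $a$ and $b$ lie in different components of $G$ then $\{a,b\}\in D$ and $u_{\{a,b\}}\le\eta<u_e$; otherwise $b\in B$, the unique $G$-path from $a$ to $b$ must use the edge $e$, and since $\{a,b\}\ne e$ this path has length at least $2$, so local dominance applied to it (with $e$ as one of its edges) gives $u_{\{a,b\}}<u_e$. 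A standard cut-exchange argument then finishes: if some maximum-weight spanning tree omitted $e$, adjoining $e$ would create a cycle, that cycle would cross the cut $(A,V\setminus A)$ an even number of times and hence contain a second crossing edge $f\ne e$, and replacing $f$ by $e$ would strictly increase the total weight, a contradiction.

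Next I would show $T\cap W\subseteq I$. Take $f=\{i,j\}\in T\cap W$ and suppose $f\notin I$. Since $f\in W$, its endpoints lie in the same component of $G$, so there is a unique $G$-path $Q$ from $i$ to $j$; because $f\notin I$ this path has length at least $2$, and all its edges belong to $I$. Deleting $f$ from the tree $T$ partitions $V$ into the part containing $i$ and the part containing $j$, and $f$ is the only edge of $T$ across this cut; since $Q$ joins the two sides, it contains an edge $g$ crossing the cut, and $g\in I$ whereas $f\notin I$ forces $g\ne f$, hence $g\notin T$. Local dominance applied to $Q$ gives $u_g>u_f$, so $(T\setminus\{f\})\cup\{g\}$ is a spanning tree of strictly larger total weight than $T$, contradicting the maximality of $T$. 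Hence $f\in I$, and combining the two inclusions yields $I=T\cap W$.

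The step I expect to be the crux is the first inclusion, and within it the choice of the cut $(A,V\setminus A)$ together with the verification that $e$ is the \emph{strictly} heaviest edge crossing it. This is the only place where both hypotheses on $u$ are invoked at once, and the strictness is essential: without it, the cut-exchange argument would place $e$ only in \emph{some} maximum-weight spanning tree rather than in \emph{every} one.
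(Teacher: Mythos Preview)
Your proof is correct. The first inclusion $I\subseteq T$ is essentially the paper's argument: both pick $e=\{i,j\}\in I$, use the partition $(A,V\setminus A)$ obtained by deleting $e$ from $G$, and show that any competing edge across this partition has strictly smaller weight (either because it lies in $D$, or because the $G$-path between its endpoints passes through $e$ and local dominance applies). You phrase this as a cut property, the paper phrases it as a cycle exchange in $T$, but the content is identical.

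Where you genuinely diverge is the reverse inclusion. The paper does not argue $T\cap W\subseteq I$ directly; instead it proves $|T\cap W|\le |I|$ by a counting argument: any spanning tree must contain at least $\omega(G)-1$ edges joining distinct components of $G$, all of which lie in $D\subseteq E\setminus W$, so $|T\cap W|\le (d-1)-(\omega(G)-1)=|I|$. Combined with $I\subseteq T\cap W$, equality follows. Your approach instead runs a second exchange argument: for $f\in (T\cap W)\setminus I$, the $G$-path between its endpoints furnishes a strictly heavier edge $g\in I\setminus T$ across the $T$-cut of $f$. Both are short; the paper's count avoids invoking local dominance a second time but uses the forest identity $|I|=d-\omega(G)$, while your version is more symmetric and purely structural.
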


\begin{proof}
See Section~\ref{pf:Cond}.
\end{proof}

The following theorem is the main result of this section.

\begin{thm}[Acyclic graphs]\label{thm:AG}
Assume that $G=(V,I)$ is acyclic, and let $G'=(V,T)$ be a maximum-weight spanning tree with respect to the weight assignment $w$ as defined in \eqref{eq:W}. Then, for any two distinct $i,j\in V$ we have $\{i,j\}\in I$ if and only if $\{i,j\}\in T$ and $w_{i,j}>0$.
\end{thm}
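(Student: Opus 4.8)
The plan is to deduce Theorem~\ref{thm:AG} directly from Proposition~\ref{prop:Cond} by verifying that the specific weight assignment $w$ of \eqref{eq:W} satisfies the two hypotheses of that proposition, namely strict separation (with threshold $\eta=0$) and local dominance. Once both conditions are checked, Proposition~\ref{prop:Cond} immediately gives $I = T\cap W$ with $W=\{\{i,j\}\in E: w_{\{i,j\}}>0\}$, which is exactly the statement that $\{i,j\}\in I$ if and only if $\{i,j\}\in T$ and $w_{\{i,j\}}>0$.

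First I would take $\eta=0$ and verify strict separation between $I$ and $D$. For any $\{i,j\}\in I$, Proposition~\ref{prop:DI} gives $w_{\{i,j\}}>0=\eta$ (this is where acyclicity of $G$ is used). For any $\{k,l\}\in D$, i.e.\ $k$ and $l$ lie in different connected components of $G$, Proposition~\ref{prop:ZI} gives $w_{\{k,l\}}=0=\eta$, so indeed $w_{\{i,j\}}>\eta\geq w_{\{k,l\}}$. Next I would verify local dominance: for any path $\{\{i_1,i_2\},\ldots,\{i_m,i_{m+1}\}\}$ of length $m\geq 2$ in $G$ and any $s\in\{1,\ldots,m\}$, we need $w_{\{i_1,i_{m+1}\}}<w_{\{i_s,i_{s+1}\}}$; but this is precisely the conclusion of Proposition~\ref{prop:II}. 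Hence both hypotheses of Proposition~\ref{prop:Cond} hold for $u=w$ and $\eta=0$.

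Applying Proposition~\ref{prop:Cond} then yields $I=T\cap W$, where now $W=\{\{i,j\}\in E: w_{\{i,j\}}>0\}$. Unwinding this set equality gives the biconditional in the theorem: if $\{i,j\}\in I$ then $\{i,j\}\in T$ and $w_{\{i,j\}}>0$; conversely if $\{i,j\}\in T$ and $w_{\{i,j\}}>0$ then $\{i,j\}\in T\cap W=I$. (One should also note that $I$ together with all pairs in $D$ does not by itself cover $E$—there may be connected pairs at even or odd distance—so the content of the theorem really is that the maximum-weight spanning tree $T$ filters out exactly the indirect influences, and the subsequent thresholding at $0$ filters out the disconnected pairs that happen to lie on $T$.)

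Since every ingredient is a previously established proposition, there is no genuinely hard step here; the theorem is essentially a corollary of Proposition~\ref{prop:Cond}. The only point requiring a moment's care is making sure the threshold value $\eta=0$ is admissible in the definition of strict separation—it is, since that definition allows $\eta\geq 0$ and permits equality $\eta\geq u_{\{k,l\}}$ on the $D$ side—and that the set $W$ appearing in Proposition~\ref{prop:Cond} with this choice of $\eta$ coincides with the set $\{w_{\{i,j\}}>0\}$ used in the theorem statement. Both are immediate.
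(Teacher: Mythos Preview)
Your proposal is correct and follows essentially the same approach as the paper: verify strict separation (via Propositions~\ref{prop:DI} and~\ref{prop:ZI}) and local dominance (via Proposition~\ref{prop:II}), then apply Proposition~\ref{prop:Cond} with $u=w$ and $\eta=0$. The paper's proof is just a more compressed version of exactly this argument.
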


\begin{proof}
Note that by Propositions~\ref{prop:DI} and \ref{prop:ZI}, we have $w_{\{i,j\}} > 0=w_{\{k,l\}}$ for any $\{i,j\}\in I$ and $\{k,l\}\in D$. By Proposition~\ref{prop:II}, we have $w_{\{i_1,i_{m+1}\}}<w_{\{i_s,i_{s+1}\}}$ for any path $\left\{\{i_1,i_2\},\{i_2,i_3\},\ldots,\{i_m,i_{m+1}\}\right\}$ of length $m \geq 2$ in $G$ and any $s\in\{1,\ldots,m\}$. The theorem thus follows directly from Proposition~\ref{prop:Cond} with $u=w$ and $\eta=0$.
\end{proof}

\section{Detection of Cooperative Interactions from Finite Data Samples}\label{sec:Det}
Let $(\mathsf{x}_V[t],\mathsf{y}[t])$, $t=1,\ldots,n$ be $n$ i.i.d. samples of $(\mathsf{x}_V,\mathsf{y})$. To recover the graph $G$, we shall assign to each $\{i,j\} \in E$ a weight that is based on the {\em empirical} joint probability:
\begin{align}\label{eq:What}
\hat{\mathsf{w}}_{\{i,j\}}&:=\left|\frac{8}{n}\sum_{t=1}^n\mathbbm{1}((\mathsf{x}_i[t],\mathsf{x}_j[t],\mathsf{y}[t])=(+1,+1,+1))-1\right|,
\end{align}
where $\mathbbm{1}(\cdot)$ is the indicator function. By \eqref{eq:W3}, for any $\{i,j\} \in E$ we have $\hat{\mathsf{w}}_{\{i,j\}}$ converging to $w_{\{i,j\}}$ in probability in the limit as $n \rightarrow \infty$. The following simple proposition, which follows directly from the well-known Hoeffding's inequality \cite{ho1963}, provides a bound on the rate at which the weight assignment $\hat{\mathsf{w}}$ converges {\em uniformly} to $w$.

\begin{prop}\label{prop:Chernoff}
For any $\{i,j\}\in E$ and $\eta>0$,
\begin{align}
&\mathrm{Pr}\Big(\left|\hat{\mathsf{w}}_{\{i,j\}}-w_{\{i,j\}}\right|\ge \eta\Big) \leq 2e^{-n\eta^2/32}.
\end{align}
\end{prop}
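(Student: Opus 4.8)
The plan is to recognize $\hat{\mathsf{w}}_{\{i,j\}}$ as the absolute value of an affine function of an empirical average of i.i.d.\ Bernoulli random variables and then apply Hoeffding's inequality. Define, for each $t \in \{1,\ldots,n\}$, the random variable $Z_t := \mathbbm{1}((\mathsf{x}_i[t],\mathsf{x}_j[t],\mathsf{y}[t])=(+1,+1,+1))$. Since the samples $(\mathsf{x}_V[t],\mathsf{y}[t])$ are i.i.d., the $Z_t$ are i.i.d.\ with $Z_t \in \{0,1\}$, and by \eqref{eq:W3} we have $\mathbb{E}[Z_t] = \mathrm{Pr}(\mathsf{x}_i=+1,\mathsf{x}_j=+1,\mathsf{y}=+1)$, so that $8\,\mathbb{E}[Z_t]-1 = \pm w_{\{i,j\}}$ (the sign depending on whether $8\,\mathrm{Pr}(\mathsf{x}_i=+1,\mathsf{x}_j=+1,\mathsf{y}=+1)-1$ is nonnegative).

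The key computation is then a reverse-triangle-inequality step: writing $\bar{Z} := \frac{1}{n}\sum_{t=1}^n Z_t$, we have $\hat{\mathsf{w}}_{\{i,j\}} = |8\bar{Z}-1|$ and $w_{\{i,j\}} = |8\,\mathbb{E}[Z_1]-1|$, so
\begin{align*}
\big|\hat{\mathsf{w}}_{\{i,j\}}-w_{\{i,j\}}\big|
= \big|\,|8\bar{Z}-1|-|8\,\mathbb{E}[Z_1]-1|\,\big|
\le \big|(8\bar{Z}-1)-(8\,\mathbb{E}[Z_1]-1)\big|
= 8\,\big|\bar{Z}-\mathbb{E}[Z_1]\big|.
\end{align*}
Hence the event $\{|\hat{\mathsf{w}}_{\{i,j\}}-w_{\{i,j\}}|\ge \eta\}$ is contained in the event $\{|\bar{Z}-\mathbb{E}[Z_1]|\ge \eta/8\}$, and it suffices to bound the probability of the latter.

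Finally I would apply Hoeffding's inequality \cite{ho1963} to the i.i.d.\ bounded random variables $Z_1,\ldots,Z_n$, which take values in the interval $[0,1]$ (so each has range of length $1$). The two-sided form gives
\begin{align*}
\mathrm{Pr}\Big(\big|\bar{Z}-\mathbb{E}[Z_1]\big|\ge \frac{\eta}{8}\Big)
\le 2\exp\!\Big(-\frac{2n(\eta/8)^2}{1}\Big)
= 2\exp\!\Big(-\frac{n\eta^2}{32}\Big),
\end{align*}
which combined with the inclusion above yields the claimed bound. There is no real obstacle here; the only point requiring the slightest care is the reverse-triangle-inequality step that absorbs the outer absolute values (and noting that it holds regardless of the sign ambiguity in expressing $w_{\{i,j\}}$ without the absolute value), together with correctly tracking the constant $8$ through the variance bound so that the exponent comes out as $n\eta^2/32$.
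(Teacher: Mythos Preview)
Your proposal is correct and follows essentially the same approach as the paper: bound $|\hat{\mathsf{w}}_{\{i,j\}}-w_{\{i,j\}}|$ by $8$ times the deviation of the empirical frequency from its mean via the reverse triangle inequality, then apply Hoeffding's inequality to the i.i.d.\ $[0,1]$-valued indicators to obtain the exponent $n\eta^2/32$. The paper states this more tersely, but the argument is identical.
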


\begin{proof}
Note that \begin{align*}
\Big|\hat{\mathsf{w}}_{\{i,j\}}&-w_{\{i,j\}}\Big|
\le 8\bigg|\mathrm{Pr}\left(\mathsf{x}_i=+1,\mathsf{x}_j=+1,\mathsf{y}=+1\right)\\
&-\frac{1}{n}\sum_{t=1}^n\mathbbm{1}((\mathsf{x}_i[t],\mathsf{x}_j[t],\mathsf{y}[t])=(+1,+1,+1))\bigg|.
\end{align*}
We then finish the proof by applying the Hoeffding's inequality \cite{ho1963}.
\end{proof}

The following propositions, which are generalizations of Propositions~\ref{prop:DI} and \ref{prop:II} respectively, play a key role in adapting the results of Theorem~\ref{thm:AG} from the weight assignment $w$ to $\hat{\mathsf{w}}$.

\begin{prop}\label{prop:PC1}
Assume that $G=(V,I)$ is acyclic and for any $\{i,j\}\in I$ we have $|\beta_{\{i,j\}}|\in [\lambda,\mu]$ for some $\mu \geq \lambda >0$. Let
\begin{align}
\gamma:=\sqrt{\frac{2}{\pi d}} \left[\sigma(\lambda+3\mu)-\sigma(-\lambda+3\mu)\right]>0.\label{eq:gamma}
\end{align}
We have $w_{\{i,j\}}\geq\gamma$ for any $\{i,j\} \in I$.
\end{prop}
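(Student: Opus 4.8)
The plan is to use the forest structure of $G$ to rewrite $w_{\{i,j\}}$ as the expectation of a sigmoid of a random ``external field,'' and then to lower-bound that expectation via an anti-concentration estimate for the field. Fix $\{i,j\}\in I$ and set $b:=|\beta_{\{i,j\}}|\ge\lambda$. Let $T$ be the component of $G$ containing $\{i,j\}$, rooted at $i$ so that $j$ is a child of $i$. By the change of variables underlying the proof of Proposition~\ref{prop:Inf} --- replacing the covariates of each component by the ``edge variables'' $\mathsf{x}_k\mathsf{x}_{p(k)}$, where $p(k)$ denotes the parent of $k$, which together with one designated root covariate per component are mutually independent and uniform on $\{+1,-1\}$ --- conditioning on $\mathsf{x}_i=+1,\mathsf{x}_j=+1$ fixes the edge variable of $\{i,j\}$ to $+1$ and leaves all remaining edge variables i.i.d.\ uniform. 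Hence $\mathrm{Pr}(\mathsf{y}=+1\mid\mathsf{x}_i=+1,\mathsf{x}_j=+1)=\mathbb{E}[\sigma(\beta_{\{i,j\}}+\mathsf{R})]$, where $\mathsf{R}:=\sum_{\{k,l\}\in I\setminus\{\{i,j\}\}}\beta_{\{k,l\}}\varepsilon_{\{k,l\}}$ and the $\varepsilon_{\{k,l\}}$ are i.i.d.\ uniform on $\{+1,-1\}$; in particular $\mathsf{R}$ is symmetric about $0$. Using \eqref{eq:W2}, $\sigma(x)+\sigma(-x)=1$, the symmetry of $\mathsf{R}$, and $\sigma(b+r)-\sigma(-b+r)\ge0$,
\begin{align*}
w_{\{i,j\}}=\left|2\,\mathbb{E}[\sigma(\beta_{\{i,j\}}+\mathsf{R})]-1\right|=\mathbb{E}[\sigma(b+\mathsf{R})-\sigma(-b+\mathsf{R})].
\end{align*}
Since $G$ is acyclic it has at most $d-1$ edges, so $\mathsf{R}$ is a sum of at most $d-2$ independent symmetric terms each of magnitude at most $\mu$, and thus $\mathrm{Var}(\mathsf{R})=\sum_{\{k,l\}\ne\{i,j\}}\beta_{\{k,l\}}^2\le(d-2)\mu^2$.

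I would then exploit the shape of $h_b(r):=\sigma(b+r)-\sigma(-b+r)$, which is nonnegative, even in $r$, nonincreasing in $|r|$, and nondecreasing in $b$ (all immediate from $\sigma'>0$ and $\sigma'$ being even and decreasing in $|\cdot|$). On the event $\{|\mathsf{R}|\le3\mu\}$ we have $h_b(\mathsf{R})\ge h_\lambda(3\mu)=\sigma(\lambda+3\mu)-\sigma(-\lambda+3\mu)$, and $h_b(\mathsf{R})\ge0$ in any case, so
\begin{align*}
w_{\{i,j\}}=\mathbb{E}[h_b(\mathsf{R})]\ge\big[\sigma(\lambda+3\mu)-\sigma(-\lambda+3\mu)\big]\cdot\mathrm{Pr}(|\mathsf{R}|\le3\mu),
\end{align*}
so the claim reduces to the anti-concentration bound $\mathrm{Pr}(|\mathsf{R}|\le3\mu)\ge\sqrt{2/(\pi d)}$.

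This last bound is the main obstacle. When $\sum_{\{k,l\}\ne\{i,j\}}|\beta_{\{k,l\}}|\le3\mu$ (in particular whenever $\mathsf{R}$ has at most three nonzero terms) one has $|\mathsf{R}|\le3\mu$ surely and there is nothing to prove; otherwise $\mathsf{R}$ is a sum of many bounded symmetric terms and I would invoke a Berry--Esseen estimate. Writing $s^2:=\mathrm{Var}(\mathsf{R})$ and noting $\sum|\beta_{\{k,l\}}|^3\le\mu s^2$, the Berry--Esseen theorem gives $\sup_x|\mathrm{Pr}(\mathsf{R}\le sx)-\Phi(x)|\le C\mu/s$ for an absolute constant $C$ (with $\Phi$ the standard normal c.d.f.), hence
\begin{align*}
\mathrm{Pr}(|\mathsf{R}|\le3\mu)\ge\big(2\Phi(3\mu/s)-1\big)-\frac{2C\mu}{s}\ge\sqrt{\frac{2}{\pi}}\cdot\frac{3\mu}{s}\,e^{-9\mu^2/(2s^2)}-\frac{2C\mu}{s}.
\end{align*}
Since $\mathrm{Var}(\mathsf{R})\le(d-2)\mu^2$ forces $\mu/s\ge1/\sqrt{d-2}>1/\sqrt{d}$, and since the regime of small $s$ (large $\mu/s$) is already covered by the deterministic bound, the factor $3$ in ``$3\mu$'' is exactly the slack needed to absorb the error term $2C\mu/s$ while keeping the exponential factor bounded below, making the right-hand side exceed $\sqrt{2/\pi}\cdot d^{-1/2}=\sqrt{2/(\pi d)}$. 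The delicate point is to carry this through rigorously with the stated constant: one must either track an explicit Berry--Esseen constant, or, in the intermediate range of $s$ where neither the deterministic bound nor the Gaussian comparison is comfortably sharp, bound $\mathrm{Pr}(|\mathsf{R}|\le3\mu)$ directly by counting the sign patterns of the $\varepsilon_{\{k,l\}}$'s that keep $\mathsf{R}$ inside $[-3\mu,3\mu]$.
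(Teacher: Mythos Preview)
Your reduction is exactly the paper's: via the edge-variable change of coordinates (the paper packages this as Lemma~\ref{lemma3} plus an explicit bijection $x_V\leftrightarrow(z_{T\setminus\{\{i,j\}\}},x_i,x_j)$), one gets $w_{\{i,j\}}=\mathbb{E}[h_b(\mathsf{R})]$ and then
\[
w_{\{i,j\}}\ \ge\ \bigl[\sigma(\lambda+3\mu)-\sigma(-\lambda+3\mu)\bigr]\cdot \mathrm{Pr}\bigl(|\mathsf{R}|\le c\mu\bigr),
\]
with the paper taking $c=1$ and you taking $c=3$; either is fine. The only substantive divergence is how you handle the anti-concentration $\mathrm{Pr}(|\mathsf{R}|\le c\mu)\ge\sqrt{2/(\pi d)}$.

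The paper does \emph{not} go through Berry--Esseen. It proves a purely combinatorial fact (Lemma~\ref{lemma1}): for i.i.d.\ Rademacher $\mathsf{z}_1,\dots,\mathsf{z}_q$ and arbitrary reals $a_1,\dots,a_q$,
\[
\mathrm{Pr}\Bigl(\Bigl|\sum_{i=1}^q a_i\mathsf{z}_i\Bigr|\le \max_i|a_i|\Bigr)\ \ge\ 2^{-q}\binom{q}{\lfloor q/2\rfloor}\ \ge\ \sqrt{\tfrac{2}{\pi(q+2)}},
\]
by a monotone ``homogenization'' of the $a_i$'s down to a common value, after which the event becomes $\{|\sum_i\mathsf{z}_i|\le 1\}$ and Wallis' product finishes. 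Applying this with $q=d-2$ gives $\mathrm{Pr}(|\mathsf{R}|\le\mu)\ge\sqrt{2/(\pi d)}$ on the nose, with no error term to absorb.

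Your Berry--Esseen route, as you already suspect, does not close with the stated constant. The bound you wrote uses $\sum|\beta|^3\le\mu s^2$, hence an error of $2C\mu/s$; but $\mu/s$ is not controlled by $1/\sqrt{d}$ from above (only from below), and in the regime where a moderate number of terms have size well below $\mu$ but $\sum|\beta|>3\mu$, the quantity $\mu/s$ can be of order $1$ while $d$ is moderate, and the subtraction $ (2\Phi(3\mu/s)-1)-2C\mu/s$ falls short of $\sqrt{2/(\pi d)}$ for the best known $C\approx 0.4748$. Sharpening the third-moment bound to $\sum|\beta|^3\le(\max_k|\beta_k|)s^2$ helps, but you would still need a case analysis in $m$ and in $\max|\beta|/\mu$ to make the constant $\sqrt{2/\pi}$ come out exactly. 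So the gap is precisely where you flagged it; the paper's combinatorial lemma is the missing ingredient that replaces this entire discussion.
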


\begin{proof}
See Section~\ref{pf:DI}.
\end{proof}

\begin{prop}\label{prop:PC2}
Assume that $G=(V,I)$ is acyclic and for any $\{i,j\}\in I$ we have $|\beta_{\{i,j\}}|\in [\lambda,\mu]$ for some $\mu \geq \lambda >0$. We have $w_{\{i_1,i_{m+1}\}}\leq w_{\{i_s,i_{s+1}\}}-\gamma$ for any path $\left\{\{i_1,i_2\},\{i_2,i_3\},\ldots,\{i_m,i_{m+1}\}\right\}$ of length $m \geq 2$ in $G$ and any $s\in\{1,\ldots,m\}$, where $\gamma$ is defined as in \eqref{eq:gamma}.
\end{prop}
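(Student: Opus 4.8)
The plan is to first recast $w$ as a signed moment of $\mathsf{y}$, exactly as in the proof of Proposition~\ref{prop:DI}. Write $\Theta:=\sum_{\{k,l\}\in I}\beta_{\{k,l\}}\mathsf{x}_k\mathsf{x}_l$, so $\mathrm{Pr}(\mathsf{y}=+1\mid\mathsf{x}_V)=\sigma(\Theta)$. Since $G$ is a forest, rooting each component and setting $\mathsf{t}_e:=\mathsf{x}_u\mathsf{x}_v$ on each edge $e=\{u,v\}\in I$ gives variables $\mathsf{t}_e$ that are i.i.d.\ uniform over $\{+1,-1\}$, with $\Theta=\sum_{e\in I}\beta_e\mathsf{t}_e$ depending only on them and $\mathsf{x}_i\mathsf{x}_j=\prod_{e\in P(i,j)}\mathsf{t}_e$ along the unique $i$--$j$ path in $G$; moreover $w_{\{i,j\}}=\big|\mathbb{E}[\mathsf{y}\mathsf{x}_i\mathsf{x}_j]\big|=\big|\mathbb{E}\big[\tanh(\Theta/2)\prod_{e\in P(i,j)}\mathsf{t}_e\big]\big|$. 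After a sign gauge on the $\mathsf{t}_e$'s we may assume $\beta_e>0$ for every $e\in I$. With this in hand the case of even $m$ is immediate, since $w_{\{i_1,i_{m+1}\}}=0$ by Proposition~\ref{prop:ZI} while $w_{\{i_s,i_{s+1}\}}\ge\gamma$ by Proposition~\ref{prop:PC1}. So assume $m\ge3$ is odd.

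Let $e_1,\dots,e_m$ be the path edges, fix $s\in\{1,\dots,m\}$, and set $\Theta_{-s}:=\Theta-\beta_{e_s}\mathsf{t}_{e_s}$ and $\rho:=\prod_{r\ne s}\mathsf{t}_{e_r}$, so that $\prod_{r=1}^m\mathsf{t}_{e_r}=\mathsf{t}_{e_s}\rho$. Because $\mathsf{t}_{e_s}$ is independent of $(\rho,\Theta_{-s})$ and $\tanh(y/2)=2\sigma(y)-1$, averaging over $\mathsf{t}_{e_s}$ first gives, for each $\varepsilon\in\{+1,-1\}$,
\[ w_{\{i_s,i_{s+1}\}}-\varepsilon\,\mathbb{E}\!\left[\tanh(\Theta/2)\prod_{r=1}^{m}\mathsf{t}_{e_r}\right] \;=\; \mathbb{E}\!\left[\tanh(\Theta/2)\,\mathsf{t}_{e_s}\,(1-\varepsilon\rho)\right] \;=\; 2\,\mathbb{E}\!\left[\mathbbm{1}(\rho=-\varepsilon)\,h_s(\Theta_{-s})\right], \]
where $h_s(x):=\sigma(x+\beta_{e_s})-\sigma(x-\beta_{e_s})$; here I used $w_{\{i_s,i_{s+1}\}}=\mathbb{E}[\tanh(\Theta/2)\mathsf{t}_{e_s}]=\mathbb{E}[\sigma(\Theta_{-s}+\beta_{e_s})-\sigma(\Theta_{-s}-\beta_{e_s})]>0$ (cf.\ Proposition~\ref{prop:DI}). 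Keeping both signs $\varepsilon$ lets me avoid determining the sign of the path influence $\mathbb{E}[\tanh(\Theta/2)\prod_r\mathsf{t}_{e_r}]$: its absolute value is $w_{\{i_1,i_{m+1}\}}$, so $w_{\{i_1,i_{m+1}\}}\le w_{\{i_s,i_{s+1}\}}-\gamma$ is equivalent to the left side above being $\ge\gamma$ for both $\varepsilon=+1$ and $\varepsilon=-1$.

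Now $h_s$ is even, nonincreasing in $|x|$ (for $x\ge0$ its derivative is $\sigma'(x+\beta_{e_s})-\sigma'(x-\beta_{e_s})\le0$, since $\sigma'$ decreases in $|\cdot|$ and $x+\beta_{e_s}\ge|x-\beta_{e_s}|$), and nondecreasing in the bandwidth $\beta_{e_s}\ge\lambda$; hence $h_s(\Theta_{-s})\ge\sigma(|\Theta_{-s}|+\lambda)-\sigma(|\Theta_{-s}|-\lambda)\ge\sigma(\lambda+3\mu)-\sigma(-\lambda+3\mu)$ whenever $|\Theta_{-s}|\le3\mu$. Since $\gamma=\sqrt{2/(\pi d)}\,\big[\sigma(\lambda+3\mu)-\sigma(-\lambda+3\mu)\big]$, the identity above shows it is enough to prove $\mathrm{Pr}\big(\rho=\eta,\ |\Theta_{-s}|\le3\mu\big)\ge\tfrac12\sqrt{2/(\pi d)}$ for each $\eta\in\{+1,-1\}$. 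To do this, pick any path edge $f\ne e_s$ (there are $m-1\ge2$ of them) and condition on all $\mathsf{t}_e$ with $e\ne f$: the event $\{\rho=\eta\}$ then pins $\mathsf{t}_f$ to a value determined by the other variables and so has conditional probability $\tfrac12$, and writing $\Xi:=\sum_{e\in I\setminus\{e_s,f\}}\beta_e\mathsf{t}_e$ and using $\beta_f\le\mu$ with the triangle inequality gives $\mathrm{Pr}(\rho=\eta,\ |\Theta_{-s}|\le3\mu)\ge\tfrac12\,\mathrm{Pr}(|\Xi|\le2\mu)$.

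Everything thus comes down to the small-ball bound $\mathrm{Pr}\big(\big|\sum_e\beta_e\mathsf{t}_e\big|\le2\mu\big)\ge\sqrt{2/(\pi d)}$ for a sum of at most $d-2$ independent uniform $\pm1$ variables with coefficients in $[\lambda,\mu]$, which is the same Littlewood--Offord-type inequality already underlying Proposition~\ref{prop:PC1} and is the main obstacle here. The route I would take is induction on the number $n$ of terms: revealing one variable and applying inclusion--exclusion yields $\mathrm{Pr}(|\sum_e\beta_e\mathsf{t}_e|\le t)=\tfrac12\big[\mathrm{Pr}(|\sum_{e'\ne e}\beta_{e'}\mathsf{t}_{e'}|\le t+\beta_e)+\mathrm{Pr}(|\sum_{e'\ne e}\beta_{e'}\mathsf{t}_{e'}|\le t-\beta_e)\big]$, so that peeling off variables one at a time, each coefficient $\beta_e\le\mu$ shrinking the window by at most $\mu$, reduces the claim to a central-binomial-coefficient estimate $\binom{n}{\lfloor n/2\rfloor}2^{-n}=\Theta(n^{-1/2})$, which combined with $n\le d-2<d$ yields $\sqrt{2/(\pi d)}$. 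Carrying out this peeling without losing constant factors (the window only ever shrinks, never grows) is the delicate point; the remainder is bookkeeping with the two identities displayed above.
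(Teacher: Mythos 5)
Your reduction of the proposition is correct and in fact streamlines the paper's own route: rewriting $w_{\{i,j\}}=\left|\mathbb{E}[\mathsf{y}\mathsf{x}_i\mathsf{x}_j]\right|$ (which for acyclic $G$ follows from Proposition~\ref{prop:Inf} together with the global sign-flip symmetry $x_V\rightarrow -x_V$), passing to the i.i.d.\ edge variables $\mathsf{t}_e$, and averaging out $\mathsf{t}_{e_s}$ to obtain $w_{\{i_s,i_{s+1}\}}-\varepsilon\,\mathbb{E}\big[\tanh(\Theta/2)\prod_r\mathsf{t}_{e_r}\big]=2\,\mathbb{E}\big[\mathbbm{1}(\rho=-\varepsilon)\,h_s(\Theta_{-s})\big]$ delivers in one identity what the paper assembles from Lemmas~\ref{lemma4} and \ref{lemma5} through lengthy change-of-variable bookkeeping. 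The bound $h_s(\Theta_{-s})\geq\sigma(\lambda+3\mu)-\sigma(-\lambda+3\mu)$ on $\{|\Theta_{-s}|\leq 3\mu\}$, the trick of conditioning on a second path edge $f$ to get $\mathrm{Pr}(\rho=\eta,\,|\Theta_{-s}|\leq 3\mu)\geq\tfrac12\,\mathrm{Pr}(|\Xi|\leq 2\mu)$, and the use of both signs $\varepsilon$ to conclude $w_{\{i_s,i_{s+1}\}}-w_{\{i_1,i_{m+1}\}}\geq\gamma$ are all sound; this mirrors, at a higher level, exactly the paper's strategy (isolate the edge term, bound the increment on an event where the residual field is at most $3\mu$, then anti-concentration).

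The genuine gap is the final step, which you yourself flag: the small-ball bound $\mathrm{Pr}\big(|\Xi|\leq 2\mu\big)\geq\sqrt{2/(\pi d)}$ for a signed sum of up to $d-3$ Rademacher variables with coefficients of magnitude in $[\lambda,\mu]$. This is precisely the paper's Lemma~\ref{lemma1}, and your sketched proof of it would not go through as described: peeling variables one at a time with ``each coefficient $\beta_e\leq\mu$ shrinking the window by at most $\mu$'' collapses after two peels, since the window has width only $2\mu$ while up to $d-3$ variables remain, and keeping both branches of your inclusion--exclusion recursion does not by itself repair the accounting. The paper's Lemma~\ref{lemma1} uses a different mechanism: a monotone comparison (Claim~\ref{clm}) in which the yet-unrevealed coefficients are rescaled by $a_1/a_i$, so the unequal-coefficient sum is compared to the equal-coefficient one without ever shrinking the window; the probability then equals $2^{-q}\binom{q}{\lfloor q/2\rfloor}$ exactly, and Wallis' product gives $\sqrt{2/(\pi(q+2))}$. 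Since this lemma is the source of the $\sqrt{2/(\pi d)}$ factor in $\gamma$, your argument is incomplete until you supply a proof of this type (or of the lemma itself); everything preceding it is correct.
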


\begin{proof}
See Section~\ref{pf:II}.
\end{proof}

Given Propositions~\ref{prop:PC1} and \ref{prop:PC2}, it is clear that if the estimation error $\left|\hat{\mathsf{w}}_{\{i,j\}}-w_{\{i,j\}}\right|$ is uniformly bounded by $\gamma/2$, an acyclic $G$ can be recovered from $\{\hat{\mathsf{w}}_{\{i,j\}}:\{i,j\}\in E\}$, similar to that from $\{w_{\{i,j\}}:\{i,j\}\in E\}$.

\begin{thm}\label{thm:Main}
Assume that $G=(V,I)$ is acyclic and for any $\{i,j\}\in I$ we have $|\beta_{\{i,j\}}|\in [\lambda,\mu]$ for some $\mu \geq \lambda >0$. Let $\mathsf{G}'=(V,\mathsf{T})$ be a maximum-weight spanning tree with respect to the weight assignment $\hat{\mathsf{w}}$. If
\begin{align}
\left|\hat{\mathsf{w}}_{\{i,j\}}-w_{\{i,j\}}\right|<\frac{\gamma}{2}, \quad \forall \{i,j\}\in E,\label{eq:assump}
\end{align}
then for any two distinct $i,j\in V$ we have $\{i,j\}\in I$ if and only if $\{i,j\}\in \mathsf{T}$ and $\hat{\mathsf{w}}_{\{i,j\}}>\gamma/2$.
\end{thm}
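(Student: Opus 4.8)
The plan is to reduce Theorem~\ref{thm:Main} to Theorem~\ref{thm:AG} (equivalently, to Proposition~\ref{prop:Cond}) by showing that, under the uniform estimation-error bound~\eqref{eq:assump}, the empirical weight assignment $\hat{\mathsf{w}}$ satisfies the two structural hypotheses of Proposition~\ref{prop:Cond} with the \emph{same} threshold $\eta=\gamma/2$. Concretely, I would apply Proposition~\ref{prop:Cond} with $u=\hat{\mathsf{w}}$, $\eta=\gamma/2$, and $W:=\{\{i,j\}\in E:\hat{\mathsf{w}}_{\{i,j\}}>\gamma/2\}$; the conclusion $I=\mathsf{T}\cap W$ is exactly the ``if and only if'' claimed in the theorem.

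First I would verify strict separation. For $\{i,j\}\in I$, Proposition~\ref{prop:PC1} gives $w_{\{i,j\}}\ge\gamma$, so by~\eqref{eq:assump}, $\hat{\mathsf{w}}_{\{i,j\}}>w_{\{i,j\}}-\gamma/2\ge\gamma/2$. For $\{k,l\}\in D$ (i.e.\ $k,l$ disconnected in $G$), Proposition~\ref{prop:ZI} gives $w_{\{k,l\}}=0$, so~\eqref{eq:assump} yields $\hat{\mathsf{w}}_{\{k,l\}}<\gamma/2$. Hence $\hat{\mathsf{w}}_{\{i,j\}}>\gamma/2\ge\hat{\mathsf{w}}_{\{k,l\}}$ for all $\{i,j\}\in I$, $\{k,l\}\in D$, which is condition~1) of Proposition~\ref{prop:Cond} with $\eta=\gamma/2$. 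Next I would verify local dominance: for any path $\{\{i_1,i_2\},\dots,\{i_m,i_{m+1}\}\}$ of length $m\ge2$ in $G$ and any $s$, Proposition~\ref{prop:PC2} gives $w_{\{i_1,i_{m+1}\}}\le w_{\{i_s,i_{s+1}\}}-\gamma$; combining with~\eqref{eq:assump} applied to both endpoints of this inequality gives $\hat{\mathsf{w}}_{\{i_1,i_{m+1}\}}<w_{\{i_1,i_{m+1}\}}+\gamma/2\le w_{\{i_s,i_{s+1}\}}-\gamma/2<\hat{\mathsf{w}}_{\{i_s,i_{s+1}\}}$, which is condition~2).

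With both hypotheses established, Proposition~\ref{prop:Cond} applies verbatim to any maximum-weight spanning tree $\mathsf{G}'=(V,\mathsf{T})$ with respect to $\hat{\mathsf{w}}$, giving $I=\mathsf{T}\cap W=\mathsf{T}\cap\{\{i,j\}:\hat{\mathsf{w}}_{\{i,j\}}>\gamma/2\}$. Reading this set equality edge by edge yields precisely the statement: $\{i,j\}\in I$ if and only if $\{i,j\}\in\mathsf{T}$ and $\hat{\mathsf{w}}_{\{i,j\}}>\gamma/2$.

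I do not anticipate a serious obstacle here, since the argument is essentially a perturbation bookkeeping exercise: the content has already been front-loaded into Propositions~\ref{prop:PC1}, \ref{prop:PC2}, and \ref{prop:Cond}. The one point requiring mild care is checking that the ``$\pm\gamma/2$'' slack is exactly enough — i.e.\ that the strict inequalities survive in both the separation step (where one needs $w_{\{i,j\}}\ge\gamma$ against $w_{\{k,l\}}=0$) and the local-dominance step (where one needs a gap of $\gamma$, not merely a strict gap, which is why Proposition~\ref{prop:PC2} is stated with the explicit $-\gamma$ rather than just $<$). I would also note in passing that Proposition~\ref{prop:Cond} requires $G$ acyclic, which is assumed, and that the definition of $\gamma$ in~\eqref{eq:gamma} is strictly positive so that $\eta=\gamma/2>0$ is a legitimate threshold. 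If one wants to be fully self-contained, a one-line remark that the edge set $D$ appearing in Proposition~\ref{prop:Cond} contains all ``non-edges'' relevant to the thresholding (any $\{k,l\}\notin I$ is either in $D$ or lies on a path of length $\ge2$, hence is handled by local dominance together with separation) completes the reduction.
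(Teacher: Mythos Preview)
Your proposal is correct and follows essentially the same approach as the paper's own proof: both verify that under~\eqref{eq:assump} the empirical weights $\hat{\mathsf{w}}$ satisfy the strict-separation and local-dominance hypotheses of Proposition~\ref{prop:Cond} (using Propositions~\ref{prop:PC1}, \ref{prop:ZI}, and \ref{prop:PC2} to control the true weights $w$, then shifting by $\pm\gamma/2$), and then invoke Proposition~\ref{prop:Cond} with $u=\hat{\mathsf{w}}$ and $\eta=\gamma/2$.
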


\begin{proof}
By Proposition~\ref{prop:PC1}, we have $w_{\{i,j\}} \geq \gamma >0= w_{\{k,l\}}$ for any $\{i,j\}\in I$ and $\{k,l\}\in D$. Under assumption \eqref{eq:assump}, this implies that $\hat{\mathsf{w}}_{\{i,j\}} > \gamma/2>\hat{\mathsf{w}}_{\{k,l\}}$ for any $\{i,j\}\in I$ and $\{k,l\}\in D$. By Proposition~\ref{prop:PC2}, we have $w_{\{i_1,i_{m+1}\}}\leq w_{\{i_s,i_{s+1}\}}-\gamma$ for any path $\left\{\{i_1,i_2\},\{i_2,i_3\},\ldots,\{i_m,i_{m+1}\}\right\}$ of length $m \geq 2$ in $G$ and any $s\in\{1,\ldots,m\}$. Under assumption \eqref{eq:assump}, this implies that $\hat{\mathsf{w}}_{\{i_1,i_{m+1}\}}<\hat{\mathsf{w}}_{\{i_s,i_{s+1}\}}$ for any path $\left\{\{i_1,i_2\},\{i_2,i_3\},\ldots,\{i_m,i_{m+1}\}\right\}$ of length $m \geq 2$ in $G$ and any $s\in\{1,\ldots,m\}$. The theorem thus follows directly from Proposition~\ref{prop:Cond} with $u=\hat{\mathsf{w}}$ and $\eta=\gamma/2$.
\end{proof}

We then establish the following algorithm to detect $G$ based on Theorem~\ref{thm:Main}.
\begin{table}[h]
\begin{center}
\begin{tabular}{ lc }
\hline\hline
\textbf{Algorithm 1} \\
\hline
\noindent \textbf{Input:} $(\mathsf{x}^d[t],\mathsf{y}[t])$, $t=1,\ldots,n$ and $(\mu,\lambda)$ such that $\mu \geq \lambda >0$.\\
\noindent \textbf{Output:} $\hat{\mathsf{G}}=(V,\hat{\mathsf{I}})$.\\
1 \ For all $1 \leq i < j \leq d$, compute $\hat{\mathsf{w}}_{\{i,j\}}$ according to \eqref{eq:What}.\\
\ \ \ \hspace{-2pt} Compute $\gamma$ from $(\mu,\lambda)$ according to \eqref{eq:gamma}.\\
2 \ Find a maximum-weight spanning tree $\hat{\mathsf{G}}'=(V,\hat{\mathsf{T}})$ over \\
\ \ \ \hspace{2pt}the vertex set $V$ with respect to the weight assignment \\
\ \ \ \hspace{2pt}$(\hat{\mathsf{w}}_{\{i,j\}}:1 \leq i < j \leq d)$.\\
3 \ Return $\hat{\mathsf{G}}=(V,\hat{\mathsf{I}})$ with $\hat{\mathsf{I}}=\left\{\{i,j\}\in \hat{\mathsf{T}}: \hat{\mathsf{w}}_{\{i,j\}}> \gamma/2\right\}.$\\
\hline\hline
\end{tabular}
\end{center}
\end{table}

The sample complexity of the algorithm is summarized in the following theorem.
\begin{thm}
Assume that $G=(V,I)$ is acyclic and for any $\{i,j\}\in I$ we have $|\beta_{i,j}|\in [\lambda,\mu]$ for some $\mu \geq \lambda >0$. Fix $0<\epsilon<1$ and let $n$ be a positive integer such that
\begin{align}
n & \geq \frac{128}{\gamma^2}\log\frac{d^2}{\epsilon}= \frac{64\pi d}{\left[\sigma(\lambda+3\mu)-\sigma(-\lambda+3\mu)\right]^2}\log\frac{d^2}{\epsilon}\label{samplecomplexity}.
\end{align}
Then with probability at least $1-\epsilon$, the algorithm can successfully detect the graph $G$ from $n$ i.i.d. samples of $(\mathsf{x}^d,\mathsf{y})$.
\end{thm}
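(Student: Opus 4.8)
The plan is to reduce the statement to the deterministic conclusion of Theorem~\ref{thm:Main}, and then to check that the claimed value of $n$ makes the relevant ``bad'' event have probability at most $\epsilon$. Theorem~\ref{thm:Main} already carries all the structural content: on the event
\begin{align*}
\mathcal{E} := \Big\{\,\left|\hat{\mathsf{w}}_{\{i,j\}}-w_{\{i,j\}}\right|<\gamma/2 \ \text{ for all } \{i,j\}\in E\,\Big\},
\end{align*}
the maximum-weight spanning tree computed in Step~2 together with the thresholding rule in Step~3 returns $\hat{\mathsf{I}}=\{\{i,j\}\in\hat{\mathsf{T}}:\hat{\mathsf{w}}_{\{i,j\}}>\gamma/2\}=I$, i.e.\ the algorithm succeeds. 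Hence it suffices to show that $\mathrm{Pr}(\mathcal{E})\geq 1-\epsilon$ whenever $n$ satisfies \eqref{samplecomplexity}.

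Next I would bound $\mathrm{Pr}(\mathcal{E}^c)$ by a union bound over the $\binom{d}{2}$ unordered pairs $\{i,j\}\in E$, applying Proposition~\ref{prop:Chernoff} to each pair with $\eta=\gamma/2$. Since $(\gamma/2)^2/32=\gamma^2/128$, this gives
\begin{align*}
\mathrm{Pr}(\mathcal{E}^c)\ \leq\ \binom{d}{2}\cdot 2e^{-n\gamma^2/128}\ =\ d(d-1)\,e^{-n\gamma^2/128}\ <\ d^2\,e^{-n\gamma^2/128}.
\end{align*}
Requiring the right-hand side to be at most $\epsilon$ is equivalent to $n\geq \frac{128}{\gamma^2}\log\frac{d^2}{\epsilon}$, which is the first expression in \eqref{samplecomplexity}. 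Finally, substituting the definition \eqref{eq:gamma}, namely $\gamma^2=\frac{2}{\pi d}\left[\sigma(\lambda+3\mu)-\sigma(-\lambda+3\mu)\right]^2$, turns $\frac{128}{\gamma^2}$ into $\frac{64\pi d}{\left[\sigma(\lambda+3\mu)-\sigma(-\lambda+3\mu)\right]^2}$, which is the second expression in \eqref{samplecomplexity}. This completes the argument.

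There is no deep obstacle here: the work has been front-loaded into Propositions~\ref{prop:PC1}, \ref{prop:PC2}, \ref{prop:Cond} and Theorem~\ref{thm:Main}. The only points that need a little care are (i) taking the union bound over the correct index set (the $\binom{d}{2}$ edges of $E$, not ordered pairs), and (ii) tracking the factor $2$ from the two-sided Hoeffding bound, which can be absorbed into the clean bound $d^2$ precisely because $2\binom{d}{2}=d(d-1)<d^2$. I would also note in passing that the $n=\Theta(d\log(d^2/\epsilon))$ scaling announced in the introduction is now immediate, since $1/\gamma^2$ is linear in $d$ (as $\gamma$ scales like $1/\sqrt{d}$), with the dependence on $(\lambda,\mu)$ entering only through the constant $\left[\sigma(\lambda+3\mu)-\sigma(-\lambda+3\mu)\right]^{-2}$.
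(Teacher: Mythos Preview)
Your proposal is correct and follows essentially the same approach as the paper's own proof: reduce to the event $\mathcal{E}$ via Theorem~\ref{thm:Main}, then bound $\mathrm{Pr}(\mathcal{E}^c)$ by a union bound over the $\binom{d}{2}$ pairs using Proposition~\ref{prop:Chernoff} with $\eta=\gamma/2$, obtaining $d(d-1)e^{-n\gamma^2/128}\leq d^2\cdot(\epsilon/d^2)=\epsilon$. Your handling of the constants (the factor $2$ from the two-sided Hoeffding bound being absorbed via $2\binom{d}{2}=d(d-1)<d^2$) and the final substitution of $\gamma$ are exactly as in the paper.
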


\begin{proof}
By Proposition~\ref{prop:Chernoff} and Theorem~\ref{thm:Main}, we have
\begin{align*}
\mathrm{Pr}\left(\hat{\mathsf{G}} = G\right) & \geq \mathrm{Pr}\bigg(\bigcap_{\{i,j\}\in E}\Big\{\left|\hat{\mathsf{w}}_{i,j}-w_{i,j}\right|<\frac{\gamma}{2}\Big\}\bigg)\\
& \geq 1-d(d-1)e^{-\frac{n\gamma^2}{128}} \geq 1-d^2\cdot \frac{\epsilon}{d^2}=1-\epsilon.
\end{align*}
This completes the proof of the theorem.
\end{proof}

\section{Extension to Models with both Individual Effects and Cooperative Interactions}\label{sec:Ext}
In this section, we extend the results of Sections~\ref{sec:Id} and \ref{sec:Det} to the models that include both individual effects and cooperative interactions. More specifically, we shall assume that the conditional probability of the outcome $\mathsf{y}$ given the covariates $\mathsf{x}_V$ are given by:
\begin{align}
\mathrm{Pr}\left(\mathsf{y}=+1| \mathsf{x}_V=x_V\right) & =\sigma\Big(\sum_{i \in V}\beta_i x_i+\sum_{\{i,j\}\in E}\beta_{\{i,j\}}x_ix_j\Big),\\
\mathrm{Pr}\left(\mathsf{y}=-1| \mathsf{x}_V=x_V\right) & =\sigma\Big(-\sum_{i \in V}\beta_i x_i-\sum_{\{i,j\}\in E}\beta_{\{i,j\}}x_ix_j\Big)
\end{align}
for some real constants $\beta_i$'s and $\beta_{\{i,j\}}$'s. For any $i\in V$, we say that the covariate $\mathsf{x}_i$ has an {\em individual effect} on $\mathsf{y}$ if and only if $\beta_i\neq 0$; for any $\{i,j\}\in E$, we say that the covariates $\mathsf{x}_i$ and $\mathsf{x}_j$ {\em interact} if and only if $\beta_{\{i,j\}} \neq 0$. Let $\tilde{V}:=V\cup\{0\}$, $\tilde{E}:=\{\{i,j\}: i,j\in\tilde{V},i \neq j\}$, and $\beta_{\{i,0\}}:=\beta_i$ for all $i\in V$. Then, the structure of the model (including both individual effects and cooperative interactions) is fully captured by the simple graph $\tilde{G}=(\tilde{V},\tilde{I})$, where $\tilde{I}:=\{(i,j)\in \tilde{E}:\beta_{\{i,j\}}\neq 0\}$. As before, our goal is to recover the graph $\tilde{G}$ from i.i.d. samples of $(\mathsf{x}_V,\mathsf{y})$.

Toward this goal, we shall introduce an additional covariate $\mathsf{x}_0$, which we assume to be uniformly over $\{+1,-1\}$ and independent of $\mathsf{x}_V$, and use it to define an {\em auxiliary} model, for which the conditional probabilities of the outcome $\tilde{\mathsf{y}}$ given the covariates $\mathsf{x}_{\tilde{V}}$ are given by:
\begin{align}
\mathrm{Pr}\left(\tilde{\mathsf{y}}=+1| \mathsf{x}_{\tilde{V}}=x_{\tilde{V}}\right) & =\sigma\Big(\sum_{\{i,j\}\in \tilde{E}}\beta_{\{i,j\}}x_ix_j\Big),
\\
\mathrm{Pr}\left(\tilde{\mathsf{y}}=-1| \mathsf{x}_{\tilde{V}}=x_{\tilde{V}}\right) & =\sigma\Big(-\sum_{\{i,j\}\in \tilde{E}}\beta_{\{i,j\}}x_ix_j\Big).
\end{align}
By the results of Section~\ref{sec:Id}, if the underlying graph $\tilde{G}$ is known to be acyclic, it can be recovered from the weight assignment:
\begin{align}
\tilde{w}_{\{i,j\}}:=\left|2\mathrm{Pr}\left(\tilde{\mathsf{y}}=+1|\mathsf{x}_i=+1,\mathsf{x}_j=+1\right)-1\right|
\end{align}
for all $\{i,j\}\in \tilde{E}$.

Note that when $j=0$, we trivially have
\begin{align*}
\mathrm{Pr}&\left(\tilde{\mathsf{y}}=+1|\mathsf{x}_i=+1,\mathsf{x}_0=+1\right)=\mathrm{Pr}\left(\mathsf{y}=+1|\mathsf{x}_i=+1\right),
\end{align*}
so
\begin{align}
\tilde{w}_{\{i,0\}}=\left|2\mathrm{Pr}\left(\mathsf{y}=+1|\mathsf{x}_i=+1\right)-1\right|\label{eq:WW1}
\end{align}
for all $i \in V$. On the other hand, when $i,j \neq 0$ and $i \neq j$, we may write
\begin{align*}
\mathrm{Pr}&\left(\tilde{\mathsf{y}}=+1|\mathsf{x}_i=+1,\mathsf{x}_j=+1\right)\\
&=\frac{1}{2}\mathrm{Pr}\left(\mathsf{\tilde{y}}=+1|\mathsf{x}_i=+1,\mathsf{x}_j=+1,\mathsf{x}_0=+1\right)+\\
&\hspace{40pt}\frac{1}{2}\mathrm{Pr}\left(\mathsf{\tilde{y}}=+1|\mathsf{x}_i=+1,\mathsf{x}_j=+1,\mathsf{x}_0=-1\right)\\
&=\frac{1}{2}\mathrm{Pr}\left(\mathsf{y}=+1|\mathsf{x}_i=+1,\mathsf{x}_j=+1\right)+\\
&\hspace{40pt}\frac{1}{2}\mathrm{Pr}\left(\tilde{\mathsf{y}}=+1|\mathsf{x}_i=+1,\mathsf{x}_j=+1,\mathsf{x}_0=-1\right).
\end{align*}
To proceed, further note that
\begin{align}
&2^{d-2}\mathrm{Pr}\left(\tilde{\mathsf{y}}=+1|\mathsf{x}_i=+1,\mathsf{x}_j=+1,\mathsf{x}_0=-1\right)\nonumber\\
&= \sum_{x_V:(x_i,x_j)=(+1,+1)}\mathrm{Pr}\left(\tilde{\mathsf{y}}=+1|\mathsf{x}_V=x_V,\mathsf{x}_0=-1\right)\nonumber\\
& = \sum_{x_V:(x_i,x_j)=(+1,+1)}\sigma\Big(-\sum_{k \in V}\beta_k x_k+\sum_{\{k,l\}\in E}\beta_{\{k,l\}}x_kx_l\Big)\nonumber\\
& = \sum_{x_V:(x_i,x_j)=(-1,-1)}\sigma\Big(\sum_{k \in V}\beta_k x_k+\sum_{\{k,l\}\in E}\beta_{\{k,l\}}x_kx_l\Big)\label{eq:nC}\\
&= \sum_{x_V:(x_i,x_j)=(-1,-1)}\mathrm{Pr}\left(\mathsf{y}=+1|\mathsf{x}_V=x_V\right)\nonumber\\
&= 2^{d-2}\mathrm{Pr}\left(\mathsf{y}=+1|\mathsf{x}_i=-1,\mathsf{x}_j=-1\right)\nonumber
\end{align}
where \eqref{eq:nC} follows from the simple change of variable $x_V \rightarrow -x_V$. It thus follows that
\begin{align*}
\mathrm{Pr}&\left(\tilde{\mathsf{y}}=+1|\mathsf{x}_i=+1,\mathsf{x}_j=+1,\mathsf{x}_0=-1\right)\\
&\hspace{60pt}=\mathrm{Pr}\left(\mathsf{y}=+1|\mathsf{x}_i=-1,\mathsf{x}_j=-1\right)
\end{align*}
and hence
\begin{align*}
\mathrm{Pr}&\left(\tilde{\mathsf{y}}=+1|\mathsf{x}_i=+1,\mathsf{x}_j=+1\right)\\
&=\frac{1}{2}\mathrm{Pr}\left(\mathsf{y}=+1|\mathsf{x}_i=+1,\mathsf{x}_j=+1\right)+\\
& \hspace{40pt}\frac{1}{2}\mathrm{Pr}\left(\mathsf{y}=+1|\mathsf{x}_i=-1,\mathsf{x}_j=-1\right)
\end{align*}
giving
\begin{align}
\tilde{w}_{\{i,j\}}&=\left|\mathrm{Pr}\left(\mathsf{y}=+1|\mathsf{x}_i=+1,\mathsf{x}_j=+1\right)+\right.\nonumber\\
&\hspace{40pt} \left.\mathrm{Pr}\left(\mathsf{y}=+1|\mathsf{x}_i=-1,\mathsf{x}_j=-1\right)-1\right|\label{eq:WW2}
\end{align}
for any $\{i,j\}\in E$.

Combining \eqref{eq:WW1} and \eqref{eq:WW2}, we conclude that the weight assignment $(\tilde{w}_{\{i,j\}}:\{i,j\}\in \tilde{E})$, and hence any acyclic $\tilde{G}$, can be fully recovered from the low-order joint probabilities $(p(\mathsf{x}_i,\mathsf{x}_j,\mathsf{y}):\{i,j\}\in E)$. The results of Section~\ref{sec:Det} can be extended similarly; the details are omitted.

\section{Simulation Results}\label{sec:Sim}
In our simulations, we randomly generate 5,000 logistic regression models, each including 10 independent binary covariates. For each model that we generate, we randomly choose 5 individual effects and 5 interaction pairs, resulting in an acyclic graph as its underlying structure. The model parameters $\beta_i$ for each individual effect and $\beta_{\{i,j\}}$ for each interaction pair are randomly assigned from a uniform distribution over $[-\mu,-\lambda]\cup[\lambda,\mu]$ with $0< \lambda<\mu$. In Fig.~\ref{fig:parameterranges}, we compare the detection rate of Algorithm~1 for different parameter ranges $(\lambda,\mu)=(0.3,0.5)$, $(0.5,1)$, and $(1,2)$, under the sample sizes of 300, 600, 900, 1,200 and 1,500, respectively. Here, we emphasize that a logistic regression model is correctly detected if and only if all 10 features, including 5 individual effects and 5 interaction pairs, are correctly detected. Clearly, the detection rate increases as the lower and upper bounds of the parameter range increase, and Algorithm~1 can achieve a high detection rate (at least 93\%) with a reasonably large number of training samples (around 1200). Also in~Fig.~\ref{fig:parameterranges}, we plot the fitted curves of the detection rate with respect to the increasing sample size based on the functional form $n \propto \log (1/(1-\textrm{detection rate}))$ that we derived in~(\ref{samplecomplexity}) in Section~\ref{sec:Det}. It is clear that the curves fit very well with the empirical results, thus validating the order-tightness of the lower bound in~(\ref{samplecomplexity}).

Next, we shall compare the performance of our algorithm (Algorithm~1) with three generic feature selection \mbox{algorithms:}  mRMR forward selection \cite{pe2005}, mutual information (MI) ranking \cite{sa2007}, and the problem-specific $L_1$-regularized logistic regression algorithm \cite{leesuin2006,parkthastie2007}. For the \mbox{mRMR} forward selection, MI ranking, and $L_1$-regularized logistic regression algorithms, we shall view each of the single variables $\mathsf{x}_i$'s and the interaction terms $\mathsf{x}_i\mathsf{x}_j$'s as a separate feature, and assume that the number of features to be selected is known. The estimation of mutual information in the mRMR forward selection and MI ranking algorithms is based on \cite{ant2001, pa2003}. For the $L_1$-regularized logistic regression algorithm, we tune the regularization parameter till the desired number of nonzero coefficients is obtained. 

\begin{figure}
\centering
\includegraphics[width=0.42\textwidth]{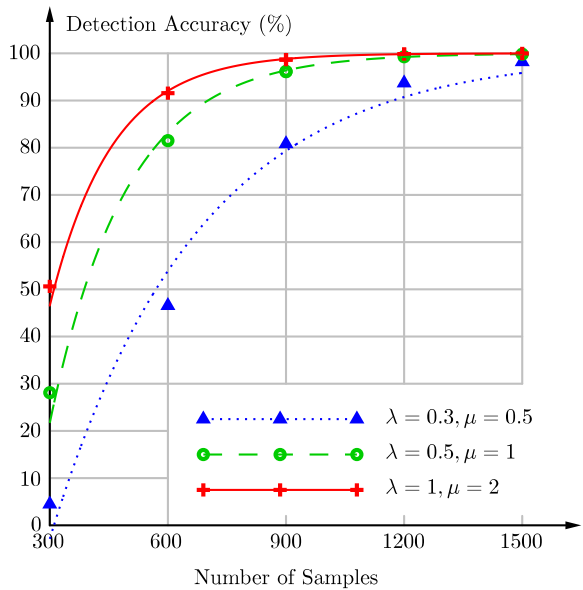}
\caption{The detection rates of Algorithm~1 for different parameter ranges $[-\mu,-\lambda]\cup[\lambda,\mu]$.}
\label{fig:parameterranges}
\end{figure}

In Fig.~\ref{fig:fourmethods-detection}, we compare the detection rate of Algorithm~1 with that of mRMR forward selection, MI ranking, and $L_1$-regularized logistic regression algorithms for $(\lambda,\mu)=(0.3,0.5)$ under a finite number of data samples. As we can see, Algorithm~1 achieves a significantly higher detection rate than the mRMR forward selection and the MI ranking algorithms, especially when the sample size is relatively small. This is due to the facts that: 1) Algorithm 1 exploits the fact that the underlying interaction graph is acyclic while the other two algorithms do not; 2) the proposed influence measure is much easier to estimate than MI. (The performances of the mRMR forward selection and the MI ranking algorithms are nearly identical since the feature candidates are pairwise independent.) On the other hand, the performances of Algorithm~1 and the $L_1$-regularized logistic regression algorithm appear to be very comparable. It is somewhat surprising that the $L_1$-regularized logistic regression algorithm performs well for {\em typical} problem instances with finite sample sizes. Intuitively, this is related to the fact that acyclic graphs are ``sparse" graphs. However, analyzing the sample complexity of the $L_1$-regularized logistic regression algorithm appears to be very challenging.

\begin{figure}
\centering
\includegraphics[width=0.42\textwidth]{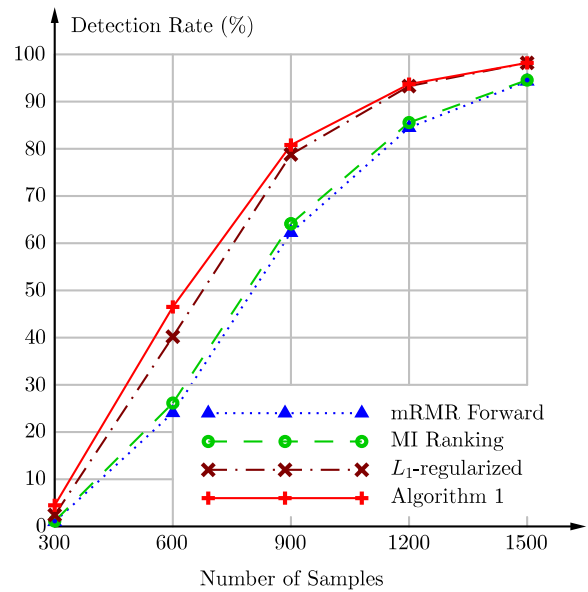}
\caption{Comparison of the detection rates among mRMR forward selection, MI ranking, $L_1$-regularized logistic regression, and Algorithm~1.}
\label{fig:fourmethods-detection}
\end{figure}

For completeness, in Fig.~\ref{fig:fourmethods-falsepositive} and Fig.~\ref{fig:fourmethods-prediction} we also compare the miss detection/false positive rate and prediction accuracy of \mbox{Algorithm~1} with those of the mRMR forward selection, the MI ranking, and the $L_1$-regularized logistic regression algorithms. Note that since the number of features selected is fixed in this case, the miss detection and false positive rates are identical. The prediction accuracy is calculated as follows. For each logistic regression model that we generate, we first obtain the model structure via one of the algorithms under the consideration, followed by standard logistic regression for parameter estimation. Once each logistic regression model is reconstructed, we randomly generate 200 testing samples to estimate the accuracy of the {\em outcome} prediction. As we can see, the relative performance among these algorithms is very similar to the case with the detection rate.

\begin{figure}
\centering
\includegraphics[width=0.42\textwidth]{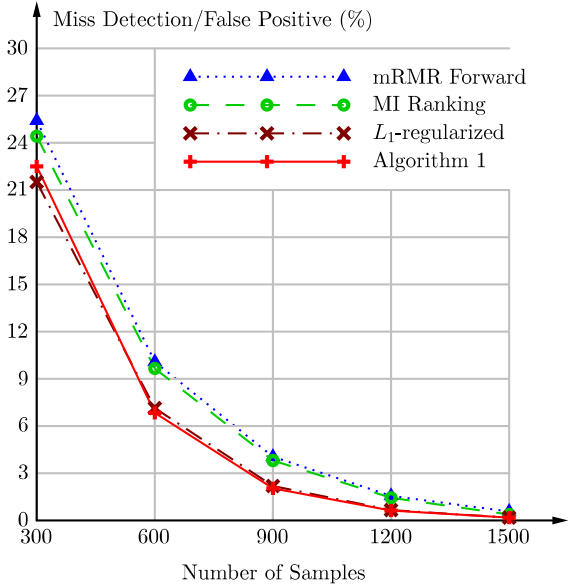}
\caption{Comparison of the false positive rates for detecting nonzero model parameters among mRMR forward selection, MI ranking, $L_1$-regularized logistic regression, and Algorithm~1.}
\label{fig:fourmethods-falsepositive}
\end{figure}

\begin{figure}
\centering
\includegraphics[width=0.42\textwidth]{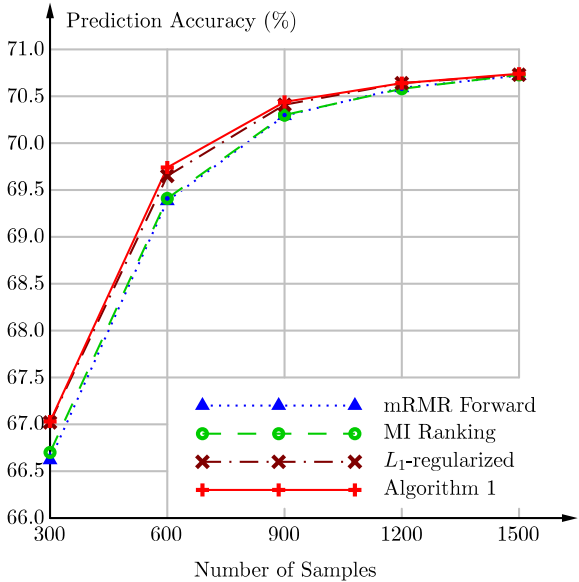}
\caption{Comparison of the prediction accuracy among mRMR forward selection, MI ranking, $L_1$-regularized logistic regression, and Algorithm~1.}
\label{fig:fourmethods-prediction}
\end{figure}

\section{Concluding Remarks}\label{sec:Con}
An important problem in bioinformatics is to identify interactive effects among profiled variables for outcome prediction. In this paper, a simple logistic regression model with pairwise interactions among the binary covariates was considered. Modeling the structure of the interactions by a graph $G$, our goal was to recover $G$ from i.i.d. samples of the covariates and the outcome. When viewed as a feature selection problem, a simple quantity called influence is proposed as a measure of the marginal effects of the interaction terms on the outcome. For the case where $G$ is known to be acyclic, it is shown that a simple algorithm that is based on a maximum-weight spanning tree with respect to the plug-in estimates of the influences not only has strong theoretical performance guarantees, but can also outperform generic feature selection algorithms for recovering the graph from i.i.d. samples of the covariates and the outcome. A sample complexity analysis for detecting the interaction graph was provided, and the results were further extended to the model that includes both individual effects and pairwise interactions.

In our future work, we would like to understand the behavior of the $L_1$-regularized logistic regression algorithm from a theoretical standpoint, and also extend our results to the more challenging case where the interaction graph might be cyclic.

\appendices

\section{Proof of the Propositions}
\subsection{Proof of Proposition~\ref{prop:Inf}}\label{pf:Inf}
We begin with the following lemma.
\begin{lemma}
\label{lemma2}
For any acyclic $G$ and any $i\in V$, we have
\begin{align*}
\mathrm{Pr}(\mathsf{x}_i=+1,\mathsf{y}=+1)=\frac{1}{4}.
\end{align*}
\end{lemma}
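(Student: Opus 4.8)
The plan is to reduce the claim to the identity $\mathrm{Pr}(\mathsf{y}=+1\mid\mathsf{x}_i=+1)=1/2$: since $\mathrm{Pr}(\mathsf{x}_i=+1)=1/2$ by the uniformity of $\mathsf{x}_V$, the multiplication rule then gives $\mathrm{Pr}(\mathsf{x}_i=+1,\mathsf{y}=+1)=1/4$. Set $f(x_V):=\sum_{\{k,l\}\in I}\beta_{\{k,l\}}x_kx_l$; since $\beta_{\{k,l\}}=0$ for $\{k,l\}\notin I$, equation \eqref{conditional1} together with the uniformity of $\mathsf{x}_V$ gives
\begin{align*}
\mathrm{Pr}(\mathsf{y}=+1\mid\mathsf{x}_i=+1)=\frac{1}{2^{d-1}}\sum_{x_V:\,x_i=+1}\sigma(f(x_V)),
\end{align*}
so it suffices to show that the sum on the right, taken over the $2^{d-1}$ configurations with $x_i=+1$, equals $2^{d-2}$.

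The key point is that acyclicity furnishes a sign-flipping symmetry. Since $G$ is acyclic it is a forest, hence bipartite; fix a bipartition $(P,Q)$ of $V$, and by swapping the two color classes within the connected component of $i$ (and assigning isolated vertices arbitrarily) arrange that $i\in P$. Let $T$ be the involution of $\{+1,-1\}^d$ that negates exactly the coordinates indexed by $Q$. Because $i\notin Q$, the map $T$ restricts to a bijection of $\{x_V:x_i=+1\}$ onto itself; and because every edge $\{k,l\}\in I$ joins a vertex of $P$ to a vertex of $Q$, applying $T$ negates each product $x_kx_l$, hence $f(T(x_V))=-f(x_V)$.

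It then remains to pair each $x_V$ with $T(x_V)$ and invoke $\sigma(t)+\sigma(-t)=1$. Reindexing one of the two copies of the sum by the bijection $T$,
\begin{align*}
2\sum_{x_V:\,x_i=+1}\sigma(f(x_V))=\sum_{x_V:\,x_i=+1}\bigl[\sigma(f(x_V))+\sigma(-f(x_V))\bigr]=2^{d-1},
\end{align*}
so the sum is $2^{d-2}$, which gives $\mathrm{Pr}(\mathsf{y}=+1\mid\mathsf{x}_i=+1)=1/2$ and hence the lemma. I do not expect a genuine obstacle here; the only substantive ingredient is the construction of the symmetry $T$, and this is precisely where acyclicity is needed — for a non-bipartite graph such as a triangle no such global sign flip exists and the statement itself fails — so honest use of the forest structure is unavoidable. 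The remaining care is purely bookkeeping: choosing the component-wise bipartition so that $i$ lies in the unflipped part, and handling isolated vertices.
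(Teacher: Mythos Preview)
Your proposal is correct and is essentially the same argument as the paper's: both exploit the bipartiteness of the acyclic graph $G$ to choose a bipartition with $i$ on one side, flip the signs of all coordinates on the other side to turn $f$ into $-f$, and then use $\sigma(t)+\sigma(-t)=1$ to conclude $\mathrm{Pr}(\mathsf{y}=+1\mid\mathsf{x}_i=+1)=1/2$. The only cosmetic difference is that the paper phrases the symmetry step as $\mathrm{Pr}(\mathsf{y}=+1\mid\mathsf{x}_i=+1)=\mathrm{Pr}(\mathsf{y}=-1\mid\mathsf{x}_i=+1)$ before invoking that they sum to $1$, whereas you pair terms directly inside the sum; the underlying bijection and the use of acyclicity are identical.
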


A proof of Lemma~\ref{lemma2} can be found in Appendix~\ref{pf:Lemma2}. Now fix $\{i,j\}\in E$. By Lemma~\ref{lemma2}, we have
\begin{align*}
&\mathrm{Pr}(\mathsf{x}_i=+1,\mathsf{x}_j=-1,\mathsf{y}=+1)\\
&=\mathrm{Pr}(\mathsf{x}_i=+1,\mathsf{y}=+1)-\mathrm{Pr}(\mathsf{x}_i=+1,\mathsf{x}_j=+1,\mathsf{y}=+1)\\
&=\frac{1}{4}-\mathrm{Pr}(\mathsf{x}_i=+1,\mathsf{x}_j=+1,\mathsf{y}=+1)\\
&\stackrel{(a)}{=}\mathrm{Pr}(\mathsf{x}_i=+1,\mathsf{x}_j=+1)-\mathrm{Pr}(\mathsf{x}_i=+1,\mathsf{x}_j=+1,\mathsf{y}=+1)\\
&=\mathrm{Pr}(\mathsf{x}_i=+1,\mathsf{x}_j=+1,\mathsf{y}=-1),
\end{align*}
and hence
\begin{align}
&\mathrm{Pr}(\mathsf{y}=+1|\mathsf{x}_i=+1,\mathsf{x}_j=-1)\stackrel{(b)}{=}\mathrm{Pr}(\mathsf{y}=-1|\mathsf{x}_i=+1,\mathsf{x}_j=+1),\label{eq:W4}
\end{align}
where $(a)$ and $(b)$ are due to the fact that $\mathrm{Pr}(\mathsf{x}_i=+1,\mathsf{x}_j=+1)=\mathrm{Pr}(\mathsf{x}_i=+1,\mathsf{x}_j=-1)=1/4$. Substituting \eqref{eq:W4} into \eqref{eq:W} completes the proof of Proposition~\ref{prop:Inf}.

\subsection{Proof of Propositions~\ref{prop:DI} and \ref{prop:PC1}}\label{pf:DI}
Fix $\{i,j\} \in I$. For any $x_V\in\{+1,-1\}^d$, let
\begin{align}
\zeta_{\{i,j\}}^+(x_V)&:=\sigma\Big(\beta_{\{i,j\}}+\sum_{\{k,l\}\in I\setminus\{\{i,j\}\}}\beta_{\{k,l\}}x_kx_l\Big),\\
\zeta_{\{i,j\}}^-(x_V)&:=\sigma\Big(-\beta_{\{i,j\}}+\sum_{\{k,l\}\in I\setminus\{\{i,j\}\}}\beta_{\{k,l\}}x_kx_l\Big),\\
\zeta_{\{i,j\}}(x_V) &:=\zeta_{\{i,j\}}^+(x_V)-\zeta_{\{i,j\}}^-(x_V).
\end{align}
We have the following lemma, for which a proof is provided in Appendix~\ref{pf:Lemma3}.
\begin{lemma}\label{lemma3}
Assume that $G$ is acyclic. We have
\begin{align}
&\mathrm{Pr}(\mathsf{y}=+1|\mathsf{x}_i=+1,\mathsf{x}_j=+1)-\mathrm{Pr}(\mathsf{y}=-1|\mathsf{x}_i=+1,\mathsf{x}_j=+1)\nonumber\\
&=2^{-(d-2)}\sum_{x_V:(x_i,x_j)=(+1,+1)}\zeta_{\{i,j\}}(x_V).\label{eq:Ste}
\end{align}
\end{lemma}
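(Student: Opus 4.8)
The plan is to expand both conditional probabilities by conditioning on the full covariate vector $\mathsf{x}_V$, and then to match the two resulting finite sums against $\zeta_{\{i,j\}}^{+}$ and $\zeta_{\{i,j\}}^{-}$ term by term.

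First, since $\mathsf{x}_1,\ldots,\mathsf{x}_d$ are independent and uniform over $\{+1,-1\}$, the conditional law of $\mathsf{x}_V$ given $\{\mathsf{x}_i=+1,\mathsf{x}_j=+1\}$ is uniform over the $2^{d-2}$ vectors $x_V$ with $(x_i,x_j)=(+1,+1)$. Hence, by \eqref{conditional1},
\[
\mathrm{Pr}(\mathsf{y}=+1|\mathsf{x}_i=+1,\mathsf{x}_j=+1)=2^{-(d-2)}\sum_{x_V:(x_i,x_j)=(+1,+1)}\sigma\Big(\sum_{\{k,l\}\in I}\beta_{\{k,l\}}x_kx_l\Big),
\]
and likewise for $\mathrm{Pr}(\mathsf{y}=-1|\cdots)$ with $\sigma(\cdot)$ replaced by $\sigma(-(\cdot))$. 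On every such $x_V$ one has $x_ix_j=1$, so the argument of $\sigma$ splits as $\beta_{\{i,j\}}+\sum_{\{k,l\}\in I\setminus\{\{i,j\}\}}\beta_{\{k,l\}}x_kx_l$; this identifies the $\mathsf{y}=+1$ sum with $2^{-(d-2)}\sum\zeta_{\{i,j\}}^{+}(x_V)$ exactly. For the $\mathsf{y}=-1$ sum one instead obtains $2^{-(d-2)}\sum\sigma\big(-\beta_{\{i,j\}}-\sum_{\{k,l\}\in I\setminus\{\{i,j\}\}}\beta_{\{k,l\}}x_kx_l\big)$, which differs from $2^{-(d-2)}\sum\zeta_{\{i,j\}}^{-}(x_V)$ only in the sign of the ``other-edges'' part of the argument.

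The key step is to show these last two sums are equal, which I would do by exhibiting a sign-flip bijection of the index set $\{x_V:(x_i,x_j)=(+1,+1)\}$ onto itself that reverses exactly the products $x_kx_l$ for $\{k,l\}\in I\setminus\{\{i,j\}\}$ while fixing $x_i$ and $x_j$. Concretely, it suffices to find $s\in\{+1,-1\}^d$ with $s_i=s_j=+1$ and $s_ks_l=-1$ for every $\{k,l\}\in I\setminus\{\{i,j\}\}$; the map $x_V\mapsto(s_kx_k:k\in V)$ then does the job. Such an $s$ is precisely a $2$-coloring of the graph $H:=(V,I\setminus\{\{i,j\}\})$ in which adjacent vertices get opposite signs and $i,j$ both get $+1$. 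This is where acyclicity enters: if $i$ and $j$ lay in the same connected component of $H$, a path between them in $H$ (necessarily of length at least two, since $\{i,j\}\notin H$) together with the edge $\{i,j\}$ would form a cycle in $G$, contradicting acyclicity; hence $i$ and $j$ lie in distinct components of $H$, each component of $H$ is a tree and therefore bipartite, and we may color the components independently, choosing in the component of $i$ (resp.\ of $j$) the coloring assigning $+1$ to $i$ (resp.\ $j$). With this bijection the $\mathsf{y}=-1$ sum equals $2^{-(d-2)}\sum\zeta_{\{i,j\}}^{-}(x_V)$; subtracting it from the $\mathsf{y}=+1$ expression and recalling $\zeta_{\{i,j\}}=\zeta_{\{i,j\}}^{+}-\zeta_{\{i,j\}}^{-}$ yields \eqref{eq:Ste}.

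I expect the only genuine subtlety to be this bijection and its essential use of acyclicity; the probabilistic bookkeeping in the first step is routine once the uniformity of $\mathsf{x}_V$ is invoked, and the arithmetic of splitting the $\sigma$-argument is immediate.
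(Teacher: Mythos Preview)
Your proposal is correct and follows essentially the same route as the paper. The paper constructs a vertex bipartition $(V^{(1)},V^{(2)})$ of $G-\{i,j\}$ with $i,j\in V^{(1)}$ and performs the change of variable $x_l\to -x_l$ for $l\in V^{(2)}$; your sign vector $s$ (with $s_k=+1$ on $V^{(1)}$ and $s_k=-1$ on $V^{(2)}$) and the bijection $x_V\mapsto(s_kx_k)$ are exactly this same construction phrased in the language of $2$-colorings.
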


To prove Proposition~\ref{prop:DI}, note that the sigmoid function $\sigma(\cdot)$ is {\em strictly} monotone increasing. We thus have $\zeta_{\{i,j\}}(x_V)> 0$ for any $x_V\in\{+1,-1\}^d$ when $\beta_{\{i,j\}}>0$, and $\zeta_{\{i,j\}}(x_V)< 0$ for any $x_V\in\{+1,-1\}^d$ when $\beta_{\{i,j\}}<0$. It thus follows from Lemma~\ref{lemma3} and \eqref{eq:W} that
\begin{align*}
w_{\{i,j\}} &= 2^{-(d-2)}\left|\sum_{x_V:(x_i,x_j)=(+1,+1)}\zeta_{\{i,j\}}(x_V)\right|>0.
\end{align*}
This completes the proof of Proposition~\ref{prop:DI}.

To prove Proposition~\ref{prop:PC1}, let $G'=(V,T)$ be a tree that covers $G$. With a slight abuse of notation, let
\begin{align*}
&\zeta_{\{i,j\}}(z_{T\setminus\{\{i,j\}\}})\\
&:=\sigma\Big(\beta_{\{i,j\}}+\sum_{\{k,l\}\in T\setminus\{\{i,j\}\}}\beta_{\{k,l\}}z_{\{k,l\}}\Big)-\\
& \hspace{14pt}\sigma\Big(-\beta_{\{i,j\}}+\sum_{\{k,l\}\in T\setminus\{\{i,j\}\}}\beta_{\{k,l\}}z_{\{k,l\}}\Big)\\
&\stackrel{(a)}{=}\sigma\Big(\beta_{\{i,j\}}+\sum_{\{k,l\}\in I\setminus\{\{i,j\}\}}\beta_{\{k,l\}}z_{\{k,l\}}\Big)-\\
& \hspace{14pt}\sigma\Big(-\beta_{\{i,j\}}+\sum_{\{k,l\}\in I\setminus\{\{i,j\}\}}\beta_{\{k,l\}}z_{\{k,l\}}\Big),
\end{align*}
where $(a)$ follows from the fact that $\beta_{\{k,l\}}=0$ for any $\{k,l\}\not\in I$. Note that: 1) $\zeta_{\{i,j\}}(x_V)=\zeta_{\{i,j\}}(z_{T\setminus\{\{i,j\}\}})$ for any $x_V\in\{+1,-1\}^d$; 2) $G'-\{i,j\}$ is a union of two trees where $i$ and $j$ are in different trees, such that the mapping between $x_V$ and $(z_{T\setminus\{\{i,j\}\}},x_i,x_j)$ is {\em one-on-one}. We thus have
\begin{align*}
&\sum_{x_V:(x_i,x_j)=(+1,+1)}\zeta_{\{i,j\}}(x_V)\\
&\hspace{40pt} =\sum_{z_{T\setminus\{\{i,j\}\}}\in \{+1,-1\}^{d-2}}\zeta_{\{i,j\}}(z_{T\setminus\{\{i,j\}\}}),
\end{align*}
and hence
\begin{align*}
&\mathrm{Pr}(\mathsf{y}=+1|\mathsf{x}_i=+1,\mathsf{x}_j=+1)-\mathrm{Pr}(\mathsf{y}=-1|\mathsf{x}_i=+1,\mathsf{x}_j=+1)\nonumber\\
&=2^{-(d-2)}\sum_{z_{T\setminus\{\{i,j\}\}}\in \{+1,-1\}^{d-2}}\zeta_{\{i,j\}}(z_{T\setminus\{\{i,j\}\}}).
\end{align*}

Assume without loss of generality that $\beta_{\{i,j\}}>0$. (Otherwise, we may simply replace $\beta_{\{i,j\}}$ by $-\beta_{\{i,j\}}$, and the rest of the proof remains the same.) By the monotonicity of the sigmoid function $\sigma(\cdot)$, we have $\zeta_{\{i,j\}}(z_{T\setminus\{\{i,j\}\}})> 0$ for any $z_{T\setminus\{\{i,j\}\}}\in\{+1,-1\}^{d-2}$. It thus follows from Lemma~\ref{lemma3} and \eqref{eq:W} that
\begin{align*}
w_{\{i,j\}} &=2^{-(d-2)}\sum_{z_{T\setminus\{\{i,j\}\}}\in \{+1,-1\}^{d-2}}\zeta_{\{i,j\}}(z_{T\setminus\{\{i,j\}\}})\\
&\geq 2^{-(d-2)}\sum_{z_{T\setminus\{\{i,j\}\}}\in \Delta_1}\zeta_{\{i,j\}}(z_{T\setminus\{\{i,j\}\}}),
\end{align*}
where
\begin{align*}
\Delta_1&:=\left\{z_{T\setminus\{\{i,j\}\}}\in \{+1,-1\}^{d-2}:\right.\nonumber\\
&\hspace{60pt}\left.\left|\sum_{\{k,l\}\in T\setminus\{i,j\}}\beta_{\{k,l\}}z_{\{k,l\}}\right|\leq \mu\right\}.
\end{align*}
For any $z_{T\setminus\{\{i,j\}\}}\in \Delta_1$, we have
\begin{align*}
\zeta_{\{i,j\}}&(z_{T\setminus\{\{i,j\}\}})\\
& \geq \min_{|x|\leq 3\mu}\left[\sigma\Big(\beta_{\{i,j\}}+x\Big)-\sigma\Big(-\beta_{\{i,j\}}+x\Big)\right]\\
&= \sigma\Big(\beta_{\{i,j\}}+3\mu\Big)-\sigma\Big(-\beta_{\{i,j\}}+3\mu\Big)\\
& \geq \min_{0<\beta\leq\lambda}\left[\sigma(\beta+3\mu)-\sigma(-\beta+3\mu)\right]\\
&= \sigma(\lambda+3\mu)-\sigma(-\lambda+3\mu).
\end{align*}
It follows that
\begin{align*}
w_{\{i,j\}} &\geq \frac{|\Delta_1|}{2^{d-2}}\left[\sigma(\lambda+3\mu)-\sigma(-\lambda+3\mu)\right]\\
&\geq \sqrt{\frac{2}{\pi d}} \left[\sigma(\lambda+3\mu)-\sigma(-\lambda+3\mu)\right]=\gamma;
\end{align*}
here the last inequality follows from the following lemma, whose proof can be found in Appendix~\ref{pf:Lemma1}.

\begin{lemma}\label{lemma1}
Let $\mathsf{z}_i$, $i=1,\ldots,q$, be i.i.d. random variables, each uniformly distributed over $\{+1,-1\}$. Then, for any real constants $a_i$'s we have
\begin{align}
\mathrm{Pr}\Big(\Big|\sum_{i=1}^{q}a_i\mathsf{x}_i\Big|\leq a\Big) \geq \sqrt{\frac{2}{\pi(q+2)}},\label{eq:lemma}
\end{align}
where $a:=\max(|a_i|:i=1,\ldots,q)$.
\end{lemma}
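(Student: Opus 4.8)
The plan is to recast the bound as a combinatorial anti-concentration estimate on sign patterns and then convert that estimate into the advertised $\sqrt{2/(\pi(q+2))}$ via Wallis' product. First I would normalize. Because each $\mathsf{z}_i$ is symmetric, $\sum_i a_i\mathsf{z}_i$ has the same law as $\sum_i |a_i|\mathsf{z}_i$, so I may assume $a_1\ge a_2\ge\cdots\ge a_q\ge 0$ and $a=a_1$; dropping any $a_i=0$ only increases the probability. If $q$ is odd I append a dummy weight $a_{q+1}=0$: this changes neither $\sum_i a_i\mathsf{z}_i$ nor the probability, makes the number of weights even, and since $\sqrt{2/(\pi((q+1)+1))}=\sqrt{2/(\pi(q+2))}$ the target is unchanged. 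Hence it suffices to treat even $q$ and, writing $f(\epsilon):=\sum_{i=1}^q a_i\epsilon_i$ for $\epsilon\in\{+1,-1\}^q$, to prove the counting bound
\[
N:=\#\bigl\{\epsilon\in\{+1,-1\}^q:\ |f(\epsilon)|\le a_1\bigr\}\ \ge\ \binom{q}{q/2}.
\]

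The heart of the argument is this bound, and I would obtain it from a symmetric chain decomposition of the Boolean cube $\{+1,-1\}^q$ into $\binom{q}{q/2}$ saturated chains, each symmetric about the middle level. Along any such chain $f$ is strictly increasing (flipping a coordinate from $-1$ to $+1$ adds $2a_i>0$) with consecutive values differing by at most $2a_1$; therefore a chain fails to meet the band $[-a_1,a_1]$ only if it lies entirely above $a_1$ (its bottom vertex has $f>a_1$) or entirely below $-a_1$ (its top vertex has $f<-a_1$). Since $f$ is monotone for the cube order, $L:=\{\epsilon:f(\epsilon)\le a_1\}$ is a down-set, and the cube, $f$, and the band are all invariant under $\epsilon\mapsto-\epsilon$, which reverses chains and negates $f$. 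So it is enough to choose the decomposition stable under $\epsilon\mapsto-\epsilon$ and with every chain-minimum inside $L$: then no chain lies entirely above $a_1$, hence by symmetry none lies entirely below $-a_1$, hence every chain meets $[-a_1,a_1]$ and $N\ge\binom{q}{q/2}$. Placing all chain-minima in $L$ is a Hall-type feasibility condition that reduces to elementary inequalities among the $a_i$ (for instance every level-$0$ and level-$1$ vertex already lies in $L$ because $a_i\le a_1$). A fully self-contained alternative is an induction on even $q$: peel the two largest coordinates and condition on them and on $R:=\sum_{i\ge3}a_i\mathsf{z}_i$, noting that for each value of $R$ at least two of the four sign choices of $(\mathsf{z}_1,\mathsf{z}_2)$ bring $f$ into $[-a_1,a_1]$ when $|R|\le 2a_1-a_2$ and at least one does when $2a_1-a_2<|R|\le 2a_1+a_2$; feeding in a suitably strengthened inductive hypothesis (a pointwise lower bound on $\mathrm{Pr}(|f|\le(2j-1)a_1)$ by the corresponding simple-random-walk probability) then reproduces the Pascal identity behind $\binom{q+2}{(q+2)/2}$.

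For the numerical step I would invoke Wallis' product $\frac{\pi}{2}=\prod_{n\ge1}\frac{(2n)^2}{(2n-1)(2n+1)}$ (see \cite{be2004}): truncating at $n=q/2$ and rearranging the partial product gives $\binom{q}{q/2}2^{-q}\ge 1/\sqrt{\pi(q/2+1/2)}=\sqrt{2/(\pi(q+1))}\ge\sqrt{2/(\pi(q+2))}$, so that together with $N\ge\binom{q}{q/2}$ one gets $\mathrm{Pr}\bigl(|\sum_i a_i\mathsf{z}_i|\le a\bigr)=N/2^q\ge\sqrt{2/(\pi(q+2))}$.

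The main obstacle is the counting bound $N\ge\binom{q}{q/2}$: each route above has one delicate point — for the chain route, producing a symmetric chain decomposition whose minima all lie in the down-set $\{f\le a_1\}$; for the inductive route, identifying the self-strengthening hypothesis that makes the two-step peeling telescope without loss. Everything else — the normalization, the dummy-variable reduction, and the Wallis estimate — is routine.
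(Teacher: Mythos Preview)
Your route is genuinely different from the paper's. The paper does not count sign patterns at all; after the same normalization $a_1\ge\cdots\ge a_q>0$ it proves the comparison $\mathrm{Pr}\bigl(-a_1\le\sum_j a_j\mathsf{z}_j<a_1\bigr)\ge\mathrm{Pr}\bigl(-1\le\sum_j\mathsf{z}_j<1\bigr)=2^{-q}\binom{q}{\lfloor q/2\rfloor}$ by an \emph{interpolation}: it sets
\[
p_i:=\mathrm{Pr}\Bigl(-a_1\le a_1\sum_{j\le i}\mathsf{z}_j+\tfrac{a_1}{a_i}\sum_{j>i}a_j\mathsf{z}_j<a_1\Bigr),
\]
so that $p_1$ is the weighted probability and $p_q$ the simple random walk one, and shows $p_i\ge p_{i+1}$ by conditioning on $\mathsf{z}_1,\dots,\mathsf{z}_i$ and using that enlarging the scale factor from $a_1/a_i$ to $a_1/a_{i+1}$ can only shrink a centered interval event. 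No parity reduction, no chain decomposition; the Wallis step at the end is the same as yours.

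Your proposal, however, leaves the decisive inequality $N\ge\binom{q}{q/2}$ unproved. In the symmetric chain decomposition route the crucial claim---that one can choose a negation-stable SCD with \emph{every} chain minimum in $L=\{f\le a_1\}$---is asserted (``a Hall-type feasibility condition'') but never established. It is not automatic: for weights such as $a=(1,1,\varepsilon,\dots,\varepsilon)$ the standard bracket-matching SCD has chain minima at level $2$ with $f\approx 2>a_1$, so producing the right SCD is exactly the content of the lemma, not a side remark. The inductive route is closer to a proof and can be made to work, but as written it is only a sketch: after peeling $a_1,a_2$ the residual sum has largest weight $a_3$, so you must commit to the strengthened hypothesis $\mathrm{Pr}\bigl(|\sum_{i\le q}a_i\mathsf{z}_i|\le(2j-1)a_1\bigr)\ge\mathrm{Pr}\bigl(|\sum_{i\le q}\mathsf{z}_i|\le 2j-1\bigr)$ for all $j\ge1$, bound the four thresholds $(2j{-}2)a_1\pm a_2$ and $2ja_1\pm a_2$ below by $(2j{-}3)a_3,(2j{-}1)a_3,(2j{-}1)a_3,(2j{+}1)a_3$, and verify that the resulting recursion matches $P_q(2j{-}1)=\tfrac14\bigl[P_{q-2}(2j{-}3)+2P_{q-2}(2j{-}1)+P_{q-2}(2j{+}1)\bigr]$. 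Until one of these is actually carried out, the heart of the lemma is missing; the paper's interpolation avoids all of this in a few lines.
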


\subsection{Proof of Proposition~\ref{prop:ZI}}\label{pf:ZI}
Fix $\{i,j\}\in E$, and assume that $i$ and $j$ are either disconnected in $G$, or the length of the unique path between $i$ and  $j$ in $G$ is even. By the assumption that $G$ is acyclic, there must exist a vertex bipartition $(V_1,V_2)$ of $G$ such that both $i$ and $j$ are in $V_1$. Note that
\begin{align*}
I \subseteq \{\{k,l\}:k\in V_1,l\in V_2\};
\end{align*}
thus we have
\begin{align*}
&2^{d-2}\mathrm{Pr}(\mathsf{y}=+1|\mathsf{x}_i=+1,\mathsf{x}_j=+1)\\
&=\sum_{x_V:(x_i,x_j)=(+1,+1)} \mathrm{Pr}\left(\mathsf{y}=+1|\mathsf{x}_V=x_V\right)\\
&=\sum_{x_V:(x_i,x_j)=(+1,+1)} \sigma\Big(\sum_{k\in V_1}\sum_{l\in V_2}\beta_{\{k,l\}}x_kx_l\Big)\\
&\stackrel{(a)}{=}\sum_{x_V:(x_i,x_j)=(+1,+1)} \sigma\Big(-\sum_{k\in V_1}\sum_{l\in V_2}\beta_{\{k,l\}}x_kx_l\Big)\\
&=\sum_{x_V:(x_i,x_j)=(+1,+1)} \mathrm{Pr}\left(\mathsf{y}=-1|\mathsf{x}_V=x_V\right)\\
&=2^{d-2}\mathrm{Pr}(\mathsf{y}=-1|\mathsf{x}_i=+1,\mathsf{x}_j=+1),
\end{align*}
where $(a)$ follows from the change of variable $x_l \rightarrow -x_l$ for all $l \in V_2$. We then have
\begin{align*}
\mathrm{Pr}(\mathsf{y}=+1|\mathsf{x}_i=+1,\mathsf{x}_j=+1)=\mathrm{Pr}(\mathsf{y}=-1|\mathsf{x}_i=+1,\mathsf{x}_j=+1),
\end{align*}
which, together with \eqref{eq:W}, implies that $w_{\{i,j\}}=0$. This completes the proof of Proposition~\ref{prop:ZI}.

\subsection{Proof of Propositions~\ref{prop:II} and \ref{prop:PC2}}\label{pf:II}
As mentioned, we only need to consider $m\geq 2$ that is odd. For notational convenience, let us assume without loss of generality that $i_s=s$ for all $s\in \{1,\ldots,m+1\}$.

To compare $w_{\{1,m+1\}}$ with $w_{\{s,s+1\}}$ for $s\in\{1,\ldots,m\}$, note that by \eqref{eq:W} and the fact that $\mathrm{Pr}(\mathsf{y}=+1|\mathsf{x}_1=+1,\mathsf{x}_{m+1}=+1)+\mathrm{Pr}(\mathsf{y}=-1|\mathsf{x}_1=+1,\mathsf{x}_{m+1}=+1)=1$ we have
\begin{align*}
w&_{\{1,m+1\}}\\
&=\max\left\{2\mathrm{Pr}(\mathsf{y}=+1|\mathsf{x}_1=+1,\mathsf{x}_{m+1}=+1)-1,\right.\\
&\hspace{50pt}\left.2\mathrm{Pr}(\mathsf{y}=-1|\mathsf{x}_1=+1,\mathsf{x}_{m+1}=+1)-1\right\}\\
&=\max\left\{1-2\mathrm{Pr}(\mathsf{y}=+1|\mathsf{x}_1=+1,\mathsf{x}_{m+1}=+1),\right.\\
&\hspace{50pt}\left.1-2\mathrm{Pr}(\mathsf{y}=-1|\mathsf{x}_1=+1,\mathsf{x}_{m+1}=+1)\right\}.
\end{align*}
Below, we shall compare $\mathrm{Pr}(\mathsf{y}=+1|\mathsf{x}_s=+1,\mathsf{x}_{s+1}=+1)$ with $\mathrm{Pr}(\mathsf{y}=+1|\mathsf{x}_1=+1,\mathsf{x}_{m+1}=+1)$ and $\mathrm{Pr}(\mathsf{y}=-1|\mathsf{x}_1=+1,\mathsf{x}_{m+1}=+1)$ for $s=1$, $s=m$, and $s\in\{2,\ldots,m-1\}$ separately.

Case 1: $s=1$. We have the following lemma, for which a proof is provided in Appendix~\ref{pf:Lemma4-5}.
\begin{lemma}\label{lemma4}
For any $m \geq 2$, we have
\begin{align}
&\mathrm{Pr}(\mathsf{y}=+1|\mathsf{x}_1=+1,\mathsf{x}_{2}=+1)-\nonumber\\
& \hspace{36pt}\mathrm{Pr}(\mathsf{y}=+1|\mathsf{x}_1=+1,\mathsf{x}_{m+1}=+1)\nonumber\\
&=2^{-(d-2)}\sum_{x_V:(x_1,x_2,x_{m+1})=(+1,+1,-1)}\zeta_{\{1,2\}}(x_V).\label{eq:Bos1}
\end{align}
Furthermore, for any $m\geq 2$ that is odd, we have
\begin{align}
&\mathrm{Pr}(\mathsf{y}=+1|\mathsf{x}_1=+1,\mathsf{x}_{2}=+1)-\nonumber\\
& \hspace{36pt}\mathrm{Pr}(\mathsf{y}=-1|\mathsf{x}_1=+1,\mathsf{x}_{m+1}=+1)\nonumber\\
&=2^{-(d-2)}\sum_{x_V:(x_1,x_2,x_{m+1})=(+1,+1,+1)}\zeta_{\{1,2\}}(x_V).\label{eq:Bos2}
\end{align}
\end{lemma}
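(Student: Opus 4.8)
The plan is to obtain both identities by writing each conditional probability as a uniform average over $\{+1,-1\}^{d}$ and then exploiting two sign-flip symmetries of that average which are available because $G$ is a forest. For bookkeeping, relabel so that $i_{s}=s$; then $\{1,2\}$ is an edge of the acyclic $G$, so deleting it disconnects $G$ with vertex $1$ and vertex $2$ in different components. Let $H$ be the component containing $2$; since the subpath $\{\{2,3\},\dots,\{m,m+1\}\}$ does not use $\{1,2\}$, all of $2,3,\dots,m+1$ lie in $H$, while $1\notin H$. Put $S(x_{V}):=\sum_{\{k,l\}\in I\setminus\{\{1,2\}\}}\beta_{\{k,l\}}x_{k}x_{l}$, so that $\mathrm{Pr}(\mathsf{y}=+1\mid\mathsf{x}_{V}=x_{V})=\sigma(\beta_{\{1,2\}}x_{1}x_{2}+S(x_{V}))$ and $\zeta_{\{1,2\}}^{\pm}(x_{V})=\sigma(\pm\beta_{\{1,2\}}+S(x_{V}))$. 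The two symmetries I would use are: (i) the involution $\tau$ on $\{+1,-1\}^{d}$ that flips exactly the coordinates indexed by $H$ — every edge of $I\setminus\{\{1,2\}\}$ lies inside one component of $G-\{1,2\}$, hence has $0$ or $2$ endpoints in $H$, so $\tau$ fixes $S$ and therefore $\zeta_{\{1,2\}}^{+}$ and $\zeta_{\{1,2\}}^{-}$, while it fixes $x_{1}$ and flips both $x_{2}$ and $x_{m+1}$; and (ii) the involution that flips the coordinates indexed by one colour class $W$ of a proper $2$-colouring of $G$ chosen with $1\notin W$ — since $\{1,2\}\in I$ this $W$ is genuinely one side of a proper colouring, so the flip negates $\sum_{\{k,l\}\in I}\beta_{\{k,l\}}x_{k}x_{l}$ and hence interchanges $\mathrm{Pr}(\mathsf{y}=+1\mid\mathsf{x}_{V}=x_{V})$ with $\mathrm{Pr}(\mathsf{y}=-1\mid\mathsf{x}_{V}=x_{V})$; because the unique path between $1$ and $m+1$ in $G$ has length $m$, this flip flips $x_{m+1}$ exactly when $m$ is odd, while it always fixes $x_{1}$.

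For \eqref{eq:Bos1} I would expand $2^{d-2}\mathrm{Pr}(\mathsf{y}=+1\mid\mathsf{x}_{1}=+1,\mathsf{x}_{2}=+1)$ as the sum of $\mathrm{Pr}(\mathsf{y}=+1\mid\mathsf{x}_{V}=x_{V})$ over $\{x_{V}:x_{1}=x_{2}=+1\}$ and split it on the value of $x_{m+1}$, and likewise expand $2^{d-2}\mathrm{Pr}(\mathsf{y}=+1\mid\mathsf{x}_{1}=+1,\mathsf{x}_{m+1}=+1)$ over $\{x_{V}:x_{1}=x_{m+1}=+1\}$ and split it on $x_{2}$. The block with $(x_{1},x_{2},x_{m+1})=(+1,+1,+1)$ occurs in both, so it cancels in the difference, leaving the sum over the block $(+1,+1,-1)$ minus the sum over the block $(+1,-1,+1)$. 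On the first block $x_{1}x_{2}=+1$, so the summand is $\zeta_{\{1,2\}}^{+}(x_{V})$; on the second $x_{1}x_{2}=-1$, so it is $\zeta_{\{1,2\}}^{-}(x_{V})$. Symmetry (i) sends the block $(+1,-1,+1)$ bijectively onto the block $(+1,+1,-1)$ and preserves $\zeta_{\{1,2\}}^{-}$, so the second sum re-indexes to $\sum\zeta_{\{1,2\}}^{-}(x_{V})$ over the block $(+1,+1,-1)$, and the difference becomes $\sum(\zeta_{\{1,2\}}^{+}-\zeta_{\{1,2\}}^{-})(x_{V})=\sum\zeta_{\{1,2\}}(x_{V})$ over that block; dividing by $2^{d-2}$ gives \eqref{eq:Bos1}. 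No parity assumption is used here.

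For \eqref{eq:Bos2} (now $m$ odd) I would first apply symmetry (ii): since that flip fixes $x_{1}$, flips $x_{m+1}$, and swaps the two outcome probabilities, the averaging argument yields $\mathrm{Pr}(\mathsf{y}=-1\mid\mathsf{x}_{1}=+1,\mathsf{x}_{m+1}=+1)=\mathrm{Pr}(\mathsf{y}=+1\mid\mathsf{x}_{1}=+1,\mathsf{x}_{m+1}=-1)$. Then I would rerun the computation of the previous paragraph with $\mathsf{x}_{m+1}=-1$ replacing $\mathsf{x}_{m+1}=+1$: now the common cancelling block is $(+1,+1,-1)$, and the difference is the sum of $\zeta_{\{1,2\}}^{+}(x_{V})$ over the block $(+1,+1,+1)$ minus the sum of $\zeta_{\{1,2\}}^{-}(x_{V})$ over the block $(+1,-1,-1)$; since $\tau$ carries the block $(+1,-1,-1)$ onto the block $(+1,+1,+1)$ and preserves $\zeta_{\{1,2\}}^{-}$, the difference collapses to $\sum\zeta_{\{1,2\}}(x_{V})$ over the block $(+1,+1,+1)$, which after dividing by $2^{d-2}$ is \eqref{eq:Bos2}.

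The calculations are routine; the only real care needed is in keeping the two flips apart and tracking their action on the three distinguished coordinates $x_{1},x_{2},x_{m+1}$: $\tau$ flips a connected \emph{component} of $G-\{1,2\}$ and therefore fixes $S$ (though it changes the sign multiplying $\beta_{\{1,2\}}$), whereas the colour-class flip negates the \emph{whole} interaction sum. The hypothesis that $m$ is odd enters precisely through the distance parity that places $x_{m+1}$ in the colour class $W$, which is why \eqref{eq:Bos2} needs it and \eqref{eq:Bos1} does not.
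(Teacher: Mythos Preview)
Your argument is correct. For \eqref{eq:Bos1} it coincides with the paper's: your involution $\tau$ (flip the component $H$ of $G-\{1,2\}$ containing $2$) is exactly the change of variable the paper calls $(a)$, since $H=\bigcup_{r=2}^{m+1}V_r$ in the paper's finer decomposition of $G-\{\{1,2\},\dots,\{m,m+1\}\}$.

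For \eqref{eq:Bos2} the two arguments differ in organization. You first use a global bipartite flip (your symmetry (ii)) to convert $\mathrm{Pr}(\mathsf{y}=-1\mid \mathsf{x}_1=+1,\mathsf{x}_{m+1}=+1)$ into $\mathrm{Pr}(\mathsf{y}=+1\mid \mathsf{x}_1=+1,\mathsf{x}_{m+1}=-1)$, and then rerun the \eqref{eq:Bos1} computation with $\tau$. The paper instead works on each term $p^{-}_{1,2,m+1}(\cdot)$ directly, using single composite flips (its $(b)$ and $(c)$) that act on carefully chosen unions of the odd/even distance classes $V_r^{odd},V_r^{even}$ within each component $G_r$; these composite flips are in effect the composition of your two symmetries, hard-wired to keep the target coordinates fixed. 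Your factorization into two elementary involutions is cleaner to verify and makes transparent exactly where the parity hypothesis on $m$ enters (placing $m+1$ in the colour class $W$); the paper's one-shot flips avoid the intermediate reduction but require tracking the parity pattern across all $r$. Both reach the same identity by the same mechanism---sign-flip symmetries available because $G$ is a forest---so this is a repackaging rather than a genuinely different route.
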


To show $w_{\{1,2\}} > w_{\{1,m+1\}}$, we shall consider the cases with $\beta_{\{1,2\}}>0$ and $\beta_{\{1,2\}}<0$ separately. When $\beta_{\{1,2\}}>0$, we have $\zeta_{\{1,2\}}(x_V)>0$ for any $x_V\in\{+1,-1\}^d$. By Lemma~\ref{lemma4}, we have
\begin{align*}
w_{\{1,2\}}&=|2\mathrm{Pr}(\mathsf{y}=+1|\mathsf{x}_1=+1,\mathsf{x}_{2}=+1)-1|\\
&\geq2\mathrm{Pr}(\mathsf{y}=+1|\mathsf{x}_1=+1,\mathsf{x}_{2}=+1)-1\\
&>\max\left\{2\mathrm{Pr}(\mathsf{y}=+1|\mathsf{x}_1=+1,\mathsf{x}_{m+1}=+1)-1,\right.\\
& \hspace{34pt} \left.2\mathrm{Pr}(\mathsf{y}=-1|\mathsf{x}_1=+1,\mathsf{x}_{m+1}=+1)-1\right\}\\
&=w_{\{1,m+1\}}.
\end{align*}
When $\beta_{\{1,2\}}<0$, we have $\zeta_{\{1,2\}}(x_V)<0$ for any $x_V\in\{+1,-1\}^d$. By Lemma~\ref{lemma4}, we have
\begin{align*}
w_{\{1,2\}}&=|2\mathrm{Pr}(\mathsf{y}=+1|\mathsf{x}_1=+1,\mathsf{x}_{2}=+1)-1|\\
&\geq1-2\mathrm{Pr}(\mathsf{y}=+1|\mathsf{x}_1=+1,\mathsf{x}_{2}=+1)\\
&>\max\left\{1-2\mathrm{Pr}(\mathsf{y}=+1|\mathsf{x}_1=+1,\mathsf{x}_{m+1}=+1),\right.\\
& \hspace{34pt} \left.1-2\mathrm{Pr}(\mathsf{y}=-1|\mathsf{x}_1=+1,\mathsf{x}_{m+1}=+1)\right\}\\
&=w_{\{1,m+1\}}.
\end{align*}

To show $w_{\{1,2\}} \geq w_{\{1,m+1\}}+\gamma$, let $G'=(V,T)$ be a tree that covers $G$. Note that: 1) \begin{align*}
\zeta_{\{1,2\}}(x_V)&=\zeta_{\{1,2\}}\Big(z_{T\setminus\{\{1,2\},\{2,3\}\}},\\
&z_{2,3}=x_2x_{m+1}/\prod_{r=3}^{m}z_{\{r,r+1\}}\Big),
\end{align*}
 for any $x_V\in\{+1,-1\}^d$; 2) $G'-\{\{1,2\},\{2,3\}\}$ is a union of three trees where $1$, $2$, and $m+1$ are in different trees, such that the mapping between $x_V$ and $(z_{T\setminus\{\{1,2\},\{2,3\}\}},x_1,x_2,x_{m+1})$ is {\em one-on-one}. We thus have
\begin{align*}
&\sum_{x_V:(x_1,x_2,x_{m+1})=(+1,+1,a)}\zeta_{\{1,2\}}(x_V)\\
&=\sum_{z_{T\setminus\{\{1,2\},\{2,3\}\}}\in \{+1,-1\}^{d-3}}\zeta_{\{1,2\}}\Big(z_{T\setminus\{\{1,2\},\{2,3\}\}},\\
& \hspace{50pt} z_{\{2,3\}}=a/\prod_{r=3}^{m}z_{\{r,r+1\}}\Big)
\end{align*}
for any $a\in \{+1,-1\}$. It follows that
\begin{align*}
&\mathrm{Pr}(\mathsf{y}=+1|\mathsf{x}_1=+1,\mathsf{x}_2=+1)-\\
&\hspace{50pt}\mathrm{Pr}(\mathsf{y}=+1|\mathsf{x}_1=+1,\mathsf{x}_{m+1}=+1)\nonumber\\
&=2^{-(d-2)}\sum_{z_{T\setminus\{\{1,2\},\{2,3\}\}}\in \{+1,-1\}^{d-3}}\zeta_{\{1,2\}}\Big(\\
& \hspace{40pt} z_{T\setminus\{\{1,2\},\{2,3\}\}},z_{\{2,3\}}=-\prod_{r=3}^{m}z_{\{r,r+1\}}\Big)
\end{align*}
and
\begin{align*}
&\mathrm{Pr}(\mathsf{y}=+1|\mathsf{x}_1=+1,\mathsf{x}_2=+1)-\\
&\hspace{50pt}\mathrm{Pr}(\mathsf{y}=-1|\mathsf{x}_1=+1,\mathsf{x}_{m+1}=+1)\nonumber\\
&=2^{-(d-2)}\sum_{z_{T\setminus\{\{1,2\},\{2,3\}\}}\in \{+1,-1\}^{d-3}}\zeta_{\{1,2\}}\Big(\\
& \hspace{40pt} z_{T\setminus\{\{1,2\},\{2,3\}\}},z_{\{2,3\}}=\prod_{r=3}^{m}z_{\{r,r+1\}}\Big).
\end{align*}

Assume without loss of generality that $\beta_{\{1,2\}}>0$. (Otherwise, we may simply replace $\beta_{\{1,2\}}$ by $-\beta_{\{1,2\}}$, and the rest of the proof remains the same.) By the monotonicity of the sigmoid function $\sigma(\cdot)$, we have $\zeta_{\{1,2\}}(z_{T\setminus\{\{1,2\}\}})> 0$ for any $z_{T\setminus\{\{1,2\}\}}\in\{+1,-1\}^{d-2}$. We thus have
\begin{align*}
&\mathrm{Pr}(\mathsf{y}=+1|\mathsf{x}_1=+1,\mathsf{x}_2=+1)-\\
&\hspace{50pt}\mathrm{Pr}(\mathsf{y}=+1|\mathsf{x}_1=+1,\mathsf{x}_{m+1}=+1)\nonumber\\
&\geq 2^{-(d-2)}\sum_{z_{T\setminus\{\{1,2\},\{2,3\}\}}\in \Delta_2}\zeta_{\{1,2\}}\Big(\\
& \hspace{40pt} z_{T\setminus\{\{1,2\},\{2,3\}\}},z_{\{2,3\}}=-\prod_{r=3}^{m}z_{\{r,r+1\}}\Big),
\end{align*}
where
\begin{align*}
\Delta_2&:=\left\{z_{T\setminus\{\{i,j\}\}}\in \{+1,-1\}^{d-3}:\right.\nonumber\\
&\hspace{60pt}\left.\left|\sum_{\{k,l\}\in T\setminus\{\{1,2\},\{2,3\}\}}\beta_{\{k,l\}}z_{\{k,l\}}\right|\leq \mu\right\}.
\end{align*}
For any $z_{T\setminus\{\{1,2\},\{2,3\}\}}\in \Delta_2$, we have
\begin{align*}
&\left|\beta_{\{2,3\}}\Big(-\prod_{r=3}^{m}z_{\{r,r+1\}}\Big)+\sum_{\{k,l\}\in T\setminus\{\{1,2\},\{2,3\}\}}\beta_{\{k,l\}}z_{\{k,l\}}\right|\\
&\leq \left|\beta_{\{2,3\}}\right|+\left|\sum_{\{k,l\}\in T\setminus\{\{1,2\},\{2,3\}\}}\beta_{\{k,l\}}z_{\{k,l\}}\right|\\
&\leq \mu+\mu\leq 3\mu,
\end{align*}
and hence
\begin{align*}
&\zeta_{\{1,2\}}\Big(z_{T\setminus\{\{1,2\},\{2,3\}\}},z_{\{2,3\}}=-\prod_{r=3}^{m}z_{\{r,r+1\}}\Big)\\
& \geq \min_{|x|\leq 3\mu}\left[\sigma\Big(\beta_{\{1,2\}}+x\Big)-\sigma\Big(-\beta_{\{1,2\}}+x\Big)\right]\\
&= \sigma\Big(\beta_{\{1,2\}}+3\mu\Big)-\sigma\Big(-\beta_{\{1,2\}}+3\mu\Big)\\
& \geq \min_{0<\beta\leq\lambda}\left[\sigma(\beta+3\mu)-\sigma(-\beta+3\mu)\right]\\
&= \sigma(\lambda+3\mu)-\sigma(-\lambda+3\mu).
\end{align*}
It follows from Lemma~\ref{lemma1} that
\begin{align*}
&\mathrm{Pr}(\mathsf{y}=+1|\mathsf{x}_1=+1,\mathsf{x}_2=+1)-\\
& \hspace{50pt}\mathrm{Pr}(\mathsf{y}=+1|\mathsf{x}_1=+1,\mathsf{x}_{m+1}=+1)\nonumber\\
&\geq \frac{|\Delta_2|}{2^{d-2}}\left[\sigma(\lambda+3\mu)-\sigma(-\lambda+3\mu)\right]\\
&\geq
\frac{1}{\sqrt{2\pi d}}\left[\sigma(\lambda+3\mu)-\sigma(-\lambda+3\mu)\right]=\frac{\gamma}{2}.
\end{align*}
Also, by a completely analogous argument, we have
\begin{align*}
&\mathrm{Pr}(\mathsf{y}=+1|\mathsf{x}_1=+1,\mathsf{x}_2=+1)-\\
& \hspace{50pt}\mathrm{Pr}(\mathsf{y}=-1|\mathsf{x}_1=+1,\mathsf{x}_{m+1}=+1)\geq \frac{\gamma}{2},
\end{align*}
and therefore,
$$w_{\{1,2\}}-w_{\{1,m+1\}}\geq 2\cdot \frac{\gamma}{2}=\gamma.$$

Case 2: $s=m$. The proof of this case is completely analogous to the previous case with $s=1$ and is hence omitted here.

Case 3: $s\in\{2,\ldots,m-1\}$. Fix $s$. We have the following lemma, for which a proof is provided in Appendix~\ref{pf:Lemma4-5}.

\begin{lemma}\label{lemma5}
For any $m \geq 3$, we have
\begin{align}
&2^{d-2}\left[\mathrm{Pr}(\mathsf{y}=+1|\mathsf{x}_s=+1,\mathsf{x}_{s+1}=+1)-\right.\nonumber\\
& \hspace{88pt}\left.\mathrm{Pr}(\mathsf{y}=+1|\mathsf{x}_1=+1,\mathsf{x}_{m+1}=+1)\right]\nonumber\\
&=\sum_{x_V:(x_1,x_s,x_{s+1},x_{m+1})=(+1,+1,+1,-1)}\zeta_{\{s,s+1\}}(x_V)+\nonumber\\
&\hspace{13pt}\sum_{x_V:(x_1,x_s,x_{s+1},x_{m+1})=(-1,+1,+1,+1)}\zeta_{\{s,s+1\}}(x_V).\label{eq:TB1}
\end{align}
For any $m \geq 3$ that is odd, we have
\begin{align}
&2^{d-2}\left[\mathrm{Pr}(\mathsf{y}=+1|\mathsf{x}_s=+1,\mathsf{x}_{s+1}=+1)-\right.\nonumber\\
& \hspace{88pt}\left.\mathrm{Pr}(\mathsf{y}=-1|\mathsf{x}_1=+1,\mathsf{x}_{m+1}=+1)\right]\nonumber\\
&=\sum_{x_V:(x_1,x_s,x_{s+1},x_{m+1})=(+1,+1,+1,+1)}\zeta_{\{s,s+1\}}(x_V)+\nonumber\\
&\hspace{13pt}\sum_{x_V:(x_1,x_s,x_{s+1},x_{m+1})=(-1,+1,+1,-1)}\zeta_{\{s,s+1\}}(x_V).\label{eq:TB2}
\end{align}
\end{lemma}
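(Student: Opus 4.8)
The plan is to establish both identities by elementary sign-flip changes of variables on the cube $\{+1,-1\}^d$, reducing each to a cancellation between sums of $\sigma\big(\sum_{\{k,l\}\in I}\beta_{\{k,l\}}x_kx_l\big)$ over orthants; throughout, abbreviate $\Sigma(x_V):=\sigma\big(\sum_{\{k,l\}\in I}\beta_{\{k,l\}}x_kx_l\big)=\mathrm{Pr}(\mathsf{y}=+1|\mathsf{x}_V=x_V)$. Fix a tree $G'=(V,T)$ covering $G$ (possible since $G$ is acyclic), so $I\subseteq T$ and the path $\{\{1,2\},\ldots,\{m,m+1\}\}$ lies in $T$. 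For $A\subseteq V$ let $\phi_A$ be the involution of $\{+1,-1\}^d$ that negates the coordinates in $A$; for any edge $\{k,l\}\in T$ the product $x_kx_l$ is preserved by $\phi_A$ if $|\{k,l\}\cap A|$ is even and negated if it is odd.

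Two instances of $\phi_A$ carry the argument. First, deleting $\{s,s+1\}$ from $T$ splits $V$ into two parts; because the whole path lies in $T$, the part $A$ containing $s+1$ also contains $s+2,\ldots,m+1$, while $1,\ldots,s$ lie in the other part. With $\phi:=\phi_A$, the only product $x_kx_l$ over $\{k,l\}\in I$ that changes sign is the one for $\{s,s+1\}$, so for every $x_V$ with $x_sx_{s+1}=+1$ one has $\zeta_{\{s,s+1\}}^-(x_V)=\Sigma(\phi(x_V))$; moreover $\phi$ fixes $x_1$ and $x_s$ and negates $x_{s+1}$ and $x_{m+1}$. Second, the global flip $\psi:=\phi_V$ leaves $\Sigma$ invariant and negates every coordinate. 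For the odd-$m$ identity \eqref{eq:TB2} we additionally note, by the same bipartition argument used in the proof of Proposition~\ref{prop:ZI}, that flipping one side of a bipartition of $T$ negates $\sum_{\{k,l\}\in I}\beta_{\{k,l\}}x_kx_l$ and, since the $1$--$(m+1)$ path has odd length, negates exactly one of $x_1,x_{m+1}$; combined with $\psi$ this gives $\mathrm{Pr}(\mathsf{y}=-1|\mathsf{x}_1=+1,\mathsf{x}_{m+1}=+1)=\mathrm{Pr}(\mathsf{y}=+1|\mathsf{x}_1=+1,\mathsf{x}_{m+1}=-1)$.

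The rest is bookkeeping, parallel to the proof of Lemma~\ref{lemma4}. Write $2^{d-2}\mathrm{Pr}(\mathsf{y}=+1|\mathsf{x}_s=+1,\mathsf{x}_{s+1}=+1)=\sum_{x_s=x_{s+1}=+1}\Sigma(x_V)$ and, for \eqref{eq:TB1}, $2^{d-2}\mathrm{Pr}(\mathsf{y}=+1|\mathsf{x}_1=+1,\mathsf{x}_{m+1}=+1)=\sum_{x_1=x_{m+1}=+1}\Sigma(x_V)$ (for \eqref{eq:TB2}, first replace the chord probability via the identity above, obtaining $\sum_{x_1=+1,x_{m+1}=-1}\Sigma(x_V)$). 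Since $2\le s\le m-1$, the indices $1,s,s+1,m+1$ are distinct, so one may partition the first sum by the signs of $(x_1,x_{m+1})$ and the second by the signs of $(x_s,x_{s+1})$, into four orthants each; subtracting, a common orthant cancels. On the right-hand side, use $\zeta_{\{s,s+1\}}(x_V)=\Sigma(x_V)-\zeta_{\{s,s+1\}}^-(x_V)$ on $\{x_s=x_{s+1}=+1\}$ and, via $\phi$, rewrite each $\zeta_{\{s,s+1\}}^-$-sum over an orthant as a $\Sigma$-sum over the orthant obtained by negating $x_{s+1}$ and $x_{m+1}$. Collecting terms, the asserted identity reduces to the single elementary equality $\sum_{x_1=-1,\,x_s=+1,\,x_{m+1}=-1}\Sigma(x_V)=\sum_{x_1=+1,\,x_s=-1,\,x_{m+1}=+1}\Sigma(x_V)$, which follows at once from $\psi$; the same scheme handles \eqref{eq:TB2}. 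The one genuine point of care is keeping the eight orthants and the signs produced by $\phi$ and $\psi$ straight; everything else — in particular that removing $\{s,s+1\}$ from $T$ separates $\{1,\ldots,s\}$ from $\{s+1,\ldots,m+1\}$, which is where acyclicity enters — is immediate.
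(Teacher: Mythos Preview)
Your proof is correct. The underlying mechanism is the same as the paper's---sign-flip involutions on $\{+1,-1\}^d$ that preserve all edge products except a chosen one---but your organization is considerably more economical. The paper removes the \emph{entire} path $\{\{1,2\},\ldots,\{m,m+1\}\}$ from $G$, works with the connected components $G_r$ and their odd/even vertex partitions $V_r^{odd},V_r^{even}$, and then transforms each $p^{\pm}_{1,s,s+1,m+1}(a,b,c,d)$ individually via a bespoke flip of certain $V_r^{odd}$'s and $V_r^{even}$'s; for \eqref{eq:TB2} this forces a case split on the parity of $s$. You instead remove only the single edge $\{s,s+1\}$ from a covering tree $T$, which immediately gives the involution $\phi$ realizing $\zeta^-_{\{s,s+1\}}(x_V)=\Sigma(\phi(x_V))$ on $\{x_sx_{s+1}=+1\}$, and you handle \eqref{eq:TB2} uniformly by first converting $\mathrm{Pr}(\mathsf{y}=-1|\mathsf{x}_1=+1,\mathsf{x}_{m+1}=+1)$ to $\mathrm{Pr}(\mathsf{y}=+1|\mathsf{x}_1=+1,\mathsf{x}_{m+1}=-1)$ via a bipartition flip (exactly the Proposition~\ref{prop:ZI} argument, using that $m$ is odd). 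After cancellation, everything reduces to a single residual identity dispatched by the global flip $\psi$. What the paper's explicit enumeration buys is that every term is visibly accounted for; what your route buys is a parity-free argument with only three involutions and no component-by-component bookkeeping.
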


To show $w_{\{s,s+1\}} > w_{\{1,m+1\}}$, we shall consider the cases with $\beta_{\{s,s+1\}}>0$ and $\beta_{\{s,s+1\}}<0$ separately. When $\beta_{\{s,s+1\}}>0$, we have $\zeta_{\{s,s+1\}}(x_V)>0$ for any $x_V\in\{+1,-1\}^d$. By Lemma~\ref{lemma5}, we have
\begin{align*}
w_{\{s,s+1\}}&\geq2\mathrm{Pr}(\mathsf{y}=+1|\mathsf{x}_s=+1,\mathsf{x}_{s+1}=+1)-1\\
&>\max\left\{2\mathrm{Pr}(\mathsf{y}=+1|\mathsf{x}_1=+1,\mathsf{x}_{m+1}=+1)-1,\right.\\
& \hspace{34pt} \left.2\mathrm{Pr}(\mathsf{y}=-1|\mathsf{x}_1=+1,\mathsf{x}_{m+1}=+1)-1\right\}\\
&=w_{\{1,m+1\}}.
\end{align*}
When $\beta_{\{s,s+1\}}<0$, we have $\zeta_{\{s,s+1\}}(x_V)<0$ for any $x_V\in\{+1,-1\}^d$. By Lemma~\ref{lemma5}, we have
\begin{align*}
w_{\{s,s+1\}}&\geq1-2\mathrm{Pr}(\mathsf{y}=+1|\mathsf{x}_s=+1,\mathsf{x}_{s+1}=+1)\\
&>\max\left\{1-2\mathrm{Pr}(\mathsf{y}=+1|\mathsf{x}_1=+1,\mathsf{x}_{m+1}=+1),\right.\\
& \hspace{34pt} \left.1-2\mathrm{Pr}(\mathsf{y}=-1|\mathsf{x}_1=+1,\mathsf{x}_{m+1}=+1)\right\}\\
&=w_{\{1,m+1\}}.
\end{align*}

To show $w_{\{s,s+1\}} \geq w_{\{1,m+1\}}+\gamma$, let $G'=(V,T)$ be a tree that covers $G$. Note that: 1)
\begin{align*}
\zeta_{\{s,s+1\}}(x_V)&=\zeta_{\{s,s+1\}}\Big(z_{T\setminus\{\{1,2\},\{s,s+1\}\{m,m+1\}\}},\\
&z_{\{1,2\}}=x_1x_{s}/\prod_{r=2}^{s-1}z_{\{r,r+1\}},\\
&z_{\{m,m+1\}}=x_{s+1}x_{m+1}/\prod_{r=s+1}^{m-1}z_{\{r,r+1\}}\Big),
\end{align*}
 for any $x_V\in\{+1,-1\}^d$; 2) $G'-\{\{1,2\},\{s,s+1\},\{m,m+1\}\}$ is a union of four trees where $1$, $s$, $s+1$ and $m+1$ are in different trees, such that the mapping between $x_V$ and $(z_{T\setminus\{\{1,2\},\{s,s+1\},\{m,m+1\}\}},x_1,x_s,x_{s+1},x_{m+1})$ is {\em one-on-one}. We thus have
\begin{align*}
&\sum_{x_V:(x_1,x_s,x_{s+1},x_{m+1})=(a,+1,+1,b)}\zeta_{\{s,s+1\}}(x_V)\\
&=\sum_{z_{T\setminus\{\{1,2\},\{s,s+1\},\{m,m+1\}\}}\in \{+1,-1\}^{d-4}}\zeta_{\{s,s+1\}}\Big(\\
& \hspace{14pt} z_{T\setminus\{\{1,2\},\{s,s+1\},\{m,m+1\}\}},z_{\{1,2\}}=a/\prod_{r=2}^{s-1}z_{\{r,r+1\}},\\
& \hspace{14pt}z_{\{m,m+1\}}=b/\prod_{r=s+1}^{m-1}z_{\{r,r+1\}}\Big)
\end{align*}
for any $a,b\in\{+1,-1\}$. It follows that
\begin{align*}
&\mathrm{Pr}(\mathsf{y}=+1|\mathsf{x}_s=+1,\mathsf{x}_{s+1}=+1)\\
& \hspace{40pt}-\mathrm{Pr}(\mathsf{y}=+1|\mathsf{x}_1=+1,\mathsf{x}_{m+1}=+1)\nonumber\\
&=2^{-(d-2)}\sum_{z_{T\setminus\{\{1,2\},\{s,s+1\},\{m,m+1\}\}}\in \{+1,-1\}^{d-4}}\Big[\\
& \hspace{14pt} \zeta_{\{s,s+1\}}\Big(z_{T\setminus\{\{1,2\},\{s,s+1\},\{m,m+1\}\}},\\
& \hspace{14pt} z_{\{1,2\}}=\prod_{r=2}^{s-1}z_{\{r,r+1\}},z_{\{m,m+1\}}=-\prod_{r=s+1}^{m-1}z_{\{r,r+1\}}\Big)+\\
& \hspace{14pt} \zeta_{\{s,s+1\}}\Big(z_{T\setminus\{\{1,2\},\{s,s+1\},\{m,m+1\}\}},\\
& \hspace{14pt} z_{\{1,2\}}=-\prod_{r=2}^{s-1}z_{\{r,r+1\}},z_{\{m,m+1\}}=\prod_{r=s+1}^{m-1}z_{\{r,r+1\}}\Big)\Big]
\end{align*}
and
\begin{align*}
&\mathrm{Pr}(\mathsf{y}=+1|\mathsf{x}_s=+1,\mathsf{x}_{s+1}=+1)\\
& \hspace{40pt}-\mathrm{Pr}(\mathsf{y}=-1|\mathsf{x}_1=+1,\mathsf{x}_{m+1}=+1)\nonumber\\
&=2^{-(d-2)}\sum_{z_{T\setminus\{\{1,2\},\{s,s+1\},\{m,m+1\}\}}\in \{+1,-1\}^{d-4}}\Big[\\
& \hspace{14pt} \zeta_{\{s,s+1\}}\Big(z_{T\setminus\{\{1,2\},\{s,s+1\},\{m,m+1\}\}},\\
& \hspace{14pt} z_{\{1,2\}}=\prod_{r=2}^{s-1}z_{\{r,r+1\}},z_{\{m,m+1\}}=\prod_{r=s+1}^{m-1}z_{\{r,r+1\}}\Big)+\\
& \hspace{14pt} \zeta_{\{s,s+1\}}\Big(z_{T\setminus\{\{1,2\},\{s,s+1\},\{m,m+1\}\}},\\
& \hspace{14pt} z_{\{1,2\}}=-\prod_{r=2}^{s-1}z_{\{r,r+1\}},z_{\{m,m+1\}}=-\prod_{r=s+1}^{m-1}z_{\{r,r+1\}}\Big)\Big].
\end{align*}

Assume without loss of generality that $\beta_{\{s,s+1\}}>0$. (Otherwise, we may simply replace $\beta_{\{s,s+1\}}$ by $-\beta_{\{s,s+1\}}$, and the rest of the proof remains the same.) By the monotonicity of the sigmoid function $\sigma(\cdot)$, we have $\zeta_{\{s,s+1\}}(z_{T\setminus\{\{s,s+1\}\}})> 0$ for any $z_{T\setminus\{\{s,s+1\}\}}\in\{+1,-1\}^{d-2}$. We thus have
\begin{align*}
&\mathrm{Pr}(\mathsf{y}=+1|\mathsf{x}_s=+1,\mathsf{x}_{s+1}=+1)\\
& \hspace{40pt}-\mathrm{Pr}(\mathsf{y}=+1|\mathsf{x}_1=+1,\mathsf{x}_{m+1}=+1)\nonumber\\
&\geq2^{-(d-2)}\sum_{z_{T\setminus\{\{1,2\},\{s,s+1\},\{m,m+1\}\}}\in \Delta_3}\Big[\\
& \hspace{14pt} \zeta_{\{s,s+1\}}\Big(z_{T\setminus\{\{1,2\},\{s,s+1\},\{m,m+1\}\}},\\
& \hspace{14pt} z_{\{1,2\}}=\prod_{r=2}^{s-1}z_{\{r,r+1\}},z_{\{m,m+1\}}=-\prod_{r=s+1}^{m-1}z_{\{r,r+1\}}\Big)+\\
& \hspace{14pt} \zeta_{\{s,s+1\}}\Big(z_{T\setminus\{\{1,2\},\{s,s+1\},\{m,m+1\}\}},\\
& \hspace{14pt} z_{\{1,2\}}=-\prod_{r=2}^{s-1}z_{\{r,r+1\}},z_{\{m,m+1\}}=\prod_{r=s+1}^{m-1}z_{\{r,r+1\}}\Big)\Big],
\end{align*}
where
\begin{align*}
\Delta_3&:=\left\{z_{T\setminus\{\{1,2\},\{s,s+1\},\{m,m+1\}\}}\in \{+1,-1\}^{d-4}:\right.\nonumber\\
&\hspace{14pt}\left.\left|\sum_{\{k,l\}\in T\setminus\{\{1,2\},\{s,s+1\},\{m,m+1\}\}}\beta_{\{k,l\}}z_{\{k,l\}}\right|\leq \mu\right\}.
\end{align*}
For any $z_{T\setminus\{\{1,2\},\{s,s+1\},\{m,m+1\}\}}\in \Delta_3$, we have
\begin{align*}
&\left|\beta_{\{1,2\}}\prod_{r=2}^{s-1}z_{\{r,r+1\}}+\beta_{\{m,m+1\}}\Big(-\prod_{r=s+1}^{m-1}z_{\{r,r+1\}}\Big)+\right.\\
&\hspace{50pt}\left.\sum_{\{k,l\}\in T\setminus\{\{1,2\},\{s,s+1\},\{m,m+1\}\}}\beta_{\{k,l\}}z_{\{k,l\}}\right|\\
& \leq \left|\beta_{\{1,2\}}\right|+\left|\beta_{\{m,m+1\}}\right|+\\
&\hspace{50pt}\left|\sum_{\{k,l\}\in T\setminus\{\{1,2\},\{s,s+1\},\{m,m+1\}\}}\beta_{\{k,l\}}z_{\{k,l\}}\right|\\
& \leq \mu+\mu+\mu= 3\mu,
\end{align*}
and hence
\begin{align*}
&\zeta_{\{s,s+1\}}\Big(z_{T\setminus\{\{1,2\},\{s,s+1\},\{m,m+1\}\}},\\
& \hspace{14pt} z_{\{1,2\}}=\prod_{r=2}^{s-1}z_{\{r,r+1\}},z_{\{m,m+1\}}=-\prod_{r=s+1}^{m-1}z_{\{r,r+1\}}\Big)\\
& \geq \min_{|x|\leq 3\mu}\left[\sigma\Big(\beta_{\{s,s+1\}}+x\Big)-\sigma\Big(-\beta_{\{s,s+1\}}+x\Big)\right]\\
&= \sigma\Big(\beta_{\{s,s+1\}}+3\mu\Big)-\sigma\Big(-\beta_{\{s,s+1\}}+3\mu\Big)\\
& \geq \min_{0<\beta\leq\lambda}\left[\sigma\Big(\beta+3\mu\Big)-\sigma\Big(-\beta+3\mu\Big)\right]\\
&= \sigma(\lambda+3\mu)-\sigma(-\lambda+3\mu).
\end{align*}
Similarly,
\begin{align*}
&\zeta_{\{s,s+1\}}\Big(z_{T\setminus\{\{1,2\},\{s,s+1\},\{m,m+1\}\}},\\
& \hspace{14pt} z_{\{1,2\}}=-\prod_{r=2}^{s-1}z_{\{r,r+1\}},z_{\{m,m+1\}}=\prod_{r=s+1}^{m-1}z_{\{r,r+1\}}\Big)\\
&= \sigma(\lambda+3\mu)-\sigma(-\lambda+3\mu).
\end{align*}
It follows from Lemma~\ref{lemma1} that
\begin{align*}
&\mathrm{Pr}(\mathsf{y}=+1|\mathsf{x}_s=+1,\mathsf{x}_{s+1}=+1)-\\
& \hspace{50pt} \mathrm{Pr}(\mathsf{y}=+1|\mathsf{x}_1=+1,\mathsf{x}_{m+1}=+1)\nonumber\\
&\geq \frac{|\Delta_3|}{2^{d-3}}\left[\sigma(\lambda+3\mu)-\sigma(-\lambda+3\mu)\right]\\
&\geq \frac{1}{\sqrt{2\pi d}}\left[\sigma(\lambda+3\mu)-\sigma(-\lambda+3\mu)\right]=\frac{\gamma}{2}.\\
\end{align*}

Also, by a completely analogous argument, we have
\begin{align*}
&\mathrm{Pr}(\mathsf{y}=+1|\mathsf{x}_s=+1,\mathsf{x}_{s+1}=+1)-\\
& \hspace{50pt} \mathrm{Pr}(\mathsf{y}=-1|\mathsf{x}_1=+1,\mathsf{x}_{m+1}=+1)\geq \frac{\gamma}{2},\nonumber\\
\end{align*}
and therefore,
$$w_{\{s,s+1\}}-w_{\{1,m+1\}}\geq 2\cdot \frac{\gamma}{2}=\gamma.$$

\subsection{Proof of Proposition~\ref{prop:Cond}}\label{pf:Cond}
Let us first show that $I \subseteq T$. Assume on the contrary that there exists an $\{i,j\}\in I$ but with $\{i,j\}\not\in T$. Let $\left\{\{k_1,k_2\},\{k_2,k_3\},\ldots,\{k_m,k_{m+1}\}\right\}$ be a path in $G'$ such that $k_1=i$, $k_{m+1}=j$, and $m\geq 2$. Such a path must exist since $G'$ is a spanning tree and $\{i,j\}\not\in T$ by the assumption. Let $V_1$ be the set of vertices that are connected to $i$ in $G-\{i,j\}$ and $V_2:=V-V_1$. Since $k_1\in V_1$ and $k_{m+1}\in V_2$, there must exist an $s\in\{1,\ldots,m\}$ such that $k_s\in V_1$ and $k_{s+1}\in V_2$.

If $k_s$ and $k_{s+1}$ are connected in $G$, then the unique path between $k_s$ and $k_{s+1}$ in $G$ must include $\{i,j\}$. (Otherwise, we would have $k_{s+1}\in V_1$.) By assumption 2), we have $u_{k_s,k_{s+1}}<u_{i,j}$. If $k_s$ and $k_{s+1}$ are disconnected in $G$, by assumption 1) we have $u_{k_s,k_{s+1}}<u_{i,j}$.

In either case, $G'-\{k_s,k_{s+1}\}+\{i,j\}$ is a new spanning tree with a larger total weight than $G'$, violating the assumption that $G'$ is a maximum-weight spanning tree. We therefore must have $I \subseteq T$. Furthermore, by assumption 1) we have $I \subseteq W$. We thus have $I \subseteq I\cap W$.

Now to show $I=T\cap W$, we only need to show that $|T\cap W| \leq |I|$, which can be argued as follows. Let $\omega(G)$ be the number of connected components in $G$. Then $T$ must include at least $\omega(G)-1$ edges that cross two different connected components of $G$. By assumption 1) the weights of these edges must be less than or equal to $\eta$, and we thus have
\begin{align*}
|T\cap(E-W)| \geq \omega(G)-1.
\end{align*}
It follows immediately that
\begin{align*}
|T\cap W| &= |T-T\cap(E-W)|\\
&= |T|-|T\cap(E-W)|\\
& \leq (d-1)-(\omega(G)-1)= |I|.
\end{align*}
This completes the proof of Proposition~\ref{prop:Cond}.

\section{Proof of Lemmas}
\subsection{Proof of Lemma~\ref{lemma2}}\label{pf:Lemma2}
By the assumption that $G$ is acyclic, there must exist a vertex bipartition $(V_1,V_2)$ of $G$ such that $I \subseteq \{\{k,l\}:k\in V_1,l\in V_2\}$. Assume without loss of generality that $i\in V_1$. We have
\begin{align*}
&2^{d-1}\mathrm{Pr}(\mathsf{y}=+1|\mathsf{x}_i=+1)\\
&\stackrel{(a)}{=}\sum_{x_V:x_i=+1}\mathrm{Pr}(\mathsf{y}=+1|\mathsf{x}_V=x_V)\\
&=\sum_{x_V:x_i=+1}\sigma\Big(\sum_{k\in V_1}\sum_{l\in V_2}\beta_{\{k,l\}}x_kx_l\Big)\\
&\stackrel{(b)}{=}\sum_{x_V:x_i=+1} \sigma\Big(-\sum_{k\in V_1}\sum_{l\in V_2}\beta_{\{k,l\}}x_kx_l\Big)\\
&=\sum_{x_V:x_i=+1}\mathrm{Pr}(\mathsf{y}=-1|\mathsf{x}_V=x_V)\\
&\stackrel{(c)}{=}2^{d-1}\mathrm{Pr}(\mathsf{y}=-1|\mathsf{x}_i=+1),
\end{align*}
and hence
\begin{align}
\mathrm{Pr}(\mathsf{y}=+1&|\mathsf{x}_i=+1)=\mathrm{Pr}(\mathsf{y}=-1|\mathsf{x}_i=+1).
\label{eq:W5}
\end{align}
Here, $(a)$ and $(c)$ are due to the fact that $\mathrm{Pr}\left(\mathsf{x}_{V\setminus\{i\}}=x_{V\setminus\{i\}}|\mathsf{x}_i=+1\right)=2^{-(d-1)}$ for any $x_{V\setminus\{i\}}\in \{+1,-1\}^{d-1}$, and $(b)$ follows from the change of variable $x_l \rightarrow -x_l$ for all $l \in V_2$. Since $\mathrm{Pr}(\mathsf{y}=+1|\mathsf{x}_i=+1)+\mathrm{Pr}(\mathsf{y}=-1|\mathsf{x}_i=+1)=1$, \eqref{eq:W5} immediately implies that $\mathrm{Pr}(\mathsf{y}=+1|\mathsf{x}_i=+1)=1/2$. We thus have
\begin{align*}
\mathrm{Pr}(\mathsf{x}_i=+1,\mathsf{y}=+1)\stackrel{(d)}{=}\frac{1}{2}\mathrm{Pr}(\mathsf{y}=+1|\mathsf{x}_i=+1)=\frac{1}{4},
\end{align*}
where $(d)$ follows from the fact that $\mathrm{Pr}(\mathsf{x}_i=+1)=1/2$.

\subsection{Proof of Lemma~\ref{lemma3}}\label{pf:Lemma3}
Let $G_s=(V_s,I_s)$, $s=1,\ldots,t$, be the connected components of $G-\{i,j\}$. By the assumption that $G$ is acyclic, so is $G-\{i,j\}$. Therefore, $G_s$ must be a tree for any $s=1,\ldots,t$. Further note that $i$ and $j$ are not connected in $G-\{i,j\}$, so they must belong to two different trees. Assume without loss of generality that $i\in V_1$ and $j\in V_2$. For $s=1,\ldots,t$, let $(V_s^{(1)},V_s^{(2)})$ be a vertex bipartition of $G_s$. Further assume without loss of generality that $i\in V_1^{(1)}$ and $j\in V_2^{(1)}$. Then we have
\begin{align*}
I\setminus\{\{i,j\}\} \subseteq \{\{k,l\}:k\in V^{(1)},l\in V^{(2)}\}
\end{align*}
where $V^{(1)}:=\bigcup_{s=1}^tV_s^{(1)}$ and $V^{(2)}:=\bigcup_{s=1}^tV_s^{(2)}$. Note that by construction both $i$ and $j$ are in $V^{(1)}$. It follows that
\begin{align}
&2^{d-2}\mathrm{Pr}(\mathsf{y}=+1|\mathsf{x}_i=+1,\mathsf{x}_j=+1)\nonumber\\
&=\sum_{x_V:(x_i,x_j)=(+1,+1)} \mathrm{Pr}(\mathsf{y}=+1|\mathsf{x}_V=x_V)\nonumber\\
&=\sum_{x_V:(x_i,x_j)=(+1,+1)} \zeta_{\{i,j\}}^+(x_V)\label{eq:Ste1}
\end{align}
and
\begin{align}
&2^{d-2}\mathrm{Pr}(\mathsf{y}=-1|\mathsf{x}_i=+1,\mathsf{x}_j=+1)\nonumber\\
&=\sum_{x_V:(x_i,x_j)=(+1,+1)} \mathrm{Pr}\left(\mathsf{y}=-1|\mathsf{x}_V=x_V\right)\nonumber\\
&=\sum_{x_V:(x_i,x_j)=(+1,+1)} \sigma\Big(-\beta_{\{i,j\}}-\sum_{k\in V^{(1)}}\sum_{l\in V^{(2)}}\beta_{\{k,l\}}x_kx_l\Big)\nonumber\\
&\!\stackrel{(a)}{=}\sum_{x_V:(x_i,x_j)=(+1,+1)} \sigma\Big(-\beta_{\{i,j\}}+\sum_{k\in V^{(1)}}\sum_{l\in V^{(2)}}\beta_{\{k,l\}}x_kx_l\Big)\nonumber\\
&=\sum_{x_V:(x_i,x_j)=(+1,+1)} \zeta_{\{i,j\}}^-(x_V),\label{eq:Ste2}
\end{align}
where $(a)$ follows from the change of variable $x_l \rightarrow -x_l$ for all $l \in V^{(2)}$. Combining \eqref{eq:Ste1} and \eqref{eq:Ste2} completes the proof of \eqref{eq:Ste}.

\subsection{Proof of Lemma~\ref{lemma1}}\label{pf:Lemma1}
First note that the left-hand side of \eqref{eq:lemma} is independent of the signs of the real constants $a_i$'s, and the right-hand side of \eqref{eq:lemma} is monotone decreasing with $q$. We thus may assume without loss of generality that
\begin{align*}
a_1 \geq a_2 \geq \cdots \geq a_q >0.
\end{align*}
For $i=1,\ldots,q$, let
\begin{align*}
p_i:=\mathrm{Pr}\left(-a_1 \leq a_1\sum_{j=1}^{i}\mathsf{z}_j+\frac{a_1}{a_i}\sum_{j=i+1}^{q}a_j\mathsf{z}_j < a_1\right).
\end{align*}
We have the following claim.

\begin{claim}\label{clm}
$p_i$ is monotone decreasing with $i$.
\end{claim}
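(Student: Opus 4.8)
The plan is to prove the claim in the form $p_{i+1}\le p_i$ for each $i\in\{1,\ldots,q-1\}$, by rewriting the two probabilities so that they differ only through a single scaling parameter. First I would clear the common factor $a_1$: multiplying the inequality defining $p_i$ by $a_i/a_1$ and the one defining $p_{i+1}$ by $a_{i+1}/a_1$, and using the elementary identity (in which the $\mathsf{z}_{i+1}$ terms telescope)
$$
a_{i+1}\sum_{j=1}^{i+1}\mathsf{z}_j+\sum_{j=i+2}^{q}a_j\mathsf{z}_j
=a_{i+1}\sum_{j=1}^{i}\mathsf{z}_j+\sum_{j=i+1}^{q}a_j\mathsf{z}_j ,
$$
one gets the parallel expressions $p_i=\mathrm{Pr}(-1\le A+B/a_i<1)$ and $p_{i+1}=\mathrm{Pr}(-1\le A+B/a_{i+1}<1)$, where $A:=\sum_{j=1}^{i}\mathsf{z}_j$ and $B:=\sum_{j=i+1}^{q}a_j\mathsf{z}_j$ are independent, $A$ a sum of $i$ of the $\mathsf{z}_j$'s and $B$ symmetric. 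Conditioning on $B$ and using that $B$ has the same law as $-B$, each of these equals $\mathbb{E}\big[h(B/a)\big]$ with $a=a_i$, resp.\ $a=a_{i+1}$, where $h(t):=\tfrac12\mathrm{Pr}(-1\le A+t<1)+\tfrac12\mathrm{Pr}(-1\le A-t<1)$ is even in $t$. Since the WLOG ordering gives $0<a_{i+1}\le a_i$, hence $|B|/a_i\le|B|/a_{i+1}$ pointwise, the desired inequality $p_i\ge p_{i+1}$ will follow as soon as $h$ is shown to be non-increasing on $[0,\infty)$.

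For the monotonicity of $h$ I would work with the pmf $g(k):=\mathrm{Pr}(A=k)$, which is symmetric, supported on the integers of the parity of $i$, and unimodal on that support (the standard unimodality of binomial coefficients). Using symmetry of $A$ one rewrites $h(t)=\mathrm{Pr}(A\in(t-1,t+1))+\tfrac12\mathrm{Pr}(A\in\{t-1,t+1\})$, i.e.\ the mass of $A$ in the length-two window $[t-1,t+1]$ with the two endpoints weighted by $\tfrac12$. Because the support of $g$ is a lattice of spacing $2$, this window contains exactly one support point at generic $t$ and only the two boundary support points at the exceptional values of $t$; tabulating $h$ on the successive intervals shows that on $[0,\infty)$ it runs through the values $g(c_0),\tfrac12(g(c_0)+g(c_1)),g(c_1),\tfrac12(g(c_1)+g(c_2)),\ldots$, where $c_0<c_1<\cdots$ are the nonnegative points of the support of $g$. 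By unimodality and symmetry of $g$ this sequence is non-increasing, which is exactly what is needed; taking expectations over $B$ then gives $p_i\ge p_{i+1}$.

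I expect the only real obstacle to be this last bookkeeping: since the interval in the definition of $p_i$ is half-open and $A$ is lattice-valued, one has to track carefully which lattice points---interior or boundary---fall in the length-two window, and the parity of $i$ genuinely changes the picture (for even $i$ the sequence of values of $h$ starts at $g(0)$, for odd $i$ at $g(1)$, where the $t=0$ case is itself an exceptional value). The telescoping identity and the symmetrization/conditioning reduction are routine, and the unimodality of the binomial coefficients is standard.
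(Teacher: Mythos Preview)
Your proposal is correct, and the half-open-interval bookkeeping you worry about works out exactly as you sketch. The route, however, is organized dually to the paper's.

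The paper conditions on the \emph{head} $A=\sum_{j\le i}\mathsf{z}_j$: it writes $p_i$ as an average over realizations of $A$, then uses an Abel-summation identity to rewrite that average as a nonnegative combination (with coefficients $c_{i,k}=\binom{i}{k}-\binom{i}{k-1}\ge 0$) of probabilities of \emph{nested symmetric} intervals for $B/a_i$. Replacing $a_i$ by the smaller $a_{i+1}$ scales $B/a$ up and hence shrinks each such probability, and then the same telescoping you use converts the resulting expression back into $p_{i+1}$. You instead condition on the \emph{tail} $B$, symmetrize to get the even function $h$, and prove $h$ is nonincreasing on $[0,\infty)$ from the unimodality of $g(k)=\mathrm{Pr}(A=k)$. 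The ingredients are identical—the telescoping identity, symmetry of $B$, and unimodality of binomial coefficients (which is precisely $c_{i,k}\ge 0$ in the paper's language)—but your reduction to a single monotone function is arguably more conceptual, while the paper's nested-interval decomposition sidesteps the lattice/endpoint case analysis you flag, since $\mathrm{Pr}(-r\le X<r)$ is trivially nondecreasing in $r$ for any $X$.
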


\begin{proof}
Fix $i\in\{1,\ldots,q-1\}$. We have
\begin{align*}
p_{i}&=\mathrm{Pr}\left(-a_1 \leq a_1\sum_{j=1}^{i}\mathsf{z}_j+\frac{a_1}{a_i}\sum_{j=i+1}^{q}a_j\mathsf{z}_j < a_1\right)\\
&=2^{-i}\sum_{(z_1,\ldots,z_{i})\in\{+1,-1\}^{i}}\mathrm{Pr}\left(-\left(1+\sum_{j=1}^{i}z_j\right)a_1\right.\\
&\hspace{60pt} \left. \leq \frac{a_1}{a_i}\sum_{j=i+1}^{q}a_j\mathsf{z}_j < \left(1-\sum_{j=1}^{i}z_j\right)a_1\right)\\
&=2^{-i}\sum_{k=0}^{\lfloor i/2\rfloor}c_{i,k}\\
&\cdot\mathrm{Pr}\left(-\left(1+i-2k\right)a_1\leq \frac{a_1}{a_{i}}\sum_{j=i+1}^{q}a_j\mathsf{z}_j < \left(1+i-2k\right)a_1\right)\\
&\stackrel{(a)}{\geq}2^{-i}\sum_{k=0}^{\lfloor i/2\rfloor}c_{i,k}\\
&\cdot\mathrm{Pr}\left(-\left(1+i-2k\right)a_1\leq \frac{a_1}{a_{i+1}}\sum_{j=i+1}^{q}a_j\mathsf{z}_j < \left(1+i-2k\right)a_1\right)\\
&=2^{-i}\sum_{(z_1,\ldots,z_{i})\in\{+1,-1\}^{i}}\mathrm{Pr}\left(-\left(1+\sum_{j=1}^{i}z_j\right)a_1\right.\\
&\hspace{60pt} \left. \leq \frac{a_1}{a_{i+1}}\sum_{j=i+1}^{q}a_j\mathsf{z}_j < \left(1-\sum_{j=1}^{i}z_j\right)a_1\right)\\
&=\mathrm{Pr}\left(-a_1 \leq a_1\sum_{j=1}^{i}\mathsf{z}_j+\frac{a_1}{a_{i+1}}\sum_{j=i+1}^{q}a_j\mathsf{z}_j < a_1\right)\\
&=\mathrm{Pr}\left(-a_1 \leq a_1\sum_{j=1}^{i+1}\mathsf{z}_j+\frac{a_1}{a_{i+1}}\sum_{j=i+2}^{q}a_j\mathsf{z}_j < a_1\right)\\
&=p_{i+1},
\end{align*}
where
\begin{align*}
c_{i,k}:=\left\{
\begin{array}{rl}
1, & \mbox{if $k=0$},\\
\textstyle \binom{i}{k}-\binom{i}{k-1}, & \mbox{if $1 \leq k \leq \lfloor i/2\rfloor$},
\end{array}
\right.
\end{align*}
and $(a)$ follows from the assumption that $a_i\geq a_{i+1}>0$. We may thus conclude that $p_i$ is monotone decreasing with $i$.
\end{proof}

By Claim~\ref{clm}, we have
\begin{align*}
&\mathrm{Pr}\left(\left|\sum_{j=1}^{q}a_j\mathsf{z}_j\right|\geq a_1\right)\geq p_1\geq p_q\\
&=\mathrm{Pr}\left(-1 \leq \sum_{j=1}^{q}\mathsf{z}_j < 1\right)=\frac{1}{2^q}
\binom{q}{\lfloor q/2\rfloor}\stackrel{(b)}{\geq} \sqrt{\frac{2}{\pi(q+2)}},
\end{align*}
where $(b)$ follows from the well-known Wallis' product \cite{be2004} for $\pi$. This completes the proof of Lemma~\ref{lemma1}.

\subsection{Proof of Lemmas~\ref{lemma4} and \ref{lemma5}}\label{pf:Lemma4-5}
Let $G_r=(V_r,I_r)$, $r=1,\ldots,t$, be the connected components of $G-\{\{1,2\},\{2,3\},\ldots,\{m,m+1\}\}$. By the assumption that $G$ is acyclic, so is $G-\{\{1,2\},\{2,3\},\ldots,\{m,m+1\}\}$. Therefore, each connected component $G_r$ must be a tree, and each of the vertices from $\{1,\ldots,m+1\}$ is in a different tree. Further assume without loss of generality that $r\in V_r$ for all $r=1,\ldots,t$ (where apparently we have $t \geq m+1$). Let $V_r^{odd}$ and $V_r^{even}$ be the sets of vertices from $G_r$ whose (graphical) distances to $r$ are odd and even, respectively.

To prove Lemma~\ref{lemma4}, define for any $a,b,c\in\{+1,-1\}$
\begin{align}
&p^+_{1,2,m+1}(a,b,c)\nonumber\\
&:=\sum_{x_V:(x_1,x_2,x_{m+1})=(a,b,c)} \mathrm{Pr}\left(\mathsf{y}=+1|\mathsf{x}_V=x_V\right)\nonumber\\
&=\sum_{x_V:(x_1,x_2,x_{m+1})=(a,b,c)} \sigma\Big(\sum_{\{k,l\}\in I}\beta_{\{k,l\}}x_kx_l\Big)\nonumber\end{align}
and
\begin{align}
&p^-_{1,2,m+1}(a,b,c)\nonumber\\
&:=\sum_{x_V:(x_1,x_2,x_{m+1})=(a,b,c)} \mathrm{Pr}\left(\mathsf{y}=-1|\mathsf{x}_V=x_V\right)\nonumber\\
&=\sum_{x_V:(x_1,x_2,x_{m+1})=(a,b,c)} \sigma\Big(-\sum_{\{k,l\}\in I}\beta_{\{k,l\}}x_kx_l\Big).\nonumber
\end{align}
We can write
\begin{align}
&2^{d-2}\mathrm{Pr}(\mathsf{y}=+1|\mathsf{x}_1=+1,\mathsf{x}_{2}=+1)\nonumber\\
&\hspace{13pt}=p^+_{1,2,m+1}(+1,+1,+1)+p^+_{1,2,m+1}(+1,+1,-1),\label{eq:Bos3}\\
&2^{d-2}\mathrm{Pr}(\mathsf{y}=+1|\mathsf{x}_1=+1,\mathsf{x}_{m+1}=+1)\nonumber\\
&\hspace{13pt}=p^+_{1,2,m+1}(+1,+1,+1)+p^+_{1,2,m+1}(+1,-1,+1),\label{eq:Bos4}\\
&2^{d-2}\mathrm{Pr}(\mathsf{y}=-1|\mathsf{x}_1=+1,\mathsf{x}_{m+1}=+1)\nonumber\\
&\hspace{13pt}=p^-_{1,2,m+1}(+1,+1,+1)+p^-_{1,2,m+1}(+1,-1,+1).\label{eq:Bos5}
\end{align}

For any $m \geq 2$, we have
\begin{align}
&p^+_{1,2,m+1}(+1,+1,+1)\nonumber\\
&=\sum_{x_V:(x_1,x_2,x_{m+1})=(+1,+1,+1)}\zeta_{\{1,2\}}^+(x_V),\label{eq:Bos6}\\
&p^+_{1,2,m+1}(+1,+1,-1)\nonumber\\
&=\sum_{x_V:(x_1,x_2,x_{m+1})=(+1,+1,-1)}\zeta_{\{1,2\}}^+(x_V),\label{eq:Bos7}\\
&p^+_{1,2,m+1}(+1,-1,+1)\nonumber\\
&=\sum_{x_V:(x_1,x_2,x_{m+1})=(+1,-1,+1)}\sigma\Big(\sum_{\{k,l\}\in I}\beta_{\{k,l\}}x_kx_l\Big)\nonumber\\
&\stackrel{(a)}{=}\sum_{x_V:(x_1,x_2,x_{m+1})=(+1,+1,-1)}\sigma\Big(-\beta_{\{1,2\}}+\nonumber\\
& \hspace{100pt}\sum_{\{k,l\}\in I\setminus\{\{1,2\}\}}\beta_{\{k,l\}}x_kx_l\Big)\nonumber\\
&=\sum_{x_V:(x_1,x_2,x_{m+1})=(+1,+1,-1)}\zeta_{\{1,2\}}^-(x_V),\label{eq:Bos8}
\end{align}
where $(a)$ follows from the change of variable $x_k \rightarrow -x_k$ for all $k \in \bigcup_{r=2}^{m+1}V_r$. Substituting \eqref{eq:Bos7} and \eqref{eq:Bos8} into \eqref{eq:Bos3} and \eqref{eq:Bos4} completes the proof of \eqref{eq:Bos1}.

For any $m\geq 2$ that is odd, we have
\begin{align}
&p^-_{1,2,m+1}(+1,+1,+1)\nonumber\\
&=\sum_{x_V:(x_1,x_2,x_{m+1})=(+1,+1,+1)}\sigma\Big(-\sum_{\{k,l\}\in I}\beta_{\{k,l\}}x_kx_l\Big)\nonumber\\
&\stackrel{(b)}{=}\sum_{x_V:(x_1,x_2,x_{m+1})=(+1,+1,+1)}\sigma\Big(-\beta_{\{1,2\}}+\nonumber\\
& \hspace{100pt}\sum_{\{k,l\}\in I\setminus\{\{1,2\}\}}\beta_{\{k,l\}}x_kx_l\Big)\nonumber\\
&=\sum_{x_V:(x_1,x_2,x_{m+1})=(+1,+1,+1)}\zeta_{\{1,2\}}^-(x_V),\label{eq:Bos9}\\
&p^-_{1,2,m+1}(+1,-1,+1)\nonumber\\
&=\sum_{x_V:(x_1,x_2,x_{m+1})=(+1,-1,+1)}\sigma\Big(-\sum_{\{k,l\}\in I}\beta_{\{k,l\}}x_kx_l\Big)\nonumber\\
&\stackrel{(c)}{=}\sum_{x_V:(x_1,x_2,x_{m+1})=(+1,+1,-1)}\sigma\Big(\beta_{\{1,2\}}+\nonumber\\
& \hspace{100pt}\sum_{\{k,l\}\in I\setminus\{\{1,2\}\}}\beta_{\{k,l\}}x_kx_l\Big)\nonumber\\
&=\sum_{x_V:(x_1,x_2,x_{m+1})=(+1,+1,-1)}\zeta_{\{1,2\}}^+(x_V),\label{eq:Bos10}
\end{align}
where $(b)$ follows from the change of variable $x_k \rightarrow -x_k$ for all $k \in V_r^{odd}$ for $r \in \{1,2,4,\ldots,m+1,m+2,\ldots,t\}$ or $k \in V_r^{even}$ for $r \in \{3,5,\ldots,m\}$, and $(c)$ follows from the change of variable $x_k \rightarrow -x_k$ for all $k \in V_r^{odd}$ for $r \in \{1,3,\ldots,m\}$ or $k \in V_r^{even}$ for $r \in \{2,4,\ldots,m+1,m+2,\ldots,t\}$. Substituting \eqref{eq:Bos6}, \eqref{eq:Bos7}, \eqref{eq:Bos9} and \eqref{eq:Bos10} into \eqref{eq:Bos4} and \eqref{eq:Bos5} completes the proof of \eqref{eq:Bos2}.

To prove Lemma~\ref{lemma5}, let us first write
\begin{align}
&2^{d-2}\mathrm{Pr}(\mathsf{y}=+1|\mathsf{x}_s=+1,\mathsf{x}_{s+1}=+1)\nonumber\\
&\hspace{14pt}=p^+_{1,s,s+1,m+1}(+1,+1,+1,+1)+\nonumber\\
&\hspace{26pt}p^+_{1,s,s+1,m+1}(+1,+1,+1,-1)+\nonumber\\
&\hspace{26pt}p^+_{1,s,s+1,m+1}(-1,+1,+1,+1)+\nonumber\\
&\hspace{26pt}p^+_{1,s,s+1,m+1}(-1,+1,+1,-1),\label{eq:TB3}\\
&2^{d-2}\mathrm{Pr}(\mathsf{y}=+1|\mathsf{x}_1=+1,\mathsf{x}_{m+1}=+1)\nonumber\\
&\hspace{14pt}=p^+_{1,s,s+1,m+1}(+1,+1,+1,+1)+\nonumber\\
&\hspace{26pt}p^+_{1,s,s+1,m+1}(+1,+1,-1,+1)+\nonumber\\
&\hspace{26pt}p^+_{1,s,s+1,m+1}(+1,-1,+1,+1)+\nonumber\\
&\hspace{26pt}p^+_{1,s,s+1,m+1}(+1,-1,-1,+1),\label{eq:TB4}\\
&2^{d-2}\mathrm{Pr}(\mathsf{y}=-1|\mathsf{x}_1=+1,\mathsf{x}_{m+1}=+1)\nonumber\\
&\hspace{14pt}=p^-_{1,s,s+1,m+1}(+1,+1,+1,+1)+\nonumber\\
&\hspace{26pt}p^-_{1,s,s+1,m+1}(+1,+1,-1,+1)+\nonumber\\
&\hspace{26pt}p^-_{1,s,s+1,m+1}(+1,-1,+1,+1)+\nonumber\\
&\hspace{26pt}p^-_{1,s,s+1,m+1}(+1,-1,-1,+1).\label{eq:TB5}
\end{align}


For any $m \geq 3$, we have
\begin{align}
&p^+_{1,s,s+1,m+1}(+1,+1,+1,+1)\nonumber\\
&=\sum_{x_V:(x_1,x_s,x_{s+1},x_{m+1})=(+1,+1,+1,+1)}\zeta_{\{s,s+1\}}^+(x_V),\label{eq:TB6}\\
&p^+_{1,s,s+1,m+1}(+1,+1,+1,-1)\nonumber\\
&=\sum_{x_V:(x_1,x_s,x_{s+1},x_{m+1})=(+1,+1,+1,-1)}\zeta_{\{s,s+1\}}^+(x_V),\label{eq:TB7}\\
&p^+_{1,s,s+1,m+1}(-1,+1,+1,+1)\nonumber\\
&=\sum_{x_V:(x_1,x_s,x_{s+1},x_{m+1})=(-1,+1,+1,+1)}\zeta_{\{s,s+1\}}^+(x_V),\label{eq:TB8}\\
&p^+_{1,s,s+1,m+1}(-1,+1,+1,-1)\nonumber\\
&=\sum_{x_V:(x_1,x_s,x_{s+1},x_{m+1})=(-1,+1,+1,-1)}\zeta_{\{s,s+1\}}^+(x_V),\label{eq:TB9}\\
&p^+_{1,s,s+1,m+1}(+1,+1,-1,+1)\nonumber\\
&=\sum_{x_V:(x_1,x_s,x_{s+1},x_{m+1})=(+1,+1,-1,+1)}\hspace{-0.5mm}\sigma\Big(\hspace{-0.5mm}\sum_{\{k,l\}\in I}\beta_{\{k,l\}}x_kx_l\Big)\nonumber\\
&\stackrel{(d)}{=}\sum_{x_V:(x_1,x_s,x_{s+1},x_{m+1})=(+1,+1,+1,-1)}\sigma\Big(-\beta_{\{s,s+1\}}+\nonumber\\
& \hspace{50pt} \sum_{\{k,l\}\in I\setminus\{\{s,s+1\}\}}\beta_{\{k,l\}}x_kx_l\Big)\nonumber\\
&=\sum_{x_V:(x_1,x_s,x_{s+1},x_{m+1})=(+1,+1,+1,-1)}\zeta_{\{s,s+1\}}^-(x_V),\label{eq:TB10}\\
&p^+_{1,s,s+1,m+1}(+1,-1,+1,+1)\nonumber\\
&=\sum_{x_V:(x_1,x_s,x_{s+1},x_{m+1})=(+1,-1,+1,+1)}\hspace{-0.5mm}\sigma\Big(\hspace{-0.5mm}\sum_{\{k,l\}\in I}\beta_{\{k,l\}}x_kx_l\Big)\nonumber\\
&\stackrel{(e)}{=}\sum_{x_V:(x_1,x_s,x_{s+1},x_{m+1})=(-1,+1,+1,+1)}\sigma\Big(-\beta_{\{s,s+1\}}+\nonumber\\
& \hspace{50pt} \sum_{\{k,l\}\in I\setminus\{\{s,s+1\}\}}\beta_{\{k,l\}}x_kx_l\Big)\nonumber\\
&=\sum_{x_V:(x_1,x_s,x_{s+1},x_{m+1})=(-1,+1,+1,+1)}\zeta_{\{s,s+1\}}^-(x_V),\label{eq:TB11}\\
&p^+_{1,s,s+1,m+1}(+1,-1,-1,+1)\nonumber\\
&=\sum_{x_V:(x_1,x_s,x_{s+1},x_{m+1})=(+1,-1,-1,+1)}\hspace{-0.5mm}\sigma\Big(\hspace{-0.5mm}\sum_{\{k,l\}\in I}\beta_{\{k,l\}}x_kx_l\Big)\nonumber\\
&\stackrel{(f)}{=}\sum_{x_V:(x_1,x_s,x_{s+1},x_{m+1})=(-1,+1,+1,-1)}\sigma\Big(\beta_{\{s,s+1\}}+\nonumber\\
& \hspace{50pt} \sum_{\{k,l\}\in I\setminus\{\{s,s+1\}\}}\beta_{\{k,l\}}x_kx_l\Big)\nonumber\\
&=\sum_{x_V:(x_1,x_s,x_{s+1},x_{m+1})=(-1,+1,+1,-1)}\zeta_{\{s,s+1\}}^+(x_V),\label{eq:TB12}
\end{align}
where $(d)$ follows from the change of variable $x_k \rightarrow -x_k$ for all $k\in \bigcup_{r=s+1}^{m+1}V_r$, $(e)$ follows from the change of variable $x_k \rightarrow -x_k$ for all $k\in \bigcup_{r=1}^{s}V_r$, and $(f)$ follows from the change of variable $x_k \rightarrow -x_k$ for all $k\in \bigcup_{r=1}^{m+1}V_r$. Substituting \eqref{eq:TB7}--\eqref{eq:TB12} into \eqref{eq:TB3} and \eqref{eq:TB4} completes the proof of \eqref{eq:TB1}.

To prove \eqref{eq:TB2}, assume that $m \geq 3$ is odd. When $s$ is even, we have
\begin{align}
&p^-_{1,s,s+1,m+1}(+1,+1,+1,+1)\nonumber\\
&=\sum_{x_V:(x_1,x_s,x_{s+1},x_{m+1})=(+1,+1,+1,+1)}\nonumber\\
& \hspace{24pt} \sigma\Big(-\sum_{\{k,l\}\in I}\beta_{\{k,l\}}x_kx_l\Big)\nonumber\\
&\stackrel{(g)}{=}\sum_{x_V:(x_1,x_s,x_{s+1},x_{m+1})=(-1,+1,+1,-1)}\nonumber\\
& \hspace{24pt} \sigma\Big(-\beta_{\{s,s+1\}}+\sum_{\{k,l\}\in I\setminus\{\{s,s+1\}\}}\beta_{\{k,l\}}x_kx_l\Big)\nonumber\\
&=\sum_{x_V:(x_1,x_s,x_{s+1},x_{m+1})=(-1,+1,+1,-1)}\zeta_{\{s,s+1\}}^-(x_V),\label{eq:TB13}\\
&p^-_{1,s,s+1,m+1}(+1,+1,-1,+1)\nonumber\\
&=\sum_{x_V:(x_1,x_s,x_{s+1},x_{m+1})=(+1,+1,-1,+1)}\nonumber\\
& \hspace{24pt} \sigma\Big(-\sum_{\{k,l\}\in I}\beta_{\{k,l\}}x_kx_l\Big)\nonumber\\
&\stackrel{(h)}{=}\sum_{x_V:(x_1,x_s,x_{s+1},x_{m+1})=(-1,+1,+1,+1)}\nonumber\\
& \hspace{24pt} \sigma\Big(\beta_{\{s,s+1\}}+\sum_{\{k,l\}\in I\setminus\{\{s,s+1\}\}}\beta_{\{k,l\}}x_kx_l\Big)\nonumber\\
&=\sum_{x_V:(x_1,x_s,x_{s+1},x_{m+1})=(-1,+1,+1,+1)}\zeta_{\{s,s+1\}}^+(x_V),\label{eq:TB14}\\
&p^-_{1,s,s+1,m+1}(+1,-1,+1,+1)\nonumber\\
&=\sum_{x_V:(x_1,x_s,x_{s+1},x_{m+1})=(+1,-1,+1,+1)}\nonumber\\
& \hspace{24pt} \sigma\Big(-\sum_{\{k,l\}\in I}\beta_{\{k,l\}}x_kx_l\Big)\nonumber\\
&\stackrel{(i)}{=}\sum_{x_V:(x_1,x_s,x_{s+1},x_{m+1})=(+1,+1,+1,-1)}\nonumber\\
& \hspace{24pt} \sigma\Big(\beta_{\{s,s+1\}}+\sum_{\{k,l\}\in I\setminus\{\{s,s+1\}\}}\beta_{\{k,l\}}x_kx_l\Big)\nonumber\\
&=\sum_{x_V:(x_1,x_s,x_{s+1},x_{m+1})=(+1,+1,+1,-1)}\zeta_{\{s,s+1\}}^+(x_V),\label{eq:TB15}\\
&p^-_{1,s,s+1,m+1}(+1,-1,-1,+1)\nonumber\\
&=\sum_{x_V:(x_1,x_s,x_{s+1},x_{m+1})=(+1,-1,-1,+1)}\nonumber\\
& \hspace{24pt} \sigma\Big(-\sum_{\{k,l\}\in I}\beta_{\{k,l\}}x_kx_l\Big)\nonumber\\
&\stackrel{(j)}{=}\sum_{x_V:(x_1,x_s,x_{s+1},x_{m+1})=(+1,+1,+1,+1)}\nonumber\\
& \hspace{24pt} \sigma\Big(-\beta_{\{s,s+1\}}+\sum_{\{k,l\}\in I\setminus\{\{s,s+1\}\}}\beta_{\{k,l\}}x_kx_l\Big)\nonumber\\
&=\sum_{x_V:(x_1,x_s,x_{s+1},x_{m+1})=(+1,+1,+1,+1)}\zeta_{\{s,s+1\}}^-(x_V),\label{eq:TB16}
\end{align}
where $(g)$ follows from the change of variable $x_k \rightarrow -x_k$ for all $k \in V_r^{odd}$ for $r \in \{2,4,\ldots,s,s+1,s+3,\ldots,m\}$ or $k \in V_r^{even}$ for $r \in \{1,3,\ldots,s-1,s+2,s+4,\ldots,m+1,m+2,\ldots,t\}$, $(h)$ follows from the change of variable $x_k \rightarrow -x_k$ for all $k \in V_r^{odd}$ for $r \in \{2,4,\ldots,m+1,m+2,\ldots,t\}$ or $k \in V_r^{even}$ for $r \in \{1,3,\ldots,m\}$, $(i)$ follows from the change of variable $x_k \rightarrow -x_k$ for all $k \in V_r^{odd}$ for $r \in\{1,3,\ldots,m\}$ or $k \in V_r^{even}$ for $r \in\{2,4,\ldots,m+1,m+2,\ldots,t\}$, and $(j)$ follows from the change of variable $x_k \rightarrow -x_k$ for all $k \in V_r^{odd}$ for $r\in\{1,3,\ldots,s-1,s+2,s+4,\ldots,m+1,m+2,\ldots,t\}$ or $k \in V_r^{even}$ for $r\in\{2,4,\ldots,s,s+1,s+3\ldots,m\}$. Substituting \eqref{eq:TB6}-\eqref{eq:TB9} and \eqref{eq:TB13}-\eqref{eq:TB16} into \eqref{eq:TB3} and \eqref{eq:TB5} completes the proof of \eqref{eq:TB2} when $s$ is even.

When $s$ is odd, we have
\begin{align}
&p^-_{1,s,s+1,m+1}(+1,+1,+1,+1)\nonumber\\
&=\sum_{x_V:(x_1,x_s,x_{s+1},x_{m+1})=(+1,+1,+1,+1)}\nonumber\\
& \hspace{24pt} \sigma\Big(-\sum_{\{k,l\}\in I}\beta_{\{k,l\}}x_kx_l\Big)\nonumber\\
&\stackrel{(k)}{=}\sum_{x_V:(x_1,x_s,x_{s+1},x_{m+1})=(+1,+1,+1,+1)}\nonumber\\
& \hspace{24pt} \sigma\Big(-\beta_{\{s,s+1\}}+\sum_{\{k,l\}\in I\setminus\{\{s,s+1\}\}}\beta_{\{k,l\}}x_kx_l\Big)\nonumber\\
&=\sum_{x_V:(x_1,x_s,x_{s+1},x_{m+1})=(-1,+1,+1,-1)}\zeta_{\{s,s+1\}}^-(x_V),\label{eq:TB17}\\
&p^-_{1,s,s+1,m+1}(+1,+1,-1,+1)\nonumber\\
&=\sum_{x_V:(x_1,x_s,x_{s+1},x_{m+1})=(+1,+1,-1,+1)}\nonumber\\
& \hspace{24pt} \sigma\Big(-\sum_{\{k,l\}\in I}\beta_{\{k,l\}}x_kx_l\Big)\nonumber\\
&\stackrel{(l)}{=}\sum_{x_V:(x_1,x_s,x_{s+1},x_{m+1})=(+1,+1,+1,-1)}\nonumber\\
& \hspace{24pt} \sigma\Big(\beta_{\{s,s+1\}}+\sum_{\{k,l\}\in I\setminus\{\{s,s+1\}\}}\beta_{\{k,l\}}x_kx_l\Big)\nonumber\\
&=\sum_{x_V:(x_1,x_s,x_{s+1},x_{m+1})=(-1,+1,+1,-1)}\zeta_{\{s,s+1\}}^+(x_V),\label{eq:TB18}\\
&p^-_{1,s,s+1,m+1}(+1,-1,+1,+1)\nonumber\\
&=\sum_{x_V:(x_1,x_s,x_{s+1},x_{m+1})=(+1,-1,+1,+1)}\nonumber\\
& \hspace{24pt} \sigma\Big(-\sum_{\{k,l\}\in I}\beta_{\{k,l\}}x_kx_l\Big)\nonumber\\
&\stackrel{(m)}{=}\sum_{x_V:(x_1,x_s,x_{s+1},x_{m+1})=(-1,+1,+1,+1)}\nonumber\\
& \hspace{24pt} \sigma\Big(\beta_{\{s,s+1\}}+\sum_{\{k,l\}\in I\setminus\{\{s,s+1\}\}}\beta_{\{k,l\}}x_kx_l\Big)\nonumber\\
&=\sum_{x_V:(x_1,x_s,x_{s+1},x_{m+1})=(-1,+1,+1,+1)}\zeta_{\{s,s+1\}}^+(x_V),\label{eq:TB19}\\
&p^-_{1,s,s+1,m+1}(+1,-1,-1,+1)\nonumber\\
&=\sum_{x_V:(x_1,x_s,x_{s+1},x_{m+1})=(+1,-1,-1,+1)}\nonumber\\
& \hspace{24pt} \sigma\Big(-\sum_{\{k,l\}\in I}\beta_{\{k,l\}}x_kx_l\Big)\nonumber\\
&\stackrel{(n)}{=}\sum_{x_V:(x_1,x_s,x_{s+1},x_{m+1})=(-1,+1,+1,-1)}\nonumber\\
& \hspace{24pt} \sigma\Big(-\beta_{\{s,s+1\}}+\sum_{\{k,l\}\in I\setminus\{\{s,s+1\}\}}\beta_{\{k,l\}}x_kx_l\Big)\nonumber\\
&=\sum_{x_V:(x_1,x_s,x_{s+1},x_{m+1})=(-1,+1,+1,-1)}\zeta_{\{s,s+1\}}^-(x_V),\label{eq:TB20}
\end{align}
where $(k)$ follows from the change of variable $x_k \rightarrow -x_k$ for all $k \in V_r^{odd}$ for $r \in \{1,3,\ldots,s,s+1,s+3,\ldots,m+1,m+2,\ldots,t\}$ or $k \in V_r^{even}$ for $r \in \{2,4,\ldots,s-1,s+2,s+4,\ldots,m\}$, $(l)$ follows from the change of variable $x_k \rightarrow -x_k$ for all $k \in V_r^{odd}$ for $r \in \{1,3,\ldots,m\}$ or $k \in V_r^{even}$ for $r \in\{2,4,\ldots,m+1,m+2,\ldots,t\}$, $(m)$ follows from the change of variable $x_k \rightarrow -x_k$ for all $k \in V_r^{odd}$ for $r \in\{2,4,\ldots,m+1,m+2,\ldots,t\}$ or $k \in V_r^{even}$ for $r \in\{1,3,\ldots,m\}$, and $(n)$ follows from the change of variable $x_k \rightarrow -x_k$ for all $k \in V_r^{odd}$ for $r\in\{2,4,\ldots,s-1,s+2,s+4,\ldots,m\}$ or $k \in V_r^{even}$ for $r\in\{1,3,\ldots,s,s+1,s+3,\ldots,m+1,m+2,\ldots,t\}$. Substituting \eqref{eq:TB6}-\eqref{eq:TB9} and \eqref{eq:TB17}-\eqref{eq:TB20} into \eqref{eq:TB3} and \eqref{eq:TB5} completes the proof of \eqref{eq:TB2} when $s$ is odd.

\bibliographystyle{ieeetr}

%
%
%



%

\end{document}